\icmltitlerunning{Deep PQR: Solving Inverse Reinforcement Learning using Anchor Actions}
\newcommand{\curly}[1]{\left\{#1\right\}}
\newcommand{\norm}[1]{\lVert #1 \rVert}
\newcommand{\bx}{\mathbf{x}}
\newcommand{\be}{\mathbf{e}}
\newcommand{\bs}{\mathbf{s}}
\newcommand{\by}{\mathbf{y}}
\newcommand{\ba}{\mathbf{a}}
\newcommand{\bA}{\mathbf{A}}
\newcommand{\bH}{\mathbf{H}}
\newcommand{\bS}{\mathbf{S}}
\newcommand{\bepsilon}{\mathbf{\epsilon}}
\newcommand{\bomega}{\bm{\omega}}
\newcommand\given[1][]{\:#1\vert\:}
\newcommand{\EE}{{\mathds{E}}}
\DeclarePairedDelimiter\abs{\lvert}{\rvert}%
\let\oldabs\abs
\def\abs{\@ifstar{\oldabs}{\oldabs*}}
\algnewcommand\algorithmicinput{\textbf{Input:}}
\algnewcommand\algorithmicoutput{\textbf{Output:}}
\algnewcommand\algorithmicReturn{\textbf{Return:}}
\algnewcommand\Input{\item[\algorithmicinput]}%
\algnewcommand\Output{\item[\algorithmicoutput]}%
\algnewcommand{\Initialize}[1]{%
	\State \textbf{Initialize:}
	\Statex \hspace*{\algorithmicindent}\parbox[t]{.8\linewidth}{\raggedright #1}
}
\newcommand{\eq}[2]{\begin{equation} \label{eq:#1} #2 \end{equation}}
\newcommand{\eqs}[1]{\begin{equation*} #1 \end{equation*}}
\newcommand{\ali}[2]{\begin{align} \label{eq:#1} \begin{split}#2\end{split}   \end{align}}
\newcommand{\alis}[1]{\begin{align*}\begin{split} #1 \end{split}\end{align*}  }
\newtheorem{lemma}{Lemma}
\newtheorem{theorem}{Theorem}
\newtheorem{remark}{Remark}
\newtheorem{definition}{Definition}
\newtheorem{assumption}{Assumption}
\newlist{inlineenum}{enumerate*}{1}
\setlist*[inlineenum,1]{%
      label=(\roman*),%
}
\begin{document}
\twocolumn[
\icmltitle{Deep PQR: Solving Inverse Reinforcement Learning using Anchor Actions} 



\icmlsetsymbol{equal}{*}

\begin{icmlauthorlist}
\icmlauthor{Sinong Geng}{csPrinceton}
\icmlauthor{Houssam Nassif}{amazon} 
\icmlauthor{Carlos A. Manzanares}{amazon}
\icmlauthor{A. Max Reppen}{orfePrinceton}
\icmlauthor{Ronnie Sircar}{orfePrinceton}

\end{icmlauthorlist}
 
\icmlaffiliation{csPrinceton}{Department of Computer Science, Princeton University, Princeton, New Jersey, USA}
\icmlaffiliation{amazon}{Amazon, Seattle, Washington, USA}
\icmlaffiliation{orfePrinceton}{Operations Research and Financial Engineering, Princeton University, Princeton, New Jersey, USA}

\icmlcorrespondingauthor{Sinong Geng}{sgeng@princeton.edu}

\icmlkeywords{Machine Learning, ICML}

\vskip 0.3in
]



\printAffiliationsAndNotice{}  

\begin{abstract}
We propose a reward function estimation framework for inverse reinforcement learning with deep energy-based policies. 
We name our method PQR, as it sequentially estimates the Policy, the $Q$-function, and the Reward function by deep learning. 
PQR does not assume that the reward solely depends on the state, instead it allows for a dependency on the choice of action. Moreover, PQR allows for stochastic state transitions.
To accomplish this, we assume the existence of one anchor action whose reward is known, typically the action of doing nothing, yielding no reward.
We present both estimators and algorithms for the PQR method.
When the environment transition is known, we prove that the PQR reward estimator uniquely recovers the true reward.
With unknown transitions, we bound the estimation error of PQR.
Finally, the performance of PQR is demonstrated by synthetic and real-world datasets. \end{abstract}

\section{Introduction}
\label{sec:introduction}


Inverse reinforcement learning (IRL)~\citep{russell1998learning, ng2000algorithms} aims to estimate the reward function of an agent given its decision-making history (often referred to as \emph{\textbf{demonstrations}}). IRL assumes that the policy of the agent is optimal. 
The reward function characterizes the preferences of agents used for decision-making, and is thus crucial in many scenarios. In reinforcement learning, the reward function is important to replicate decision-making behavior in novel environments~\citep{fu2017learning}; in financial markets, the reward function depicts the risk tolerance of agents, suggesting how much compensation they require in exchange for asset volatility~\citep{merton1973intertemporal,merton1975optimum}; in industrial organization, reward functions are synonymous with firm profit functions~\citep{abbring2010identification,aguirregabiria2013recent}.

 However, different reward functions may lead to the same behaviour of agents, making it impossible to identify the true underlying reward function~\citep{ng2000algorithms}. This \emph{\textbf{identification issue}} has been a bottleneck for policy optimization in a transferred environment \citep{fu2017learning}, and for recovering utility and profit functions in economics~\citep{abbring2010identification,bajari2010identification, manzanares2015improving}.
To partly address this issue, Maximal Entropy Inverse Reinforcement Learning (MaxEnt-IRL)~\citep{ziebart2008maximum, wulfmeier2015maximum, finn2016guided,fu2017learning} introduces stochastic policies to the problem formulation, and estimates the reward function using maximal entropy. However, the reward function is still not fully identifiable~\citep{fu2017learning}. Based on MaxEnt-IRL, to deal with this issue, Adversarial Inverse Reinforcement Learning (AIRL) assumes that the reward is solely a function of state variables, and is independent of actions. However, even with state-only reward functions, AIRL and related methods~\citep{kostrikov2018discriminator} are guaranteed to work only for deterministic environment transitions with the so-called decomposability condition. In some scenarios (especially in many real-world settings in economics), these requirements are hard to satisfy. 

This work proposes a Policy $Q$-function Reward (PQR) approach, combined with an anchor-action assumption, to identify and flexibly estimate reward functions in infinite-horizon Markov decision processes with stochastic transitions. We assume that agents follow energy-based policies derived from optimally solving an entropy-augmented reinforcement learning objective~\citep{kappen2005path,todorov2007linearly,haarnoja2017reinforcement}. 
We estimate the policy by taking advantage of recent developments in Maximal Entropy Adversarial Imitation Learning methods (MaxEnt-AIL), such as Generative Adversarial Imitation Learning (GAIL)~\citep{ho2016generative} and the aforementioned AIRL~\citep{fu2017learning}. These methods either do not attempt to identify the reward function, or require strong assumptions to do so. However, they provide efficient and effective methods to replicate the policy from agent demonstrations. We aim to recover the reward function by leveraging the accurately estimated policies. 

Our estimators for the $Q$-function and reward function depend on the estimated policy. We handle the \emph{\textbf{identification issue}} by assuming the existence of an \emph{\textbf{anchor action}} with known rewards. One natural example is an action that provides no rewards across all possible realizations of states.
Such a zero-reward action may correspond to doing nothing, and exists in many applications~\citep{bajari2010identification}. Using this assumption, under fairly general conditions, we show that, if
\begin{inlineenum} 
    \item the environment transition is given and
    \item policy functions are identifiable from observed demonstrations, the reward estimator uniquely recovers the true underlying reward function. 
\end{inlineenum} When the environment transition is unknown, the calculation of the proposed estimator is equivalent to first solving a Markov Decision Process (MDP) with only one available action, and then solving a supervised learning problem. Leveraging this characteristic, we provide effective learning algorithms. We also prove that the error of PQR is bounded and show how it decreases as the sample size increases. 

Many imitation learning approaches~\citep{reddy2019sqil,ross2011reduction,mahler2017learning,bansal2018chauffeurnet,gupta2019relay, de2019causal} do not attempt to recover reward functions, ineased focusing on identifying policy functions. 
Another related approach involves agent alignment~\citep{leike2018scalable,reddy2019learning}, which focuses on
\begin{inlineenum}
    \item learning reward functions from demonstrations, and
    \item scaling these reward functions to even more complex domains. 
\end{inlineenum}
We focus on the first part and provide a more accurate reward function estimator. 
In summary, our main contributions are two fold:
\begin{itemize}
    \item We propose a reward estimator that is guaranteed to uniquely recover the true reward function under the anchor-action assumption when the environment transition is given.
    \item Using the same anchor-action assumption, we propose a learning algorithm to calculate the estimator when the environment transition is unknown, and provide its convergence analysis. 
\end{itemize}

The utility of PQR is demonstrated on synthetic data and an illustrative airline market entry analysis.


\section{Deep Energy-Based Modeling}
\label{sec:PQR}
In this section, we characterize our setting for how agents make decisions. Similar to MaxEnt-IRL, we assume that agents maximize the entropy-augmented long-term expected reward~\citep{kappen2005path,todorov2007linearly,haarnoja2017reinforcement}. 

We describe an MDP by the tuple $M = (\mathcal{S}, \mathcal{A}, \textnormal{P}, \gamma, r)$, where $\mathcal{S}$ is the state space, $\mathcal{A}$ is the action space, $\textnormal{P}$ is the transition distribution, $\gamma\in (0,1)$ is the discount factor, and $r$ is the reward function. Let $\bS_t$ take values in $\mathcal{S}$ and $\bA_t$ in $\mathcal{A}$ to denote state and action variables, respectively. Given a starting state $\bS_0 = \bs$, the value function is defined as
\ali{energy-control}{
V(\bs) &:= \max_\pi \sum_{t=0}^\infty \gamma^t \, \mathds{E}[ 
 r(\bS_t, \bA_t)
 \\ &\quad +\alpha \mathcal{H}(\pi( \bS_t, \cdot))  \given \bS_0 = \bs
].
}
\eqs{\mathcal{H}(\pi (\bs, \cdot)) := - \int_{\mathcal{A}} \log(\pi(\bs,\ba)) \pi(\bs, \ba) \, d\ba } is the information entropy. 
$\alpha \mathcal{H}(\pi(\cdot \given \bS_t))$ with $\alpha >0$ acts as an encouragement for the randomness of agent behavior. $\pi(\bs,\ba)$ denotes the stochastic policy of agents, representing the conditional probability $\textnormal{P}(\bA_t = \ba_t \given \bS_t = \bs)$. Thus, the expectation in \eqref{eq:energy-control} is over both the transition $\mathcal{P}$ and the stochastic policy $\pi$. 
We assume that the agent takes a stochastic energy-based policy:
\eqs{\pi(\bs, \ba) = \frac{ \exp(-\mathcal{E}(\bs, \ba))}{\int_{\ba ' \in \mathcal{A}} \exp(-\mathcal{E}(\bs, \ba') d\ba')  },} where $\mathcal{E}$ is an energy function represented by deep neural networks. Such an energy distribution is widely used in machine learning, see Boltzmann machines~\citep{hinton2012practical}, Boltzmann bandits~\citep{biswas2019seeker}, and Markov random fields~\citep{geng2018temporal}.
The likelihood of the decision-making via such stochastic energy-based policies is reported in Lemma~\ref{lem:PQR}. 


\begin{lemma}[Summary of \citet{haarnoja2017reinforcement}]
\label{lem:PQR}
By assuming that agents solve \eqref{eq:energy-control} optimally with energy-based policies, the likelihood of the observed demonstrations $\mathds{X} = \curly{\bs_0, \ba_0, \bs_1, \ba_1, \cdots, \bs_T, \ba_T}$ can be derived as
\eqs{L(\mathds{X};r) = \prod_{t=0}^{T} \pi^*(\bs_t, \ba_t)
\prod_{t=0}^{T-1}\textnormal{P}(\bs_{t+1} \given \bs_t, \ba_t),}
where the optimal policy function taken by agents follows
\eq{policy-s}{
    \pi^*(\bs , \ba) = \frac{ \exp {\left(
\frac{1}{\alpha}
Q(\bs, \ba)\right)} }{ \int_{\ba' \in \mathcal{A}}\exp {\left(
\frac{1}{\alpha}
Q(\bs, \ba')\right)} d\ba'},
}
with
\alis{
Q&(\bs, \ba):= r(\bs, \ba)
\\&+\max_\pi \mathds{E}{\left\{
\sum_{t=1}^\infty \gamma^t\left[
r(\bS_{t}, \bA_t) + \alpha \mathcal{H}(\pi(\bS_{t}, \cdot))
\right] \given \bs, \ba
\right\}}. 
} 

\end{lemma}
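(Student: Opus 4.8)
The plan is to establish the two pieces of the statement in turn: the closed form of the optimal energy-based policy $\pi^*$ in \eqref{eq:policy-s}, and then the factorized likelihood $L(\mathds{X};r)$, which follows almost immediately from the Markov structure once the policy form is in hand.

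First I would recast the infinite-horizon objective \eqref{eq:energy-control} as a soft Bellman fixed-point equation. Peeling off the $t=0$ term and applying the tower property over the transition $\textnormal{P}$ gives
\[ V(\bs) = \max_\pi \left\{ \mathds{E}_{\ba \sim \pi(\bs,\cdot)}\!\left[Q(\bs,\ba)\right] + \alpha \mathcal{H}(\pi(\bs,\cdot)) \right\}, \]
where $Q(\bs,\ba) := r(\bs,\ba) + \gamma\, \mathds{E}_{\bs' \sim \textnormal{P}(\cdot \given \bs,\ba)}[V(\bs')]$. A short reindexing check confirms that this $Q$ agrees with the one in the statement: unrolling $V(\bs')$ from its $t=0$ term reproduces the discounted tail $\sum_{t=1}^\infty \gamma^t[r(\bS_t,\bA_t) + \alpha\mathcal{H}(\pi(\bS_t,\cdot))]$.

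The core step is the pointwise maximization over $\pi(\bs,\cdot)$ at a fixed state. Because the entropy term $-\alpha\int_{\mathcal{A}} \pi\log\pi\, d\ba$ is strictly concave in the density, the objective is strictly concave on the probability simplex, so its unique maximizer is characterized by first-order conditions. I would attach a Lagrange multiplier $\lambda$ to the normalization constraint $\int_{\mathcal{A}}\pi(\bs,\ba)\,d\ba = 1$ and set the variational derivative of $\int \pi Q\, d\ba - \alpha\int \pi\log\pi\, d\ba + \lambda(1 - \int \pi\, d\ba)$ to zero, obtaining $Q(\bs,\ba) - \alpha(\log\pi(\bs,\ba)+1) - \lambda = 0$. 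Solving gives $\pi(\bs,\ba) \propto \exp(Q(\bs,\ba)/\alpha)$, and enforcing normalization yields exactly \eqref{eq:policy-s}. Substituting this optimizer back recovers the log-sum-exp soft maximum $V(\bs) = \alpha\log\int_{\mathcal{A}}\exp(Q(\bs,\ba)/\alpha)\,d\ba$, which closes the consistency of the fixed point.

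Finally, for the likelihood I would condition on the given initial state $\bs_0$ and exploit the Markov structure of the data-generating process: at each step the agent samples $\ba_t \sim \pi^*(\bs_t,\cdot)$ and the environment independently draws $\bs_{t+1} \sim \textnormal{P}(\cdot \given \bs_t,\ba_t)$, each conditionally independent of the past given the present. Multiplying these conditional densities along the trajectory produces the stated product form. I expect the main obstacle to be the variational argument in the core step: one must justify that the strictly concave, entropy-regularized objective attains its global maximum at the interior stationary point rather than on the boundary of the simplex, and that the maximization commutes with the expectation in the recursion. The strict concavity contributed by $-\alpha\int\pi\log\pi$ is what makes this rigorous.
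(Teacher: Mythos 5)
Your proposal is correct, but there is nothing in the paper to compare it against: Lemma~\ref{lem:PQR} is imported wholesale as a summary of \citet{haarnoja2017reinforcement}, and the paper's own supplementary lemmas run in the opposite direction --- they \emph{assume} the energy-based form \eqref{eq:policy-s} and Haarnoja et al.'s soft Bellman equation, and from these derive the identities $V(\bs)=\EE[Q(\bs,\bA)]+\alpha \mathcal{H}(\pi^*(\bs,\cdot))$ and $V(\bs)=\alpha\log\int_{\mathcal{A}}\exp\left(Q(\bs,\ba)/\alpha\right)d\ba$, whereas you derive the optimal-policy form itself. Your route (soft Bellman recursion, pointwise entropy-regularized maximization, back-substitution to the log-sum-exp value, Markov factorization of the trajectory density) is the standard soft-optimality argument underlying the cited work, and the reindexing check that your $Q(\bs,\ba)=r(\bs,\ba)+\gamma\,\EE[V(\bs')\given\bs,\ba]$ matches the statement's $Q$ is the right thing to verify. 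Two refinements would tighten it. First, the boundary-attainment worry you flag dissolves if you replace the Lagrangian computation with the decomposition $\EE_{\ba\sim\pi}[Q(\bs,\ba)]+\alpha\mathcal{H}(\pi(\bs,\cdot))=\alpha\log\int_{\mathcal{A}}\exp\left(Q(\bs,\ba)/\alpha\right)d\ba-\alpha\,\mathrm{KL}\left(\pi(\bs,\cdot)\,\|\,\pi^*(\bs,\cdot)\right)$: nonnegativity of the KL divergence, vanishing exactly at $\pi=\pi^*$, gives the global maximum with no first-order conditions and no boundary analysis. Second, your sketch presupposes that $V$ is well defined and satisfies the recursion; the real content of \citet{haarnoja2017reinforcement} is the contraction argument for the soft Bellman backup (under bounded reward and entropy, guaranteeing existence and uniqueness of the fixed point) together with the measurable-selection step that justifies passing $\max_\pi$ inside the expectation over transitions --- the step you correctly flag as needing justification. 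Neither point is a substantive gap, but they are precisely where the cited work does the heavy lifting that the pointwise variational computation alone does not.
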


Lemma~\ref{lem:PQR} derives the likelihood for our inverse reinforcement learning problem. The behavior of agents is governed by $Q$-function: $(\bs, \ba)$ pairs with higher $Q$ values are more likely to appear in the dataset. In other words, given $\bs$, an agent is more likely (but not guaranteed) to select better actions. This behavior is more realistic in practice than assuming agents always deterministically make the best decision. 
The two hyper parameters $\gamma$ and $\alpha$ correspond to the patience and the rationality of agents, respectively. The larger the $\gamma$ is, the more agents value future returns. As $\alpha$ increases, the difference between $Q$ values of different actions diminishes, and the influence of randomness in agent decision-making increases. At the extreme, this is equivalent to choosing actions randomly, which could be interpreted as agents not knowing or trusting existing information. When $\alpha$ approaches $0^+$, \eqref{eq:energy-control} is equivalent to a classic control or RL problem with a deterministic policy. 

\section{Related Models}
\label{sec:related-models}
In this section, we review related settings for reward function estimation: Inverse Reinforcement Learning (IRL) in machine learning and Dynamic Discrete Choice (DDC) in economics. We show that our setting is more general and facilitates more efficient and effective estimation methods.

\subsection{Inverse Reinforcement Learning}
The most relevant model for our problem is MaxEnt-IRL~\citep{ziebart2008maximum}, where the likelihood of the observation set $\mathds{X} = \curly{\bs_0, \ba_0, \bs_1, \ba_1, \cdots, \bs_T, \ba_T}$ is 
\eq{maxent}{
L^{MaxEnt}(\mathds{X};r) \propto  \prod_{t=1}^{T}  \exp {\left(
r(\bs_t, \ba_t)\right)}.
}
Comparing \eqref{eq:maxent} with the likelihood in Lemma~\ref{lem:PQR}, MaxEnt-IRL tends to treat the $Q$-function in Lemma~\ref{lem:PQR} as the reward function and thus simplifies the problem. 
 As a result, the reward function of MaxEnt-IRL incorporates some environment information, an issue known as environment entanglement~\citep{fu2017learning}. Instead, we properly represent the relationship between $r$ and $Q$, and aim to estimate the reward function. When $\gamma = 0$, $r(\bs,\ba) = Q(\bs,\ba)$, our setting degenerates to the MaxEnt-IRL setting. When $\gamma \neq 0$, MaxEnt-IRL methods can still be applied to our setting, albeit with the simplification of using our $Q$-function as their reward function.

Classic IRL models~\citep{ng2000algorithms} with deterministic policies are also a special case. As $\alpha$ approaches $0$, \eqref{eq:policy-s} gets close to hard maximization over $\ba_t \in \mathcal{A}$. This recovers classic IRL with deterministic policies, where agents take the best decision deterministically every time. This behavior is not realistic in practice: in almost all real-world settings that model human behavior, the reward function includes information available to the agent but not available to the statistician. Classic IRL also suffers identification issues.

\subsection{Dynamic Discrete Choice}
DDCs~\citep{rust1987optimal} are dynamic structural models with discrete and finite action spaces, popular in economics. A very popular DDC setting assumes that there exists two sets of state variables: $\bS_t$ and $\bepsilon _t$, where $\bepsilon _t$ are random shocks following a Type-I extreme value distribution~\citep{ermon2015learning}. While $\bS_t$ is observable and recorded in the dataset, $\bepsilon_t$ is not observable in the dataset, and is only known by the agent when making decisions at time $t$. With some further assumptions listed in Section~\ref{sec:sup-ddc} of the Supplements, the likelihood of DDC becomes
\alis{
L^{DDC}(\mathds{X};r) \propto  \prod_{t=1}^{T}  \exp {\left[
Q^{DDC}(\bs_t,\ba_t)\right]},
} 
where $Q^{DDC}(\bs,\ba)$ is a counterpart to our $Q$-function for a discrete finite action space with $\alpha=1$. Details about $Q^{DDC}(\bs,\ba)$ is deferred to the Section~\ref{sec:sup-ddc}. The estimation method in Section~\ref{sec:estimation} can also be applied to DDCs to efficiently recover reward functions.

\section{Policy-$Q$-Reward Estimation}
\label{sec:estimation}
We now provide our Policy-$Q$-Reward estimation, which proceeds by estimating agent policies, $Q$-functions and rewards in that order. We only propose the estimators and algorithms in this section, and theoretically justify our estimation strategy in Section~\ref{sec:error}. 


\subsection{Anchor-Action Assumption}
\begin{assumption}
\label{asm:identification}
There exists a known anchor action $\ba^{A} \in \mathcal{A}$ and a function $g: \mathcal{S}\longmapsto \mathds{R}$, such that $r(\bs, \ba^{A}) = g(\bs)$. 
\end{assumption}
To correctly identify the reward function, we require Assumption~\ref{asm:identification}, which stipulates that there exists an anchor action $\ba^{A}$ whose reward function value is known a priori.
A special case is $g(\bs) = 0$, indicating that there exists an anchor action providing no rewards. This class of assumption is widely used in economics and has been proved crucial for even much simpler static models~\citep{bajari2010identification}. Assumption~\ref{asm:identification} is more realistic in many scenarios~\citep{hotz1993conditional,bajari2010identification} than those in \citet{fu2017learning} which require a state-only reward function and a deterministic environment transition. In settings where firms are involved, any non-action (like not selling a good, not entering a market) typically leads to zero rewards and provides a natural anchor~\citep{sawant2018causal}.

\subsection{Policy estimation}
\label{sec:airl}
Let $\hat{\pi}$ be an estimator to the true policy function $\pi^*(\ba, \bs)$. Our PQR procedure works with any stochastic policy estimator for energy distributions~\citep{geng2017efficient,kuang2017screening,fu2017learning,geng2018stochastic} . We use the Adversarial Inverse Reinforcement Learning (AIRL) procedure of \citet{fu2017learning} as an example. Recall from Section~\ref{sec:PQR} that $\pi^*(\bs, \ba)=\textnormal{P} (\bA_t = \ba \given \bS_t = \bs)$. AIRL estimates the conditional distribution of $\bA_t$ given $\bS_t$ with a GAN, where the discriminator is
$D(s,a) := \frac{\exp\curly{f(\bs,\ba)}}{\exp\curly{f(\bs,\ba)} + P(\ba\given \bs)}$,
and $P(\ba \given \bs)$ is updated by solving \eqref{eq:energy-control} substituting the $Q$-function with $\log(1-D(\bs,\ba)) - \log D(\bs,\ba)$. At optimality, this procedure leads to $f(\bs, \ba) = \log(\pi^*(\bs, \ba))$. While, the aforementioned procedure does not estimate the $Q$-function or the reward function, \citet{fu2017learning} provides a disentangling procedure to estimate the reward function. We refer to AIRL with the disentangling procedure as D-AIRL. D-AIRL requires a state-only reward function and deterministic transitions. The proposed PQR does not use the disentangling procedure, and we compare PQR with D-AIRL in Section~\ref{sec:exp}.

\subsection{Reward Estimation}
\label{sec:reward}


For ease of explanation, we tackle the reward estimator before the $Q$-function estimator. 
\begin{definition}
\label{def:re}
Given Q-function estimator $\hat{Q}(\bs,\ba)$ and policy estimator $\hat{\pi}(\bs, \ba)$, the reward estimator is defined as:
\begin{align}
\label{eq:main}
\begin{split}
\hat{r}(\bs, \ba) &:= \hat{Q}(\bs, \ba) 
 \\&- \gamma \hat{\mathds{E}}_{\bs'}{\left[-\alpha \log(\hat{\pi}(\bs', \ba^{A})) + \hat{Q}(\bs', \ba^{A})\given \bs, \ba
\right]},
\end{split}
\tag{\textsc{R-Estimator}} 
\end{align}
where $\ba^{A}$ is the anchor action. $\hat{\mathds{E}}_{\bs'}$ denotes the estimated expectation over $\bs'$, the one-step look-ahead state variable. When the environment transition is known, we obtain the exact expectation $\EE_{\bs'}$. 
\end{definition}

Note that \ref{eq:main} shares a very similar form as the Bellman equation for $Q$:
\eqs{Q(\bs,\ba) = r(\bs, \ba) + \gamma \EE{\left[ V(\bs') \given \bs, \ba \right]},} 
except the value function is represented by $\hat{Q}(\bs,\ba^{A})$ and $\hat{\pi}(\bs, \ba^{A})$.  
This representation avoids directly calculating the value function, which is very challenging and unavailable by many methods including AIRL.
Also, the representation requires a one-step-ahead expectation instead of the entire trajectory, making the calculation or estimation for the expectation much easier.

The intuition behind this representation is to replace the value function in the Bellman equation with a specification that makes agent reward functions explicitly a function of agent policies.
Representing reward functions in this way can be traced back to \citet{hotz1993conditional}, and has been used by various subsequent economics publications~\citep{hotz1994simulation,train2009discrete,arcidiacono2011conditional}.
We extend this rational to our more general PQR setting, which eventually facilitates more efficient estimation methods.
Given $\hat{\pi}(\bs,\ba)$ and $\hat{Q}(\bs,\ba)$, Algorithm~\ref{alg:reward} provides an implementation of \ref{eq:main} to estimate the reward, and uses
a deep neural network to estimate the expectation. The calculation of \ref{eq:main} reduces to a supervised learning problem.

\begin{algorithm}[t]
		\caption{Reward Estimation (RE)}
		\label{alg:reward}
		\begin{algorithmic}[1]
			\Input Dataset: $\mathds{X} = \curly{\bs_0, \ba_0, \cdots, \bs_T, \ba_T}$.
			\Input $\hat{Q}(\bs,\ba)$ and $\hat{\pi}(\bs,\ba)$.
			\Output $\hat{r}(\bs, \ba)$.
			\Initialize{ Initialize a deep function $h:\mathcal{S}\times\mathcal{A} \longmapsto \mathds{R}$.}
			\For {$t\in [T]$}
			\State {\small$y_t ~\leftarrow -\alpha \log(\hat{\pi}(\bs_{t+1}, \ba^{A})) + \hat{Q}(\bs_{t+1},\ba^{A})$}
			\EndFor
			\State Train $h$ using $\curly{y_t}_{t=0}^{T-1}$ with $\curly{\bs_t}_{t=0}^{T-1}$ and $\curly{\ba_t}_{t=0}^{T-1}$.
			\State \Return $\hat{r}(\bs, \ba) = \hat{Q}(\bs,\ba) - \gamma h(\bs,\ba)$.
		\end{algorithmic}
	\end{algorithm}

\subsection{$Q$-function Estimation}
\label{sec:identification} 
$Q$-function estimation is susceptible to the identification issue. 
Let $Q(\bs, \ba)$ be the ground-truth $Q$-function. For any function $\phi: \mathcal{S}\longmapsto \mathds{R}$, 
\eqs{Q'(\bs, \ba) := Q(\bs, \ba)+ \phi(\bs)} leads to the same behaviour of agents as $Q(\bs,\ba)$, making it impossible to recover $Q(\bs,\ba)$ without further assumptions. 
The $Q$-function is \textbf{\emph{shaped}} by $\phi(\bs)$, as per \citet{ng1999policy} terminology.
Even worse, if we wrongly take $Q'(\bs, \ba)$ as the true $Q(\bs, \ba)$, Algorithm~\ref{alg:reward} can lead to a highly shaped reward estimation:
\begin{lemma}
\label{lem:confound}
Let $Q$ estimator (denoted by $Q'$) be shaped by a function $\phi(\bs)$. \ref{eq:main} leads to a wrongly estimated reward function, even when the expectation and policy function are calculated uniquely and exactly:
\eqs{
        r'(\bs, \ba) = r(\bs, \ba) + \Phi(\bs,\ba),
}
with $\Phi(\bs,\ba) :=  \phi(\bs) - \gamma \hat{\EE}[\phi(\bs)\given \bs, \ba]$.
\end{lemma}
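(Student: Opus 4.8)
The plan is to exploit the fact that a state-only shaping of the $Q$-function leaves the energy-based policy \eqref{eq:policy-s} completely unchanged, so that the only quantities in \eqref{eq:main} that move when $Q$ is replaced by $Q'=Q+\phi$ are the two explicit $Q$-evaluations. First I would record the shaping-invariance of the policy: substituting $Q'(\bs,\ba)=Q(\bs,\ba)+\phi(\bs)$ into \eqref{eq:policy-s}, the factor $\exp(\phi(\bs)/\alpha)$ appears in both the numerator and the normalizing integral over $\ba'$ (since $\phi$ depends only on $\bs$, not on $\ba'$) and cancels, returning exactly $\pi^*(\bs,\ba)$. This is precisely why $Q'$ is an admissible yet wrong estimate, and it matches the hypothesis that the policy is ``calculated uniquely and exactly'': under either $Q$ or $Q'$ the recovered policy is the same $\pi^*=\hat{\pi}$, while the exact-transition case gives $\hat{\EE}_{\bs'}=\EE_{\bs'}$.

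Next I would verify that \eqref{eq:main} fed with the true $Q$ returns the true reward $r$, which is what lets us phrase the conclusion as $r'=r+\Phi$. From \eqref{eq:policy-s} we have $\alpha\log\pi^*(\bs,\ba)=Q(\bs,\ba)-V(\bs)$ for every $\ba$, using the soft-value relation $V(\bs)=\alpha\log\int_{\ba'}\exp(Q(\bs,\ba')/\alpha)\,d\ba'$ implied by Lemma~\ref{lem:PQR}. Taking $\ba=\ba^{A}$ gives $V(\bs')=Q(\bs',\ba^{A})-\alpha\log\pi^*(\bs',\ba^{A})$, and inserting this into the Bellman equation $Q(\bs,\ba)=r(\bs,\ba)+\gamma\EE[V(\bs')\given\bs,\ba]$ reproduces \eqref{eq:main} verbatim. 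Hence the output of \eqref{eq:main} with the unshaped $Q$ equals $r$.

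Then I would compute the shaped output $r'$ by plugging $\hat{Q}=Q'$, $\hat{\pi}=\pi^*$, $\hat{\EE}_{\bs'}=\EE_{\bs'}$ into \eqref{eq:main} and subtracting the unshaped expression term by term. The entropy term $-\alpha\log\pi^*(\bs',\ba^{A})$ is identical in both cases by policy invariance and cancels. The leading term $\hat{Q}(\bs,\ba)$ contributes $Q'(\bs,\ba)-Q(\bs,\ba)=\phi(\bs)$, while the look-ahead term $\hat{Q}(\bs',\ba^{A})$ contributes, after the factor $-\gamma$ and the (linear, exact) expectation, $-\gamma\EE_{\bs'}[\phi(\bs')\given\bs,\ba]$. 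Collecting the pieces yields $r'(\bs,\ba)=r(\bs,\ba)+\phi(\bs)-\gamma\EE_{\bs'}[\phi(\bs')\given\bs,\ba]$, i.e. $\Phi(\bs,\ba)=\phi(\bs)-\gamma\EE_{\bs'}[\phi(\bs')\given\bs,\ba]$, the shaping inside the expectation being evaluated at the one-step look-ahead state $\bs'$ (so the $\phi(\bs)$ written inside the expectation in the statement should read $\phi(\bs')$).

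Rather than a genuine obstacle, the delicate points are (a) establishing the policy shaping-invariance cleanly, so that the entropy/log-policy term provably carries no trace of $\phi$ and the entire effect is routed through the two explicit $Q$-terms, and (b) the bookkeeping of where $\phi$ is evaluated --- at the current state $\bs$ in the leading term versus the look-ahead state $\bs'$ inside the discounted expectation. Everything past those two observations is linearity of expectation.
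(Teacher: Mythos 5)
Your proposal is correct and follows essentially the same route as the paper's proof: the paper likewise first records that \ref{eq:main} fed with the unshaped $Q$ and the exact $\pi^*$, $\EE$ returns the true $r$ (it cites Theorem~\ref{thm:asym}, whose underlying identity is exactly the one you re-derive from the soft-value relation and the Bellman equation, cf.\ Lemma~\ref{lem:alt}), then substitutes $Q'=Q+\phi$ and collects terms by linearity of the expectation. Your two additions are also sound: the explicit policy shaping-invariance argument (the paper implicitly keeps $\pi^*$ fixed in both expressions), and the observation that the $\phi(\bs)$ inside the expectation in the lemma statement should read $\phi(\bs')$, which matches the paper's own derivation where the shaping term appears as $\phi(\bs)-\gamma\,\EE\left[\phi(\bs')\given\bs,\ba\right]$.
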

Lemma~\ref{lem:confound} is proved in  Section~\ref{sec:confound-proof} of the Supplements. According to Lemma~\ref{lem:confound}, the wrongly estimated $Q(\bs,\ba)$ induces a reward function estimation shaped in terms of \textbf{\emph{both}} the action variable and the state variable. This additive shaping is a bottleneck for policy optimization in a transferred environment \citep{fu2017learning}. It is also a long-studied problem in economics, where identification of primitives of utility functions is a first step in modeling customer choices~\citep{bajari2010identification, abbring2010identification}. An entire field called partial identification~\citep{tamer2010partial} also focuses on this issue; namely, the conditions under which utility and reward functions (or parameters of these functions) are uniquely identified, or whether they can instead be characterized by identified sets. To deal with this identification issue, we use a two-step strategy. We first identify the $Q$-function for $\ba^{A}$, and then use it to recover $Q(\bs,\ba)$ for other $\ba \in \mathcal{A}$.

\begin{algorithm}[t]
\caption{FQI-I}
		\label{alg:fqii}
		\begin{algorithmic}[1]
			\Input Dataset: $\mathds{X}$.
			\Input $\gamma$, $\alpha$, $N$, and $\hat{\pi}(\bs, \ba)$.
			\Output $\hat{Q}(\bs, \ba)$. 
			\Initialize{ Initialize a deep function $h:\mathcal{S} \longmapsto
 \mathds{R}$.}
			\State {$y_t = 0$ for $t \in \curly{t\given \ba_t = \ba^{A}}$}
			\For{$k \in [N]$}
			\For{$t \in \curly{t\given \ba_t = \ba^{A}}$}
			\State {\small $y_t  \leftarrow g(\bs_t)- \gamma \alpha \log(\hat{\pi}(\bs_{t+1}, \ba^{A}))+\gamma h(\bs_{t+1}) $} 
			\EndFor
			\State Update $h$ using $\curly{y_t}_{ \curly{t\given \ba_t = \ba^{A}}}$ and $\curly{s_t}_{ \curly{t\given \ba_t = \ba^{A}}}$.
			\EndFor
			\State $\hat{Q}(\bs, \ba) \leftarrow \alpha \log(\hat{\pi}(\bs, \ba)) - \alpha\log(\hat{\pi}(\bs, \ba^{A})) + h(\bs)$
			\State \Return $\hat{Q}(\bs, \ba)$
		\end{algorithmic}
	\end{algorithm}

\subsubsection{Identifying $Q(\bs,\ba^A)$}
Let $Q^A(\bs) := Q(\bs, \ba^A)$. We define our $Q^A(\bs)$ estimator as a fixed point to an operator: 
\begin{definition}
Let $f(s) := Q(\bs, \ba^{A})$ and $C(\mathcal{S})$ the set of continuous bounded functions $f: \mathcal{S} \longmapsto \mathds{R}$ . Under Assumption~\ref{asm:identification}, $f(\bs)$ is the unique solution to
\eq{contraction}{
\hat{Q}^A(\bs) = \hat{\mathcal{T}}\hat{Q}^A(\bs),
\tag{\textsc{Q$^A$-Estimator}} 
}
where $\hat{\mathcal{T}}$ is an operator on the set of continuous bounded functions $f: \mathcal{S} \longmapsto \mathds{R}$. $\hat{\mathcal{T}}$ is defined as
\eqs{    \hat{\mathcal{T}}f(\bs) := g(\bs)+\gamma \hat{\EE}_{\bs'}{\left[ -\alpha \log(\hat{\pi}(\bs', \ba^{A})) + f(\bs') \given \bs, \ba^{A} \right]}.
} 
$\hat{\mathds{E}}_{\bs'}$ denotes the estimated expectation over one-step transition of state variables.
\end{definition}

%
The definition of $\hat{\mathcal{T}}$ follows from inserting $\ba^A$ into \ref{eq:main}.
When $\hat{\EE}_\bs'$ in $\hat{\mathcal{T}}$ is exactly calculated, $\hat{\mathcal{T}}$ becomes a contraction, which guarantees the uniqueness of \ref{eq:contraction} (Lemma~\ref{lem:fix} in the Supplements).
We derive \ref{eq:contraction} by repeatedly applying the operator $\hat{\mathcal{T}}$. The $\hat{\mathcal{T}}$ operator is similar to the $Q$-function Bellman operator when solving an MDP with only one available action $\ba^{A}$. We will show that identifying $\hat{Q}^A(\bs)$ reduces to solving a simple one-action MDP.

\subsubsection{Identifying $Q(\bs,\ba)$}
We now derive the estimator to $Q(\bs,\ba)$.
\begin{definition}
Given anchor estimator $\hat{Q}^A(\bs)$ and policy estimator $\hat{\pi}(\bs,\ba)$, we define the $Q$-function estimator as
\eq{q}{
    \hat{Q}(\bs,\ba) := \alpha \log(\hat{\pi}(\bs,\ba)) - \alpha \log(\hat{\pi}(\bs,\ba^{A})) + \hat{Q}^A(\bs).
    \tag{\textsc{Q-Estimator}} 
}
\end{definition}

\ref{eq:q} uses the fact that $\phi(s)$ depends only on the state variable. Accordingly, the difference $Q(\bs,\ba^{A}) - Q(\bs,\ba)$ is equal to the shaped $Q$-function difference $Q'(\bs,\ba^{A}) - Q'(\bs,\ba)$. Thus, knowing $Q(\bs,\ba^{A})$ is sufficient to correctly identify $Q(\bs,\ba)$ for any $(\bs,\ba) \in \mathcal{S} \times \mathcal{A}$. When the environment transition is unknown, we estimate $Q^A(\bs)$ using fitted-$Q$-iteration~\citep{riedmiller2005neural}. The algorithm uses a deep function for $\hat{Q}^A(\bs)$, and implements the operator $\hat{\mathcal{T}}$ by repeatedly leveraging the observed demonstrations. With \ref{eq:contraction}, we can 
easily derive \ref{eq:q}. The 2-step procedure for the $Q$-function is summarized as the Fitted-$Q$-Iteration Identification (FQI-I) method in Algorithm~\ref{alg:fqii}.
Again, FQI-I reduces to solving a simple one-action MDP. Note that when $\hat{\pi}(\bs,\ba)$ or $\hat{\pi}(\bs,\ba^A)$ are too small, the $\log$ in \ref{eq:q} will explode. In practice, one may resort to clipping by capping the policy probability~\citep{ionides2008truncated}.


\subsection{Full Algorithm}
\label{sec:alg}
We now tie the PQR components together and provide our overall framework in Algorithm~\ref{alg:main}. Our method has three steps: 
\begin{inlineenum}
    \item estimate the policy function by AIRL or other policy estimation methods;
    \item estimate $Q(\bs, \ba)$ using the FQI-I method in Algorithm~\ref{alg:fqii};
    \item estimate the reward function using Algorithm~\ref{alg:reward}.
\end{inlineenum}
Note that the deep neural network method used in the proposed algorithms can be replaced by many machine learning methods~\citep{nassif2012RDP, kuusisto2014support,athey2019estimating}.

\begin{algorithm}[t]
		\caption{PQR Algorithm}
		\label{alg:main}
		\begin{algorithmic}[1]
			\Input $\mathds{X} = \curly{\bs_0, \ba_0,  \cdots, \bs_T, \ba_T}$.
			\Input $\gamma$, $\alpha$ and $N$.
			\Output $\hat{r}(\bs, \ba)$.
			\State $\hat{\pi}(\bs, \ba) \leftarrow $\Call{AIRL}{$\mathds{X}$}
			\State $\hat{Q}(\bs, \ba) \leftarrow $\Call{FQI-I}{$\mathds{X}$, $N$, $\hat{\pi}(\bs, \ba)$}
			\State $\hat{r}(\bs, \ba) \leftarrow $\Call{RE}{$\mathds{X}$, $\hat{\pi}(\bs, \ba)$, $\hat{Q}(\bs, \ba)$}
			\State \Return $\hat{r}(\bs, \ba)$.
		\end{algorithmic}
	\end{algorithm} 

 \begin{figure*}[t]
\begin{center}
	\centering
	\begin{subfigure}{0.23\textwidth}
		\centering
		\includegraphics[scale=0.18]{./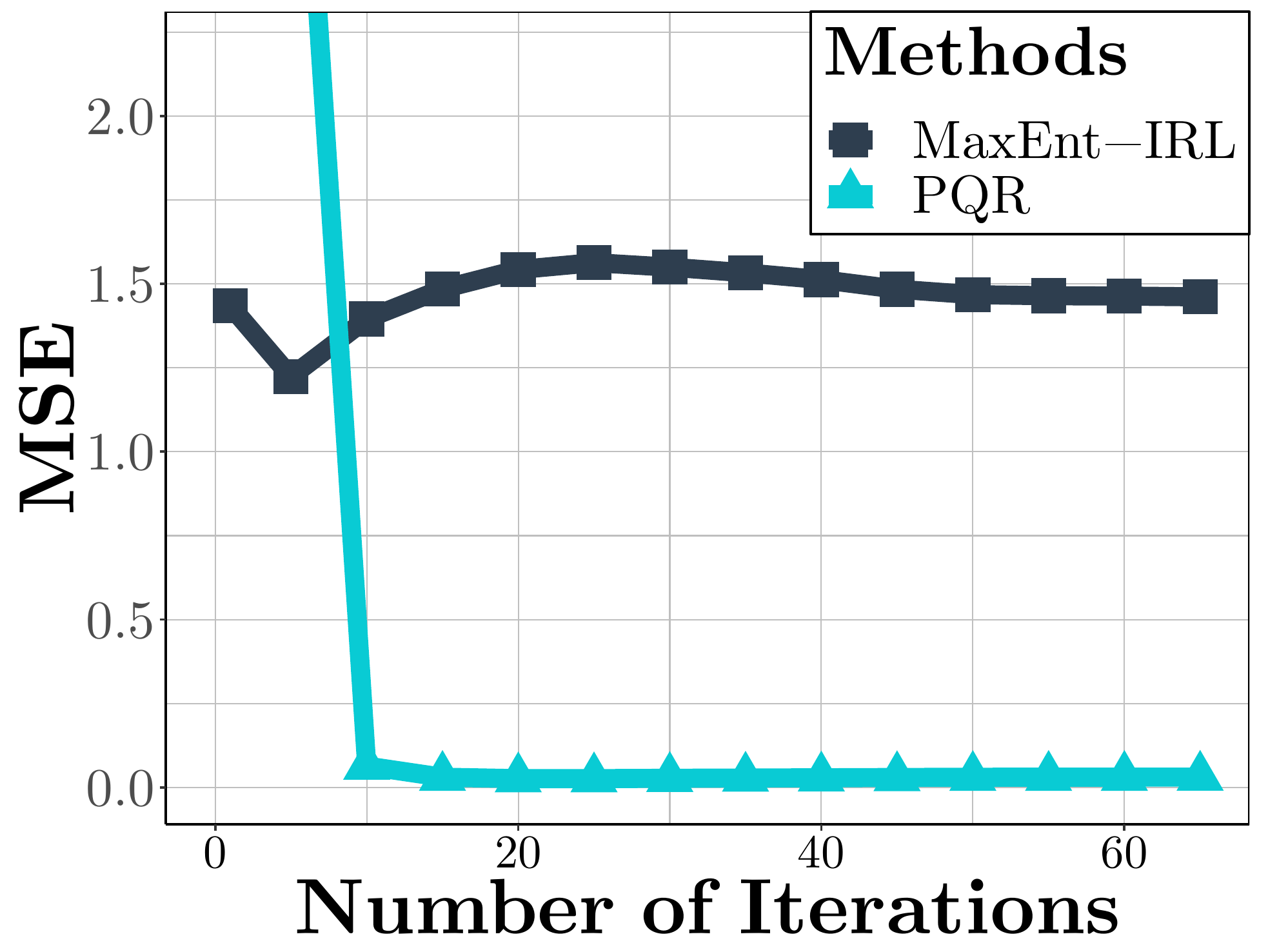}
		\caption{$p = 5$}
	\end{subfigure}
	\centering
	\begin{subfigure}{0.23\textwidth}
		\centering
		\includegraphics[scale=0.18]{./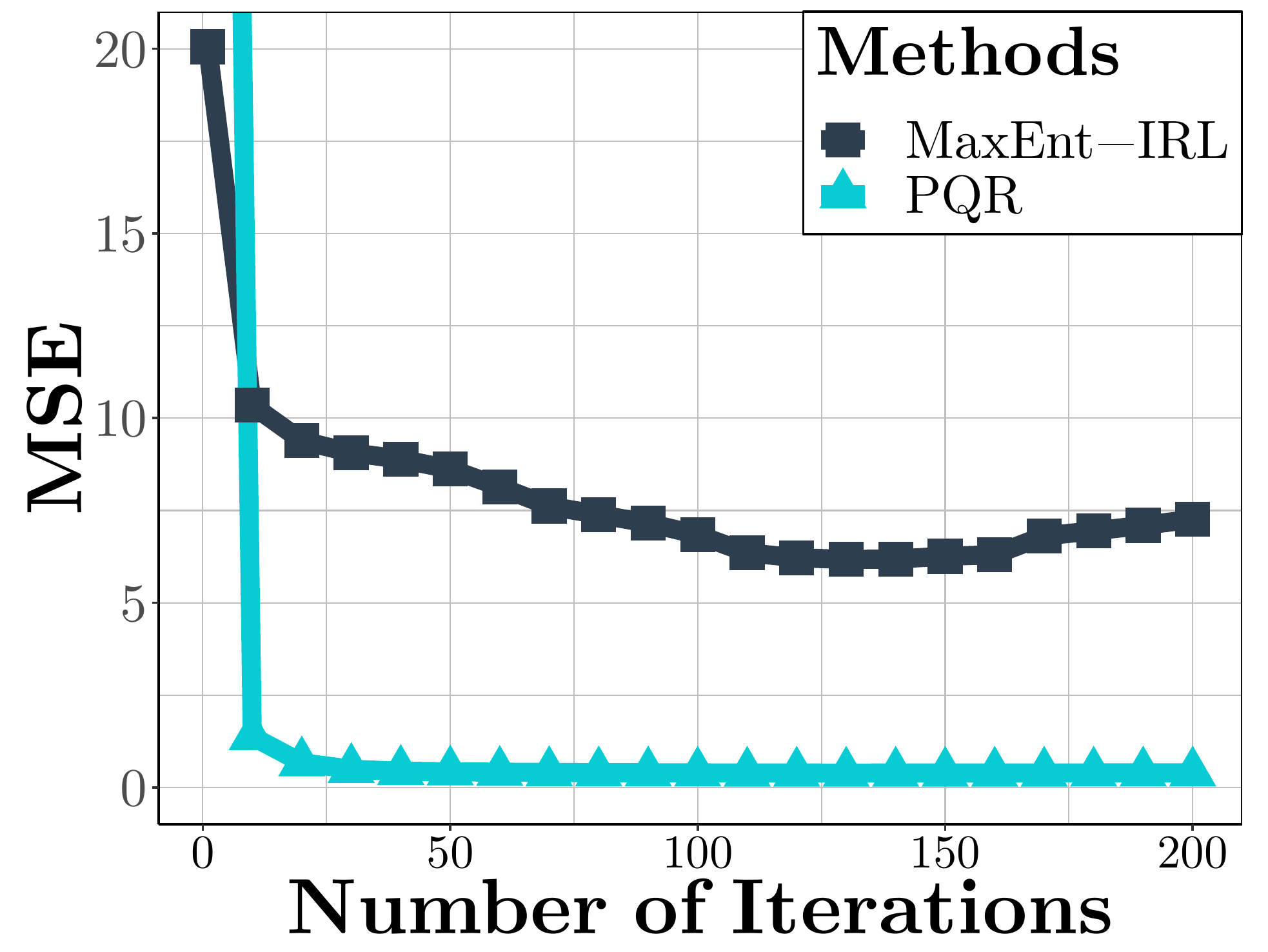}
		\caption{$p = 10$}
	\end{subfigure}
	\centering
	\begin{subfigure}{0.23\textwidth}
		\centering
		\includegraphics[scale=0.18]{./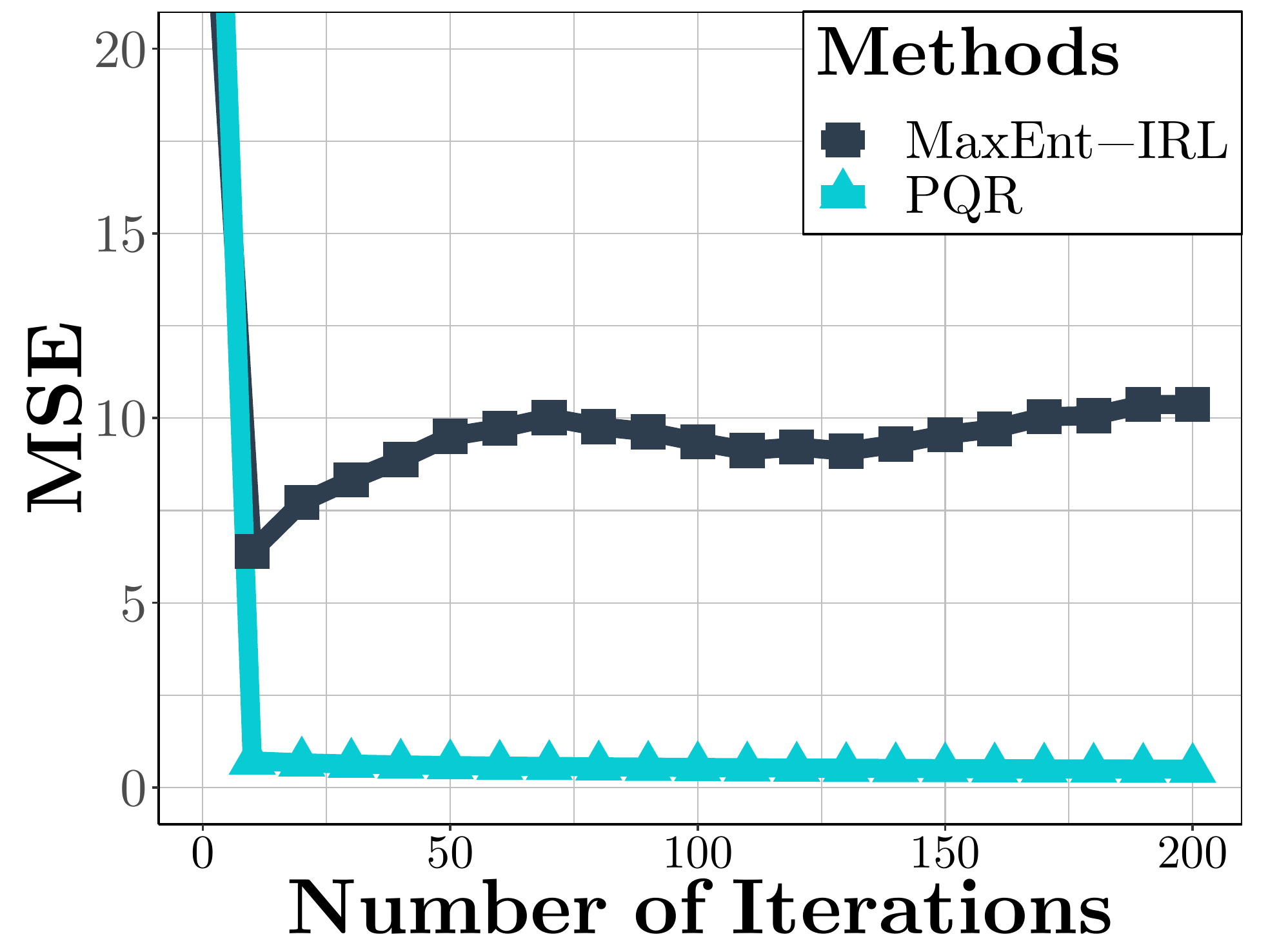} 
		\caption{$p = 20$}
	\end{subfigure} 
	\centering
	\begin{subfigure}{0.23\textwidth}
		\centering
		\includegraphics[scale=0.18]{./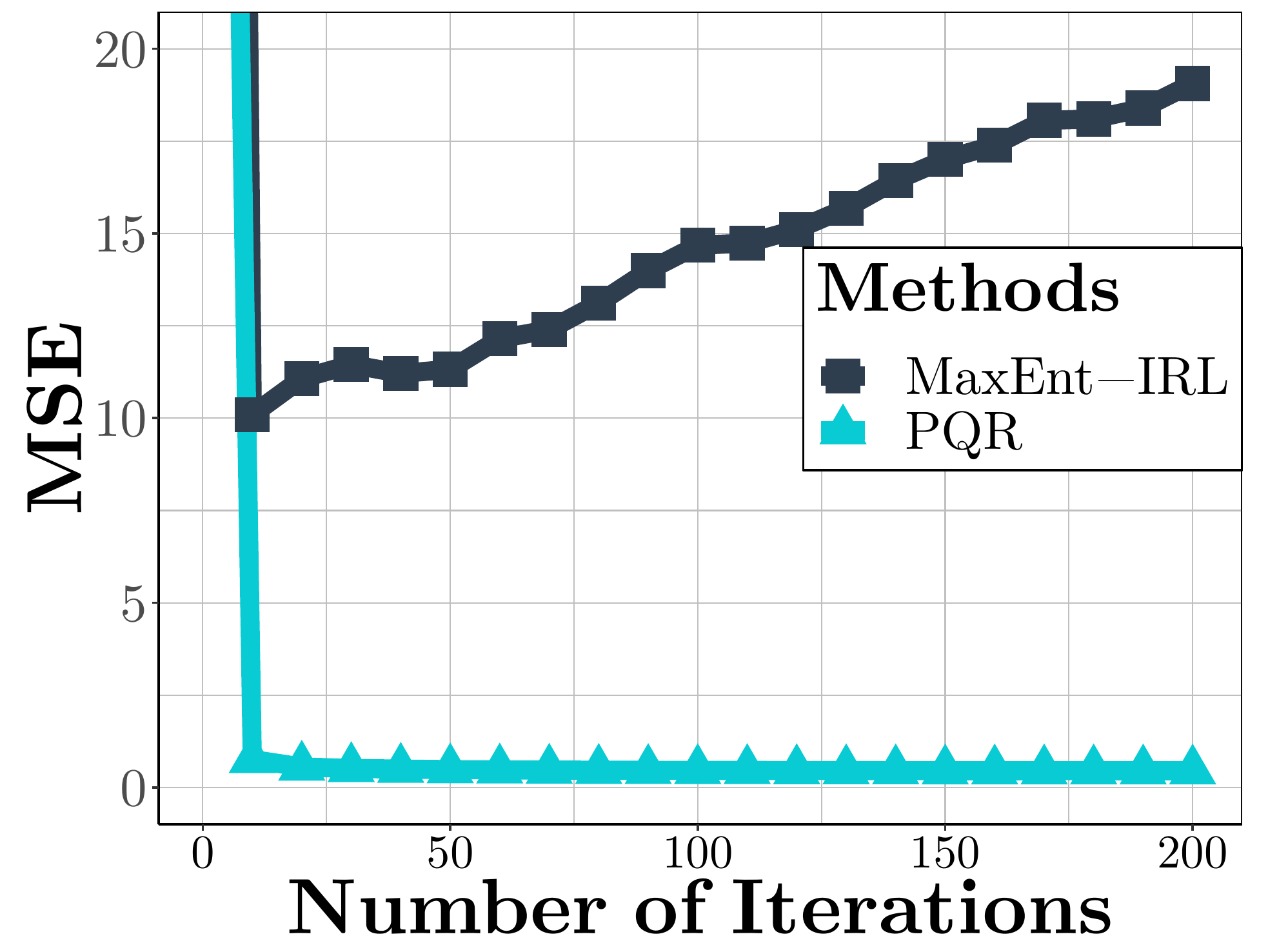}
		\caption{$p = 40$}
	\end{subfigure} 
	\centering
	\caption{MSE for $Q$-function recovery with different state variable dimensions $p$}
	\label{fig:q} 
\end{center}
\end{figure*}

\section{Theoretical Analysis}
\label{sec:error}
We theoretically study the estimation error of PQR. We first demonstrate the intuition behind PQR by focusing on a simplified setting with \begin{inlineenum}
    \item known transitions, and
    \item an accurate estimation to the policy function.
\end{inlineenum}
Then, we study the more general setting and provide a convergence analysis. All the proofs and required extra technical assumptions are deferred to Section~\ref{sec:theoretical-results} of the Supplements. 


\subsection{Estimators Error with Known Transitions}
\label{sec:asym}
In this section we assume that the environment transition is given and that 
that the policy estimator $\hat{\pi}(\bs,\ba)$ is accurate.
We show that the \ref{eq:main} accurately recovers the true reward function without identification issues. By using this idealized setting, we focus on the errors of the proposed estimators.

\begin{theorem}
\label{thm:asym}
Let $Q(\bs,\ba)$ and $r(\bs,\ba)$ denote the true $Q$ and reward function. Let $\hat{r}(\bs,\ba)$, $\hat{Q}^{A}(\bs)$ and $\hat{Q}(\bs,\ba)$ be the \ref{eq:main}, \ref{eq:contraction}, and \ref{eq:q}, whose  expectations are exactly calculated. Assume that the policy estimator $\hat{\pi}(\bs,\ba)$ is accurate. 
Then, under the formulation in Section~\ref{sec:PQR} with Assumption~\ref{thm:asym},
\begin{itemize}
    \item $\hat{Q}^{A}(\bs) = Q(\bs,\ba^{A})$;
    \item $\hat{Q}(\bs,\ba) = Q(\bs,\ba)$;
    \item $\hat{r}(\bs,\ba) = r(\bs,\ba)$.
\end{itemize}\end{theorem}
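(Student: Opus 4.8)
The plan is to reduce all three claims to a single \emph{soft-consistency relation} linking the value function, the $Q$-function, and the optimal energy-based policy, and then to verify the estimators in the order $\hat{Q}^{A}\to\hat{Q}\to\hat{r}$, since each step consumes the conclusion of the previous one. First I would record the Bellman decomposition $Q(\bs,\ba)=r(\bs,\ba)+\gamma\,\EE[V(\bs')\given\bs,\ba]$ that is immediate from the definition of $Q$ in Lemma~\ref{lem:PQR}, and then derive the key identity that drives everything.

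The central step is to show that, for the optimal policy of Lemma~\ref{lem:PQR},
\eqs{V(\bs)=Q(\bs,\ba)-\alpha\log\pi^*(\bs,\ba)\quad\text{for every } \ba\in\mathcal{A}.}
This follows by taking logarithms of \eqref{eq:policy-s}: writing $Z(\bs):=\int_{\mathcal{A}}\exp(\tfrac{1}{\alpha}Q(\bs,\ba'))\,d\ba'$ gives $\alpha\log\pi^*(\bs,\ba)=Q(\bs,\ba)-\alpha\log Z(\bs)$, while the entropy-augmented value \eqref{eq:energy-control} admits the log-sum-exp (soft-max) form $V(\bs)=\alpha\log Z(\bs)$. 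Crucially the relation holds for \emph{any} action, so for the anchor $\ba^{A}$ it states that the bracketed quantity $-\alpha\log\pi^*(\bs,\ba^{A})+Q(\bs,\ba^{A})$ appearing inside both \ref{eq:main} and $\hat{\mathcal{T}}$ is precisely $V(\bs)$.

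Given this identity the three claims fall out by substitution. For $\hat{Q}^{A}$: inserting $\hat{\pi}=\pi^*$ and the exact expectation into $\hat{\mathcal{T}}$ and replacing the bracket by $V$, evaluation at $f=Q(\cdot,\ba^{A})$ yields $\hat{\mathcal{T}}Q(\cdot,\ba^{A})(\bs)=g(\bs)+\gamma\,\EE_{\bs'}[V(\bs')\given\bs,\ba^{A}]$; since $g(\bs)=r(\bs,\ba^{A})$ by Assumption~\ref{asm:identification}, the Bellman decomposition shows $Q(\cdot,\ba^{A})$ is a fixed point, and uniqueness of the fixed point via the contraction property of $\hat{\mathcal{T}}$ (Lemma~\ref{lem:fix}) forces $\hat{Q}^{A}(\bs)=Q(\bs,\ba^{A})$. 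For $\hat{Q}$: plugging $\hat{Q}^{A}(\bs)=Q(\bs,\ba^{A})$ and $\hat{\pi}=\pi^*$ into \ref{eq:q}, the identity turns $\alpha\log\pi^*(\bs,\ba)-\alpha\log\pi^*(\bs,\ba^{A})$ into $Q(\bs,\ba)-Q(\bs,\ba^{A})$ as the $V(\bs)$ terms cancel, so $\hat{Q}(\bs,\ba)=Q(\bs,\ba)$. For $\hat{r}$: substituting $\hat{Q}=Q$ into \ref{eq:main} and again replacing the bracket by $V$ gives $\hat{r}(\bs,\ba)=Q(\bs,\ba)-\gamma\,\EE_{\bs'}[V(\bs')\given\bs,\ba]$, which is exactly $r(\bs,\ba)$ by the Bellman decomposition.

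I expect the only genuine obstacle to be establishing the soft-value identity $V(\bs)=\alpha\log Z(\bs)$ rigorously from the entropy-augmented objective, i.e.\ confirming that the optimal policy in Lemma~\ref{lem:PQR} realizes the log-sum-exp value; everything downstream is algebraic cancellation plus one appeal to the contraction property of $\hat{\mathcal{T}}$ for the uniqueness of $\hat{Q}^{A}$. A secondary point I would state carefully is that $\hat{Q}(\bs',\ba^{A})=\hat{Q}^{A}(\bs')$, so that the quantity used inside \ref{eq:main} coincides with the one driving $\hat{\mathcal{T}}$ and the replacement by $V$ is legitimate throughout.
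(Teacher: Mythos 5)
Your proposal is correct and follows essentially the same route as the paper: the identity $V(\bs) = Q(\bs,\ba) - \alpha\log\pi^*(\bs,\ba)$ you center on is exactly the paper's key step (derived there from the log-sum-exp form of $V$, the paper's Lemma~\ref{lem:v}, and used in Lemma~\ref{lem:alt}), and your fixed-point verification plus contraction-uniqueness argument and the two substitution steps mirror Lemma~\ref{lem:fix} and the paper's proof of Theorem~\ref{thm:asym} step for step. The one piece you flag as the genuine obstacle --- rigorously establishing $V(\bs)=\alpha\log\int_{\mathcal{A}}\exp(Q(\bs,\ba)/\alpha)\,d\ba$ from the entropy-augmented objective --- is precisely what the paper supplies via its Lemmas~\ref{lem:bellman} and~\ref{lem:v}, so completing your plan requires exactly that computation and nothing more.
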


Once the transition information is known, \ref{eq:main} is able to uniquely recover the true reward function. In other words, the error of estimators comes from the estimated expectation over the transitions. This justifies the intuition behind PQR. 

\subsection{PQR Error with Unknown Transitions}
\label{sec:nonasym}
Next, we study the errors induced by PQR when the environment transition is unknown. 
 As we suggested in Section~\ref{sec:estimation}, Algorithm~\ref{alg:reward} and Algorithm~\ref{alg:fqii} requires deep supervised learning and deep-FQI. While extensively used in practice, these methods are not yet endowed with theoretical guarantees: theoretical analysis is very challenging due to the nature of deep learning. Existing theoretical work inevitably calls for a series of extra assumptions. We follow the assumptions in \citet{munos2008finite,du2018gradient,arora2019fine,yang2019theoretical}. For ease of presentation, we provide the minimal assumptions required to clarify the results. Other assumptions are deferred to Section~\ref{sec:proof-nonasym} of the Supplements. By Assumption~\ref{asm:policy} and Assumption~\ref{asm:deep}, we assume that the error of the log policy estimation from existing methods like AIRL is bounded, and constrain the training data.   
 \begin{assumption}
 \label{asm:policy}
 Let $\hat{\pi}(\bs, \ba)$ be the estimated policy function in Algorithm~\ref{alg:main}. The estimation error can be bounded by
$\max_{(\bs,\ba) \in \mathcal{S} \times \mathcal{A}}\abs{\log(\hat{\pi}) - \log(\pi)} \leq \epsilon_{\pi}.$
 \end{assumption}
\begin{assumption}
\label{asm:deep}
Let there be $n$ IID training data extracted from $\mathds{X} = \curly{\bs_0, \ba_0, \bs_1, \ba_1, \cdots, \bs_T, \ba_T}$ for the neural networks in Algorithm~\ref{alg:reward} and Algorithm~\ref{alg:fqii}. Specifically, we use $\curly{(\bx_i, y_i)}_{i=1}^n$ with $\bx_i \in\mathcal{S}\times \mathcal{A}$ to denote the training data for Algorithm~\ref{alg:reward}, and $\curly{(\bx_i, y^k_i)}_{i=1}^n$ for the $k^{\text{th}}$ iteration of Algorithm~\ref{alg:fqii}.  
\end{assumption}

\begin{figure*}[t]
\begin{center}
	\centering
	\begin{subfigure}{0.23\textwidth}
		\centering
		\includegraphics[scale=0.18]{./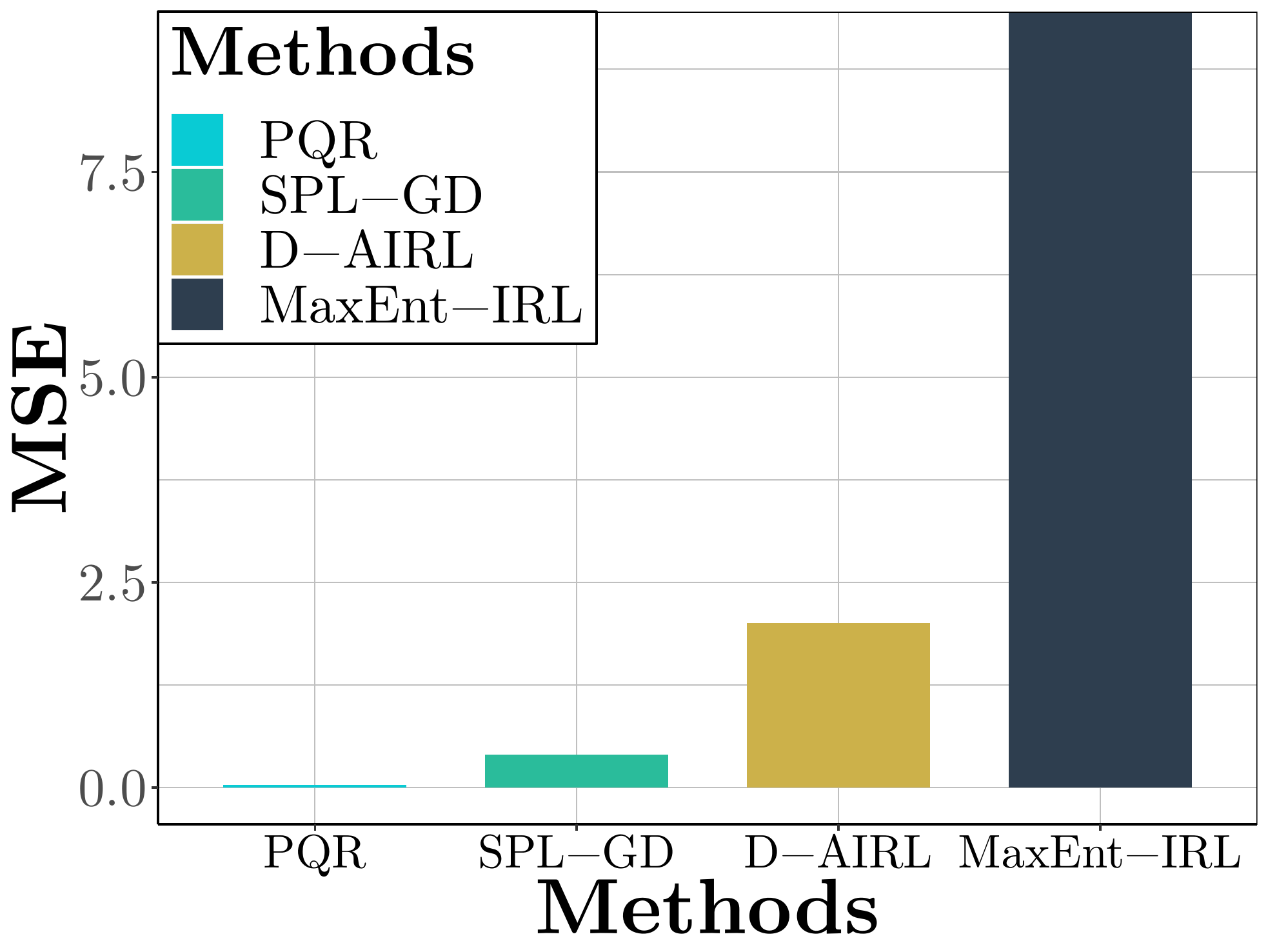}
		\caption{$p = 5$}
	\end{subfigure}
	\centering
	\begin{subfigure}{0.23\textwidth} 
		\centering
		\includegraphics[scale=0.18]{./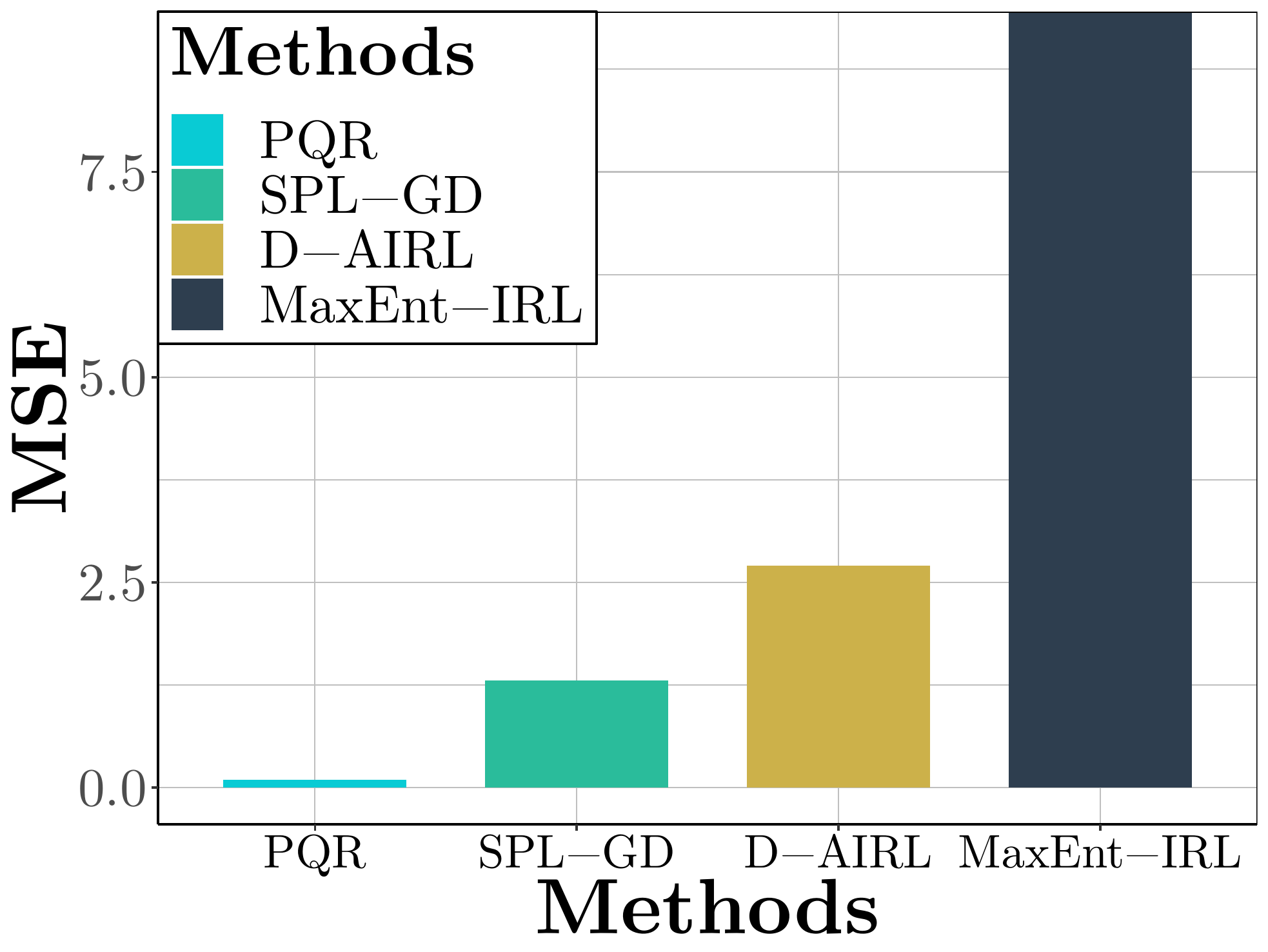}
		\caption{$p = 10$}
	\end{subfigure}
	\centering
	\begin{subfigure}{0.23\textwidth}
		\centering
		\includegraphics[scale=0.18]{./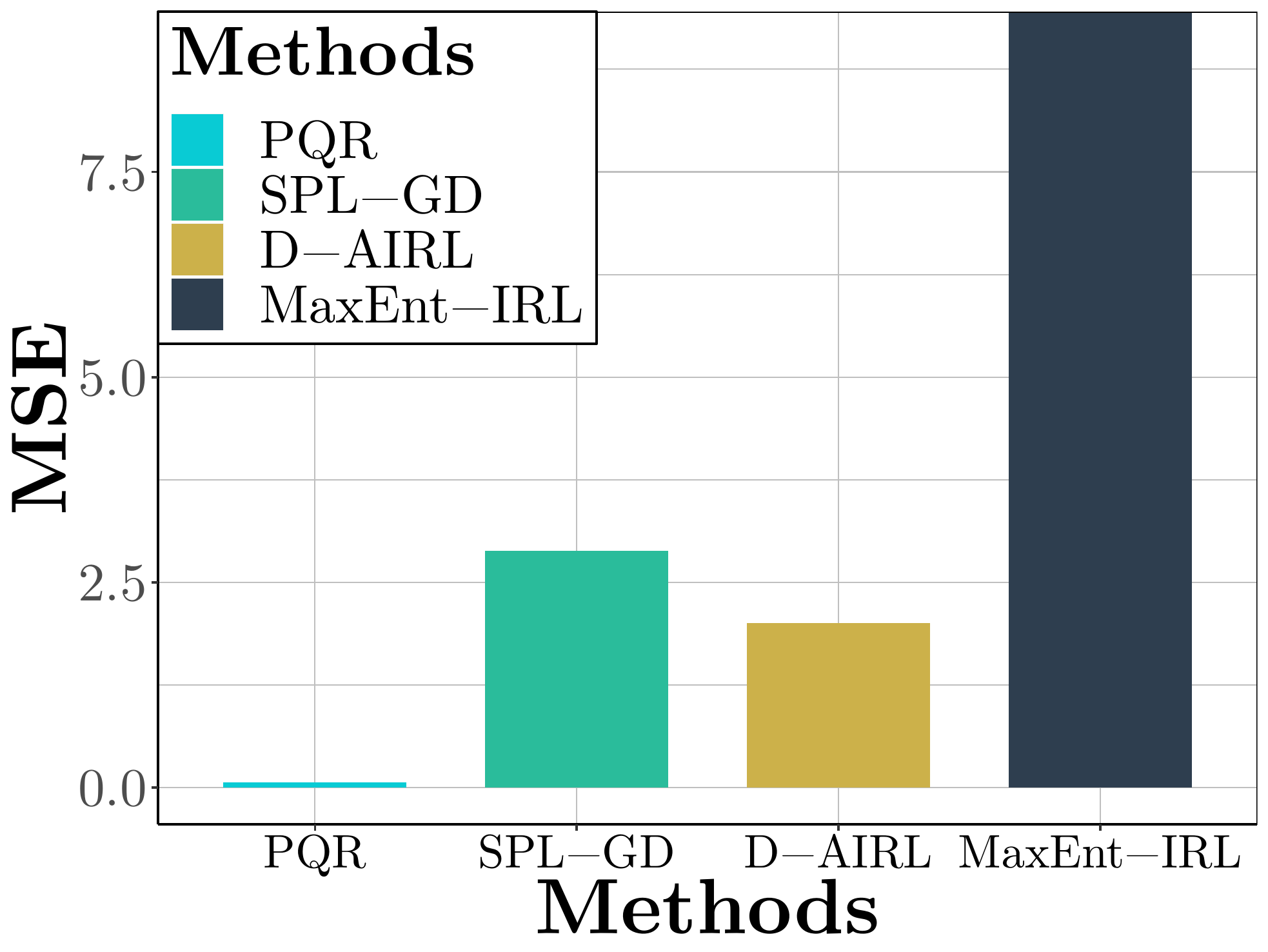}
		\caption{$p = 20$}
	\end{subfigure}
	\centering
	\begin{subfigure}{0.23\textwidth}
		\centering
		\includegraphics[scale=0.18]{./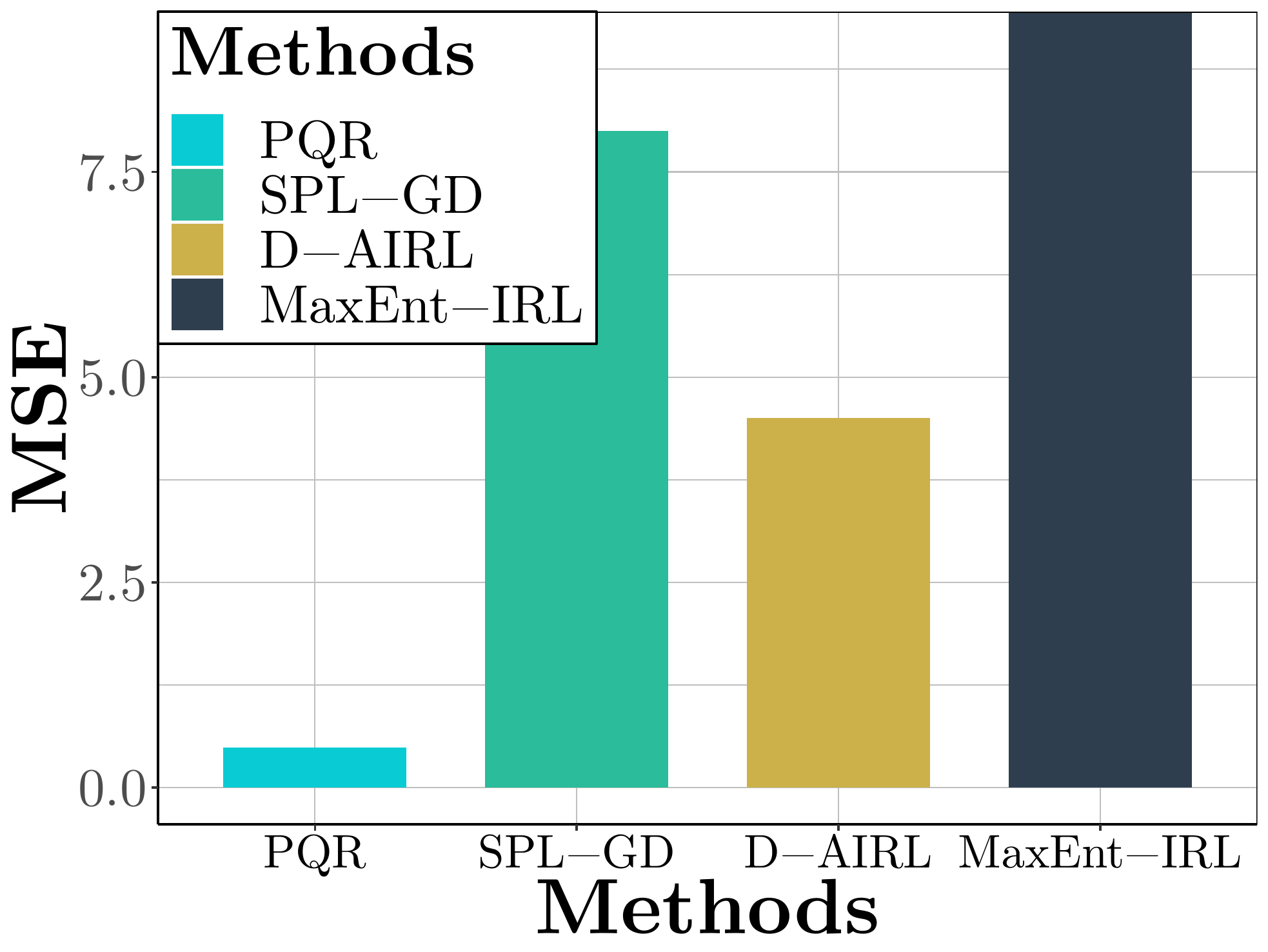}
		\caption{$p = 40$} 
	\end{subfigure}
	\centering
	\caption{MSE (truncated at $9$) for reward recovery with different state variable dimensions $p$}
	\label{fig:r} 
\end{center}
\end{figure*}

We now provide the convergence result for PQR.  
\begin{theorem}
\label{thm:nonasym}
Let $Q(\bs,\ba)$ be the true $Q$-function, $r(\bs,\ba)$ the true reward function, $\hat{Q}(\bs,\ba)$ the result of Algorithm~\ref{alg:fqii}, and $\hat{r}(\bs,\ba)$ the result of Algorithm~\ref{alg:main}.  Let Assumptions~\ref{asm:identification} to~\ref{asm:deep} and the assumptions listed in Section~\ref{sec:extra-assm} of the Supplements be satisfied. For any $P \in (0,1)$, there exists a constant $C>0$, such that the error of both $\hat{Q}(\bs,\ba)$ and $\hat{r}(\bs,\ba)$ can be bounded with probability at least $1-P$ by:
{\ali{q-error}{ 
\EE_{(\bs,\ba)\sim \mathcal{D_{\bs,\ba}}}& \left[\abs{\hat{Q}(\bs,\ba) - {Q}(\bs,\ba)}\right] 
 \\ &\leq  \frac{C}{1-\gamma}\max_{k \in[N]} \sqrt{\frac{2\by^T_k(\bH^\infty)^{-1} \by_k}{n}} 
\\&\quad +\gamma^N \frac{2C}{1-\gamma}+ O\left(
\sqrt{\frac{\log(\frac{n(N+2)C}{ P})}{n}} 
\right) 
\\&\quad +\frac{\gamma \alpha \epsilon_\pi}{1-\gamma},
}
\ali{reward-error}{
\EE_{(\bs,\ba)\sim \mathcal{D_{\bs,\ba}}} &[\abs{\hat{r}(\bs,\ba) - r(\bs,\ba)}] 
\\&\leq \frac{(1+\gamma)C}{1-\gamma}\max_{k \in[N]} \sqrt{\frac{2\by^T_k(\bH^\infty)^{-1} \by_k}{n}}
\\& \quad + \sqrt{\frac{2\by^T(\bH^\infty)^{-1} \by}{n}} 
+\gamma^N C(1+\gamma)+
\frac{2\alpha\epsilon_{\pi}}{1-\gamma}
\\&\quad +O\left(\sqrt{\frac{\log(\frac{n(N+2)C}{ P})}{n}} 
\right),
}}
where $\by = \curly{y_1, \cdots, y_n}^\top$, $\by^k = \curly{y_1^k, \cdots, y_n^k}^\top$ and the components of $\bH^\infty$ are defined as 
\eqs{    H^\infty_{ij} = \frac{\bx_i^T\bx_j (\pi-\arccos(\bx_i^\top \bx_j))}{2\pi}. 
}
\end{theorem}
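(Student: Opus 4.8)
The plan is to decompose the total error of each estimator into four independent sources and bound them separately: (i) the \emph{truncation error} from running the contraction $\hat{\mathcal{T}}$ only $N$ times rather than to its fixed point; (ii) the \emph{statistical error} of the deep regressions solved inside Algorithm~\ref{alg:fqii} and Algorithm~\ref{alg:reward}; (iii) the \emph{finite-sample concentration error} of the empirical expectations; and (iv) the \emph{policy error} $\epsilon_\pi$ carried in from Assumption~\ref{asm:policy}. The geometric factor $\tfrac{1}{1-\gamma}$ and the term $\gamma^N$ appearing in both displays signal that the argument is organized around the contraction property of $\hat{\mathcal{T}}$, while the quantity $\sqrt{2\by^\top(\bH^\infty)^{-1}\by/n}$ is exactly the neural-tangent-kernel generalization bound of \citet{arora2019fine} for an overparameterized network trained by gradient descent, so $\bH^\infty$ enters as the NTK Gram matrix of the regression inputs $\bx_i$.

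I would first treat the anchor $Q$-function. Writing $Q^A$ for the true fixed point and $\hat{Q}^A_k$ for the $k$-th FQI-I iterate, I insert the idealized iterate $\tilde{Q}^A_k$ produced by the \emph{exact} operator with the \emph{true} policy, and split $\|\hat{Q}^A_N - Q^A\|$ by the triangle inequality. The term $\|\tilde{Q}^A_N - Q^A\|\le\gamma^N\|\tilde{Q}^A_0 - Q^A\|$ yields the $\gamma^N$ contribution (using Lemma~\ref{lem:fix}, the contraction of $\hat{\mathcal{T}}$ on $C(\mathcal{S})$, together with a boundedness condition from Section~\ref{sec:extra-assm} that supplies the constant $C$). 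For the remaining per-iteration gap I apply the \citet{arora2019fine} bound to the single regression at iteration $k$, obtaining an error $\sqrt{2\by_k^\top(\bH^\infty)^{-1}\by_k/n}$ plus an $O(\sqrt{\log(\cdot)/n})$ concentration piece; because $\hat{\mathcal{T}}$ is a $\gamma$-contraction, these per-step errors accumulate as a geometric series and collapse to $\tfrac{1}{1-\gamma}\max_{k}\sqrt{2\by_k^\top(\bH^\infty)^{-1}\by_k/n}$, matching the leading term of \eqref{eq:q-error}. The policy mismatch enters only through the $-\alpha\log\hat{\pi}(\bs',\ba^{A})$ summand of $\hat{\mathcal{T}}$, so after the same geometric accumulation it contributes $\tfrac{\gamma\alpha\epsilon_\pi}{1-\gamma}$. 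Pushing this through \eqref{eq:q}, whose correctness is Theorem~\ref{thm:asym} via the identity $Q(\bs,\ba)-Q^A(\bs)=\alpha\log\pi(\bs,\ba)-\alpha\log\pi(\bs,\ba^{A})$, converts the $Q^A$ bound into the stated bound on $\EE|\hat{Q}-Q|$.

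For the reward I use $\hat{r}-r=(\hat{Q}-Q)-\gamma\bigl(h-\EE[\,\cdot\,]\bigr)$, read off from \eqref{eq:main} and Theorem~\ref{thm:asym}. The first summand reuses the $Q$-bound. For the second, the network $h$ of Algorithm~\ref{alg:reward} is a single regression, so \citet{arora2019fine} supplies the extra $\sqrt{2\by^\top(\bH^\infty)^{-1}\by/n}$ term; the fact that $h$ regresses onto targets built from $\hat{Q}^A$ rather than $Q^A$ re-injects the anchor error, which is why the $\max_k$ term carries the factor $(1+\gamma)$ and the truncation term appears as $\gamma^N C(1+\gamma)$. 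Collecting the two policy contributions gives $\tfrac{2\alpha\epsilon_\pi}{1-\gamma}$, completing \eqref{eq:reward-error}. Finally, a union bound over the $N$ FQI regressions and the one reward regression, hence the factor $N+2$ inside the logarithm, upgrades all the individual high-probability statements to a single event of probability at least $1-P$.

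I expect the main obstacle to be step (ii): the NTK generalization bound of \citet{arora2019fine} is stated for one fixed regression with i.i.d.\ data and fixed targets, whereas in FQI-I the targets $\by_k$ at iteration $k$ are themselves random, depend on the previous iterate $h$, and are reused across iterations. Making the per-iteration invocation rigorous therefore requires freezing the targets through the extra assumptions of Section~\ref{sec:extra-assm} (data-splitting, or an appropriate conditional-independence and boundedness condition in the spirit of \citet{munos2008finite}) and then controlling the error propagation of the approximate contraction, i.e.\ establishing an $\ell_1$-type error-propagation inequality that keeps the $(\bH^\infty)^{-1}$-weighted norms intact under composition with $\hat{\mathcal{T}}$. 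Reconciling this sampling and concentration bookkeeping with the deterministic NTK bound is where the genuine care lies; the contraction and triangle-inequality scaffolding around it is routine.
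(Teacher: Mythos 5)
Your proposal follows essentially the same route as the paper's proof: the same split into an idealized contraction (yielding the $\gamma^N$ truncation term), per-iteration regression errors bounded by the NTK result of \citet{arora2019fine} and accumulated geometrically into $\frac{C}{1-\gamma}\max_k\sqrt{2\by_k^\top(\bH^\infty)^{-1}\by_k/n}$, the policy error isolated as $\frac{\gamma\alpha\epsilon_\pi}{1-\gamma}$, the log-policy identity converting the anchor bound into the full $Q$ bound, the $(1+\gamma)$-weighted re-injection of the anchor error plus one extra NTK term for the reward, and a union bound giving the $N+2$ factor. The distribution-shift obstacle you flag at the end is precisely what the paper's Assumption~\ref{asm:fqi} resolves — Munos-style concentration coefficients $\kappa(m)$ with $(1-\gamma)\sum_{m\geq 1}\gamma^{m-1}\kappa(m)\leq\psi$, applied through the FQI error-propagation identity of \citet{yang2019theoretical} — so your anticipated fix coincides with the paper's actual mechanism.
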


Following the rationale in \citep{arora2019fine}, $\sqrt{\frac{2\by^T(\bH^\infty)^{-1} \by}{n}}$ and $\max_{k \in[N]} \sqrt{\frac{2\by^T_k(\bH^\infty)^{-1} \by_k}{n}}$ quantify the generalization errors of the neural networks used in Algorithms~\ref{alg:reward} and~\ref{alg:fqii}, respectively. For the $Q$-function estimation, the error of the neural network in Algorithm~\ref{alg:fqii} is amplified by $\frac{C}{1-\gamma}$ due to the accumulation in each iteration, according to \eqref{eq:q-error}. However, this coefficient is smaller than that of a general FQI method, which is $\frac{2\gamma C}{(1-\gamma)^2}$ (Theorem 4.5 of \citet{yang2019theoretical}). This is consistent with the fact that Algorithm~\ref{alg:fqii} solves a much simplified MDP with only one available action. For the reward estimation, the policy estimation error $\epsilon_{\pi}$ is amplified by the coefficient $\frac{2\alpha}{1-\gamma}$, according to \eqref{eq:reward-error}. $\epsilon_{\pi}$ does not significantly accumulate or explode to harm the proposed reward estimation.

\section{Experimental Results}
\label{sec:exp}

We now demonstrate the utility of PQR.
Throughout this section, we assume that $g(\bs) = 0$ in Assumption~\ref{asm:identification}, since this is the most common case in practice. 

\paragraph{Competing Methods}
For IRL, we consider the classic MaxEnt-IRL proposed in \citet{ziebart2008maximum}, which estimates the $Q$-function and treats it as the reward function. The estimated $Q$-function faces the identification issue described in Section~\ref{sec:identification}. 
Further, we include the D-AIRL method~\citep{fu2017learning}, which attempts to distinguish the reward function from the $Q$-function by a disentangling procedure. D-AIRL assumes that the reward does not depend on actions, and that the transition is deterministic. As for DDCs, we consider the Simultaneous Planning and Learning - Gradient Descent (SPL-GD) method~\citep{ermon2015learning}. SPL-GD directly formulates the reward function as a linear function. 

\subsection{Synthetic Experiments}
\label{sec:synthetic}
For simulation, we build an MDP environment with $p$-dimensional state variables taking values in $\mathds{R}^p$, one-dimensional action variables, and a nonlinear reward function. We take $\delta = 0.9$ and $\alpha = 1$ and sovle the MDP with a deep energy-based policy by the soft $Q$-learning method in \citet{haarnoja2017reinforcement}. 
By conducting the learned policy for $50000$ steps, we obtain the demonstration dataset, on which we compare PQR, MaxEnt-IRL, AIRL, and SPL-GD. We assume that $\gamma$ and $\alpha$ are known as required by existing methods. The detailed data-generation procedure is provided in Section~\ref{sec:syn-details} of the Supplements. We let $\ba^A = 0$ which means $r(\bs,0) = 0$.  

\paragraph{Q Recovery}
We first consider $Q$-function estimation. 
We compare our method with MaxEnt-IRL which also generates an estimated $Q$-function. 
The results are summarized in Figure~\ref{fig:q}. 
With sufficient training iterations, Algorithm~\ref{alg:fqii} provides a much more accurate $Q$-function estimation. This result is consistent with the analysis in Section~\ref{sec:estimation} and also \citet{fu2017learning}: without a proper identification method, $Q$-function estimation may not recover the true $Q$-function. On the other hand, when not well trained, the proposed procedure provides poor estimates. This is not surprising since FQI-I outputs the sum of three deep neural networks (line 9 of Algorithm~\ref{alg:fqii}) and thus has a bigger error at an early stage of training.

 \begin{figure*}[t]
\begin{center}
\begin{minipage}[t]{0.48\textwidth} 
	\centering
			\includegraphics[scale=0.16]{./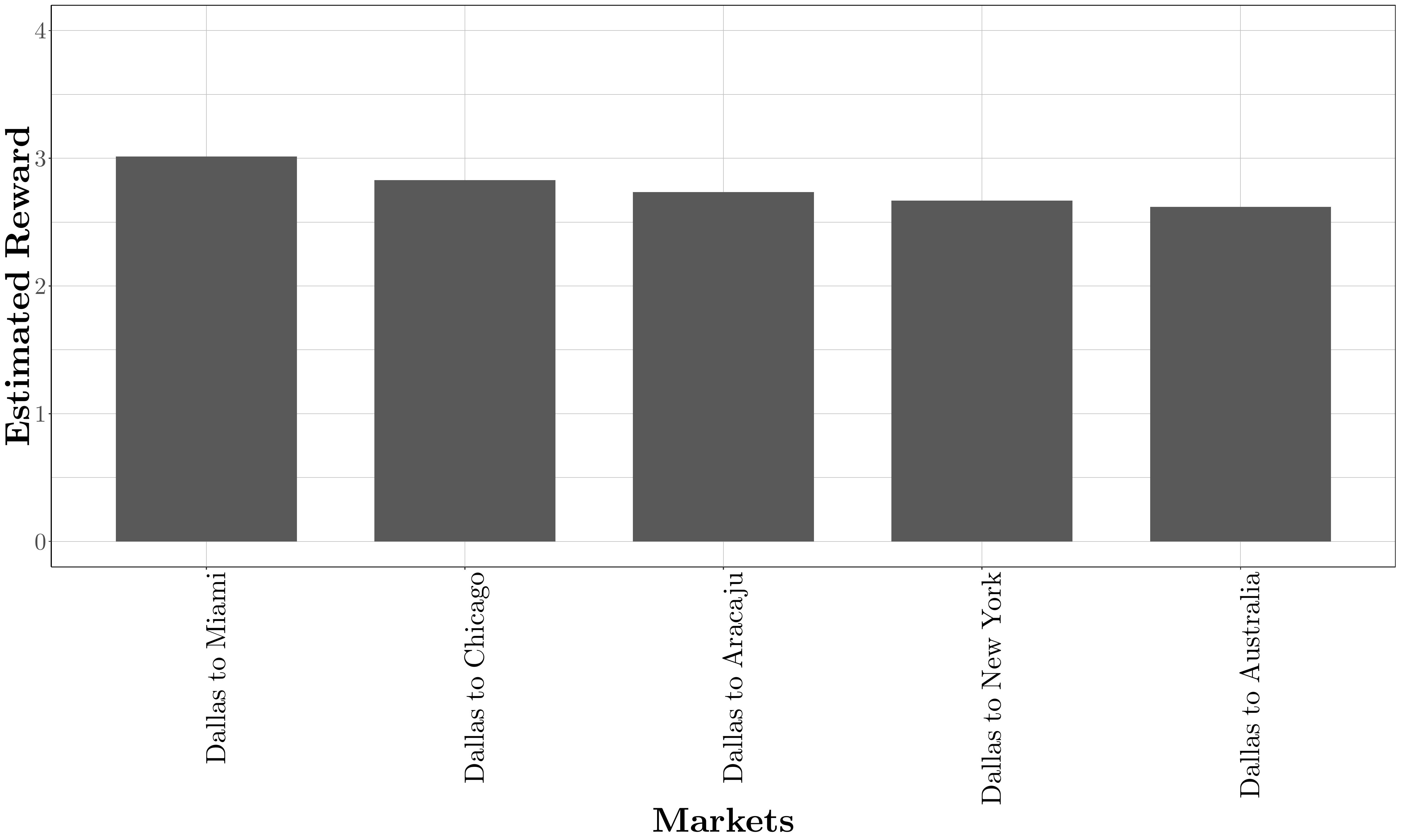}
		\centering 
	\centering
	\caption{Estimated Reward for American Airlines} 
	\label{fig:airline-main}
\end{minipage}
\begin{minipage}[t]{0.48\textwidth}
	\centering
    \includegraphics[scale=0.16]{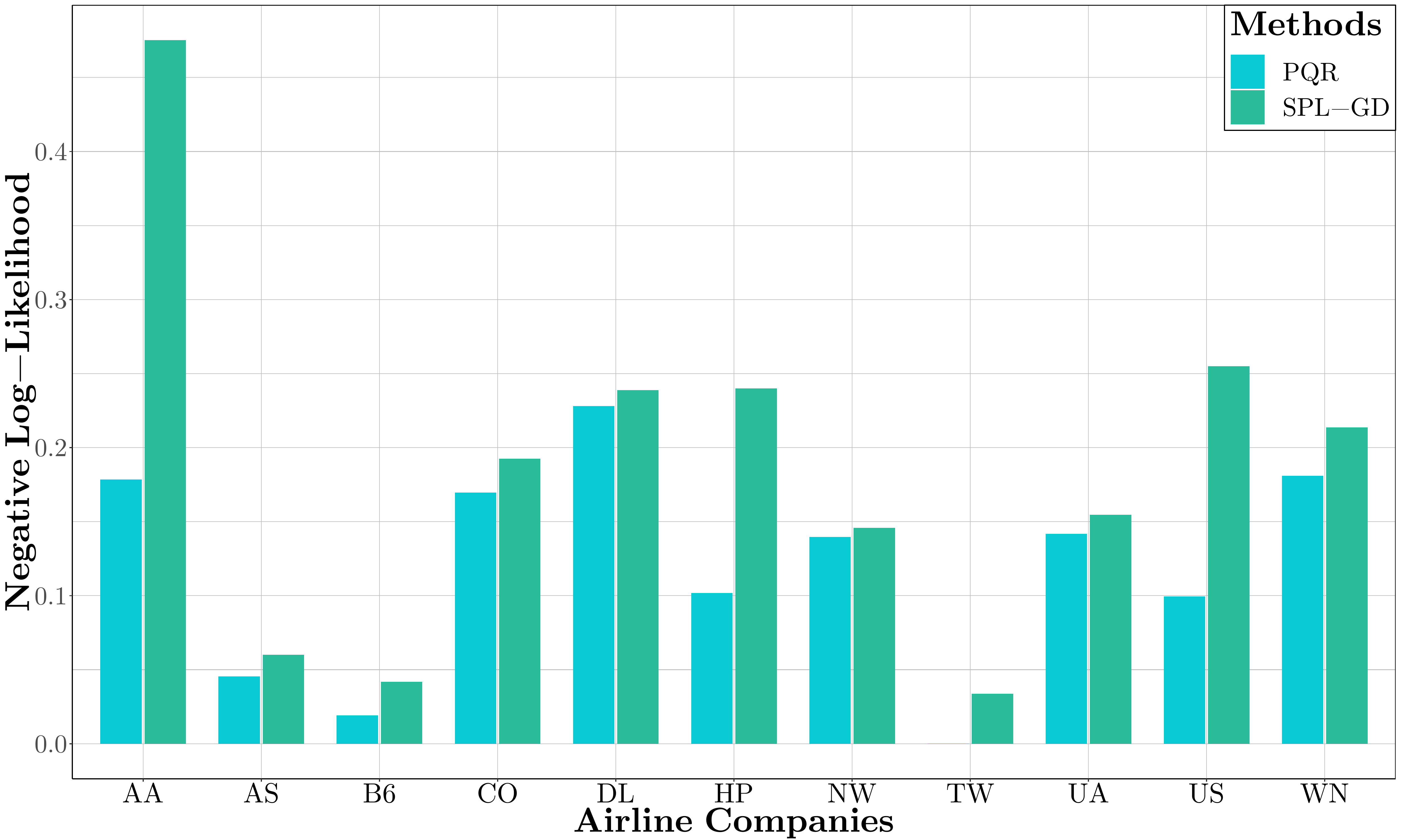} 
	\centering
	\caption{Negative log-likelihood for the airline market entry analysis.}
    \label{fig:airline-likelihood}
\end{minipage} 
\end{center}
\end{figure*}

\paragraph{Reward Recovery} Since only the proposed method uses the information that $r(\bs,0) = 0$, for a fair comparison, we ground all the other reward estimators by the estimated $r(\bs,0)$. In other words, by this normalization procedure, all the methods provide reward estimates with zero reward at the anchor action, and the estimated reward function should have the same scale as the true reward function. Figure~\ref{fig:r} reports the results of reward function estimation. 
PQR performs the best and provides the most accurate estimates. Reward recovery may not be a fair task for MaxEnt-IRL, since it estimates the $Q$ function instead of the reward function. It also performs the worst of the methods considered. D-AIRL improves upon MaxEnt-IRL by attempting to disentangle the reward from the environment effects. However, since the assumptions required (deterministic transitions and state-only reward functions) are too strong, the estimates are still inaccurate. SPL-GD assumes a linear reward function, and performs worse than PQR in our nonlinear-reward setting. As the dimension of the problem increases, the error increases. Our results show that PQR provides accurate Q-function and reward function estimates. 

Note that the performance of the proposed method may not be guaranteed if the anchor-action assumption is not satisfied (such as the setting of D-AIR with state-only reward functions). More detailed analysis with experiments is provided in Section~\ref{sec:robust-analysis} of the Supplements.
We conduct sensitivity (to $\alpha$ and $\delta$), in Section~\ref{sec:sensitivity-analysis}. An $\alpha$ selection procedure is provided in Section~\ref{sec:alpha} of the Supplements. 
 PQR may benefit tasks like imitation learning and agent alignment by identifying the reward function accurately. We leave such extensions to future work.

\subsection{Airline Market Entry Analysis}
\label{sec:airline}

After demonstrating the performance of PQR on a synthetic, machine-learning setting, we use the method in a heavily-studied setting in economics. Specifically, we consider the dynamic market entry decisions of airline carriers, where markets are defined as unidirectional city pairs~\citep{berry2010tracing,manzanares2016essays}. For example, Denver to Chicago is one market. Airline entry competition has been studied extensively in economics and other literature, and it is well-known that these problems are computationally hard to solve. In fact, among the top $60$ composite statistical areas (CSA's) alone, there are $60\times 59/2 = 1770$ different markets. 
We combine public data from the Official Activation Guide\footnote{\url{https://www.oag.com/}} for scheduled flight information, with public data from the Bureau of Transportation Statistics\footnote{\url{https://www.transtats.bts.gov}} for airline company information. 

First, we follow~\citet{benkard2004dynamic} to aggregate airports by Composite Statistical Areas (CSAs) where possible, and Metropolitan Statistical Areas (MSAs) otherwise. We focus on the top $60$ CSAs with the most itineraries in 2002, and thus consider $1770$ markets. The action variable $\bA_t$ is to select markets from the $1770$ options: select entry or not for each market. We study 11 airline companies, see Section~\ref{sec:real-world-sup} for a list of considered CSAs and airline companies. The considered state variables include origin/destination city characteristics, company characteristics, competitor information for each market and so on. We assume airline companies make decisions given the observed state information, in order to maximize their reward function.  
\begin{figure}[t]
    \centering
    \includegraphics[scale = 0.35]{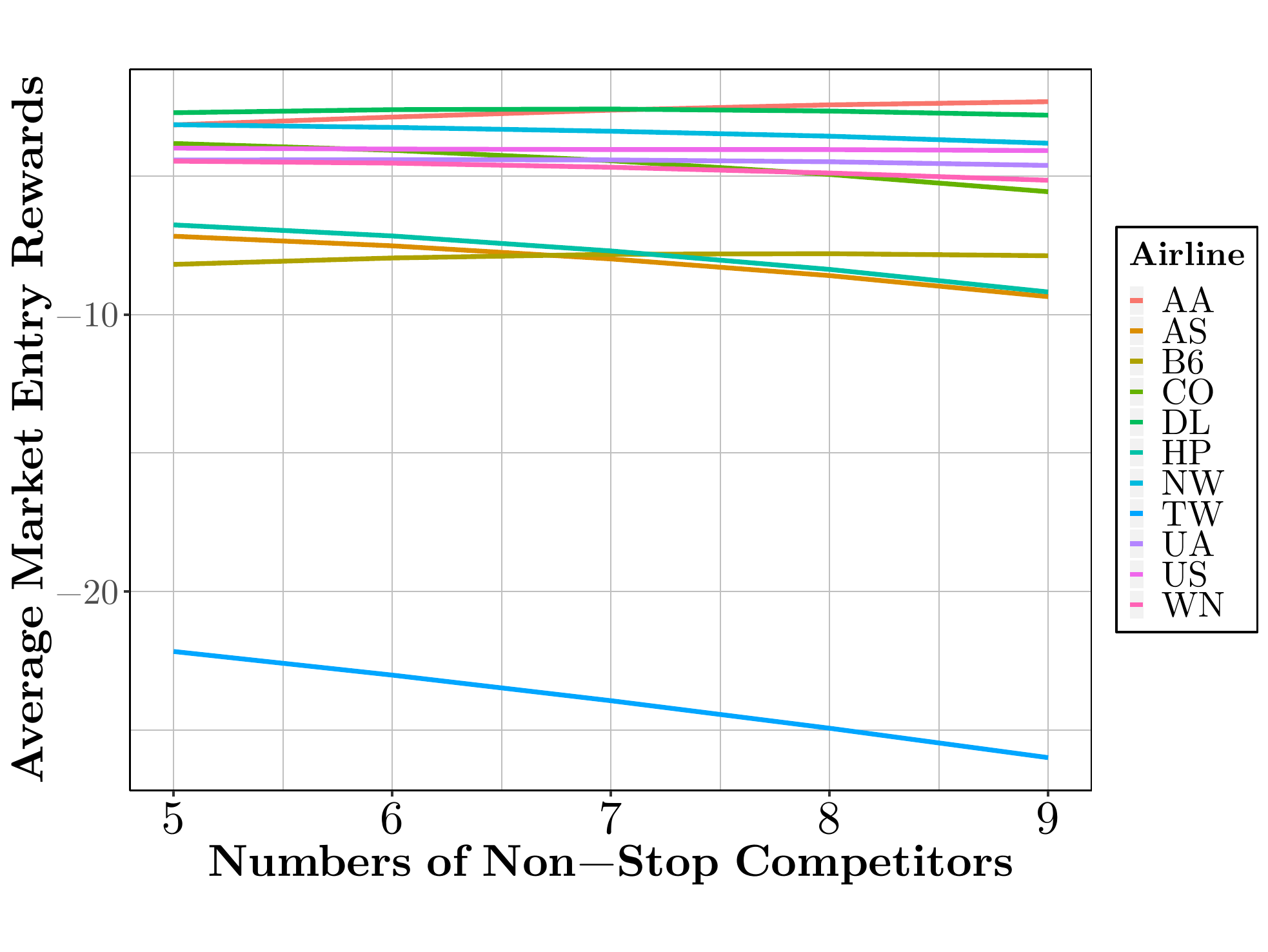}
    \caption{Average market entry rewards v.s. numbers of non-stop competitors.}
    \label{fig:reward}
\end{figure}

 \paragraph{Reward Estimation}
We implement the proposed PQR method on the airline dataset (including the recorded state variables and airline market entry history) to estimate the reward function. Intuitively, if an airline company decides to not enter a specific market, there will be no reward from this decision, which acts as the anchor action. 
We then apply the estimated reward function to each company for each potential market, to estimate the profit an airline company could make by entering a specific market.  
Figure~\ref{fig:airline-main} plots the top five PQR-estimated reward markets of American Airlines. The top five most profitable markets are related to Dallas, where American Airlines has its biggest hub. The hub can reduce the cost of entering connecting markets, and thus generate higher rewards~\citep{berry1996airline}, which is consistent with our empirical results.

\paragraph{Behaviour Prediction} We predict the decision-making of airline companies using the estimated reward function, and compare our method to SPL-GD in Figure~\ref{fig:airline-likelihood}. Our proposed method achieves a smaller negative log-likelihood than SPL-GD, reflecting a better prediction of the decision-making of airline companies.

\begin{figure}[t]
    \centering
    \includegraphics[scale = 0.35]{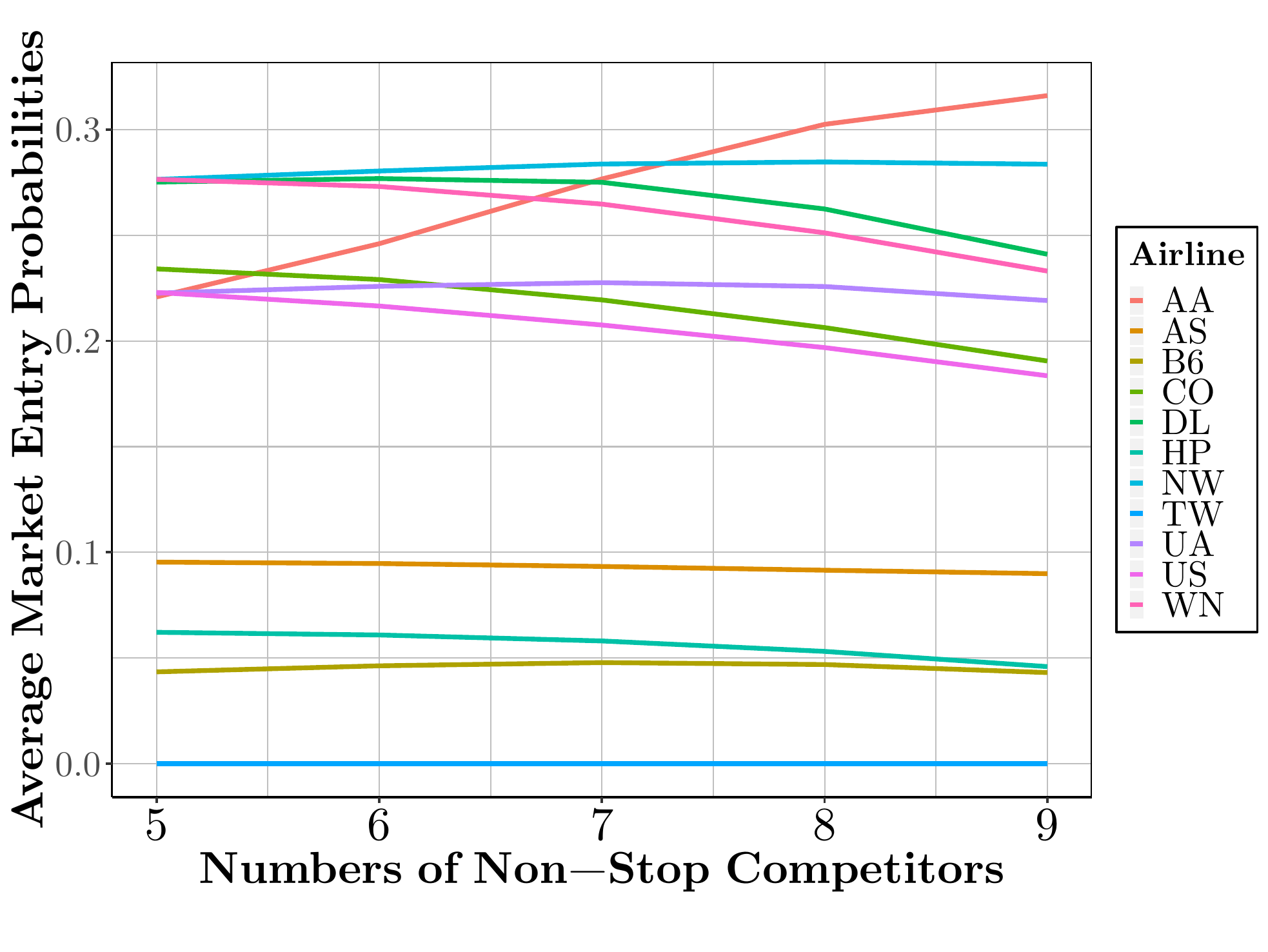}
    \caption{Average market entry probabilities v.s. numbers of non-stop competitors.}
    \label{fig:counterfactual}
\end{figure}
\paragraph{Counterfactual Analysis} We also conduct counterfactual analysis on the number of non-stop competitors using the estimated reward function. Specifically, we change the number of non-stop competitors of each market in year 2015 from $5$ to $9$. Then, we calculate the average entry reward and predict the average market entry likelihood over the $1770$ markets for each airline company. The results are summarized in Figure~\ref{fig:reward} and Figure~\ref{fig:counterfactual}.
Note that, as the number of competitors increases, most airline companies tend to choose not to enter the market. This is consistent with the analysis in \citet{berry2010tracing}. This trend is more significant for smaller airline companies, which experience harsher negative consequences from competition, given traditionally small profit margins. 

American Airlines is the only exception, as its chances of entering markets appear to increase with additional competitors. This phenomenon suggests that there exists unobserved confounders distorting the effect of the number of non-stop competitors. This motivates combining PQR with an instrumental variable (IV) identification strategy for modeling non-stop competition, which in principle would eliminate the confounder impact. See \citet{kuang2019mendelian, kuang2020ivy} for an example of an IV identification strategy. We leave the combination of IV strategies with PQR for future work.  

\section{Conclusions}
We propose a novel Policy-$Q$-Reward method to uniquely identify and estimate reward functions in IRL. 
We provide effective estimators that use the anchor-action assumption to recover the true underlying reward function, under both known and unknown transitions. 
We demonstrate the utility of PQR using both synthetic and real-world data. While we focus on reward function estimation, PQR can also be applied to imitation learning in the machine learning community and counterfactual analysis in the economics community.  

\section*{Acknowledgements}
The authors would like to thank Amazon Web Services for providing computational resources for the experiments in this paper. We acknowledge the Amazon Prime Economics team as well as Dr. Zongxi Li for very helpful discussions. Dr. Max Reppen is partly supported by the Swiss National Science Foundation grant SNF 181815.

\bibliography{example_paper}
\bibliographystyle{plainnat}

\newpage\clearpage\onecolumn
	\appendix
	\addcontentsline{toc}{section}{Appendices}

\section*{Supplements}
First we provide more detailed information about a closely-related economic model, dynamic discrete choice models, is deferred to Section~\ref{sec:sup-ddc}. Then, theoretical details including the proof and extra assumptions for our main results are provided in Section~\ref{sec:theoretical-results}. We provide extended experiment details in Section~\ref{sec:experiments}.

\section{Dynamic Discrete Choice Models}
\label{sec:sup-ddc}
In this Section, we detail the formulation of DDCs~\citep{rust1987optimal} to support our argument in Section~\ref{sec:related-models}. The results here are mainly a review of existing work. 
DDCs assume that agents make decisions following a Markov decision process described by the tuple $\curly{ \curly{\mathcal{S}, \mathcal{E}}, \mathcal{A},\textnormal{P},\gamma, r}$, where
\begin{itemize}
\item $\curly{\mathcal{S}, \mathcal{E}}$ denotes the space of state variables. 
\item $\mathcal{A}$ is a set of $J$ actions as the action space.
\item $r$ is the reward function.
\item $\gamma \in [0,1)$ is the discount factor. 
\item $\textnormal{P}$ denotes the transition distribution.
\end{itemize} 

At time point $t$, agents observe state variable $\bS_t \in \mathcal{S}$, and $\be_t = \curly{e_{t1}, e_{t2},\cdots, e_{tJ}}\in\mathcal{E}$. While $\bS_t$ is observable and recorded in the dataset, $\be_t$ is not observable in the dataset, and is only known by the agent when making decisions at time point $t$. 

The decision variable is defined as a $J \times 1$ vector, $\bA_t = \curly{A_{t1}, A_{t2}, \cdots, A_{tJ}}^\top \in \mathcal{A}$, satisfying
\begin{itemize}
    \item $\sum_{j=1}^JA_{tj} = 1 $,
    \item $A_{tj} \in \curly{0,1}$.
\end{itemize}
Thus, the decision is indeed a selection over $J$ choices.

The control problem agents are solving is formulated by the following value function: 
\eq{value}{
V^{DDC}(\bs,\bepsilon) = \max_{\curly{\ba_t}_{t=0}^{\infty}} \EE \left[\sum_{t=0}^\infty \gamma^{t} r(\bS_t, \be_t, \ba_t) \given \bs , \be \right].  
}

The reward function of DDCs is defined by decomposing over different actions. 
    For $\bs\in \mathcal{S}$, $\be \in \mathcal{E}$, and $\ba\in \mathcal{A}$, the reward function  $r(\bs, \bepsilon, \ba)$ is defined by
\eqs{
    r(\bs, \be, \ba) = r^*(\bs, \ba) + \ba^\top \be,  
}
where $r^*$ is the estimation target.

\subsection{Assumptions}
\label{sec:assumptions}
We first discuss the assumptions for DDCs:

\begin{assumption}
\label{asm:trainsition}
 The transition distribution from $\bS_t$ to $\bS_{t+1}$ is independent from $\be_t$ 
 \eqs{
 \textnormal{P}(\bS_{t+1} = \bs'\given \bS_t = \bs, \be_t = \be, \bA_t = \ba) = \textnormal{P}(\bS_{t+1} = \bs'\given \bS_t=\bs, \bA_t=\ba).
 }
\end{assumption}
\begin{assumption}
\label{asm:epsilon}
The $\be_t$ are independent and identically distributed (IID) according to a Type-I extreme value distribution (aka the Gumbel distribution). 
\end{assumption}

Note that $\bepsilon_t$ could also follow other parametric distributions. As suggested by \cite{arcidiacono2011practical}, Assumption~\ref{asm:epsilon} is virtually standard for all dynamic discrete choice models.  We use Type-I extreme value distribution as an example, other distributions follow a similar analysis.

\subsection{Likelihood for DDCs}

Next, we derive the likelihood of DDCs.
\begin{definition}[Conditional Value Function]
	\label{def:conditional-truncated-value-function}
	\eq{conditional-value-function}{
		Q^{DDC}(\bs, \ba) = r^*(\bA_t, \bS_t) + \gamma \int_{\mathcal{S}}  \bar{V}(\bs') f(\bS_{t+1} = \bs'\given \bS_{t}=\bs, \bA_t=\ba) d\bs', 
	}
	where 
	\eq{integrated-value-function}{
		\bar{V}(\bs) = \EE_{\epsilon}[V^{DDC}(\bs,\bepsilon)].
	}
\end{definition} 

\begin{lemma}
\label{lem:likelihood}
Let the assumptions in Section~\ref{sec:assumptions} be satisfied. The likelihood of observation $\mathds{X} = \curly{\bs_1, \ba_1, \bs_2, \ba_2, \cdots, \bs_T, \ba_T}$ becomes: 

\ali{likelihood}{
\text{L}^{DDC}(\mathds{X}) = \prod_{t=1}^{T-1} \textnormal{P}(\bS_{t+1} = \bs_{t+1} \given \bS_t = \bs_t, \bA_t = \ba_t) \prod_{t=1}^{T}\left[\exp\left\{ Q^{DDC}(\bs_t, \ba_t) \right\}/Z^{DDC}(\bs_t)\right],
}

where  
\alis{
	Z^{DDC}(\bs) = \sum_{\ba\in \mathcal{A}} \exp\left\{ v(\bs, \ba) \right\}.
}
\end{lemma}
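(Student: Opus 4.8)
The plan is to split the trajectory likelihood into a product of one-step state transitions and one-step conditional choice probabilities, and then to evaluate each choice probability in closed form using the Gumbel assumption.

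First I would condition on the initial state $\bs_1$ (which is why the stated likelihood carries no initial-state factor) and exploit the Markov structure together with Assumption~\ref{asm:trainsition}. The action $\bA_t$ is a deterministic function of $(\bS_t, \be_t)$, and since $\be_t$ is IID and, by Assumption~\ref{asm:trainsition}, the state transition does not depend on $\be_t$, the joint density factors as
\begin{equation*}
L^{DDC}(\mathds{X}) = \prod_{t=1}^{T} \textnormal{P}(\bA_t = \ba_t \given \bS_t = \bs_t) \prod_{t=1}^{T-1} \textnormal{P}(\bS_{t+1} = \bs_{t+1} \given \bS_t = \bs_t, \bA_t = \ba_t),
\end{equation*}
where $\be_t$ has been integrated out of the choice term. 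It then remains to identify the conditional choice probability $\textnormal{P}(\bA_t = \ba \given \bS_t = \bs)$.

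Next I would characterize the optimal policy. Substituting the reward decomposition $r(\bs, \be, \ba) = r^*(\bs, \ba) + \ba^\top \be$ into the Bellman equation associated with \eqref{eq:value}, and using Assumption~\ref{asm:trainsition} to write the continuation value as $\gamma \EE_{\bs'}[\bar{V}(\bs') \given \bs, \ba]$ with $\bar{V}$ as in \eqref{eq:integrated-value-function}, the value function separates into an action-specific part and the shock:
\begin{equation*}
V^{DDC}(\bs, \be) = \max_{\ba \in \mathcal{A}} \left[ Q^{DDC}(\bs, \ba) + \ba^\top \be \right].
\end{equation*}
Hence, given $(\bs, \be)$, the agent selects $\ba^* = \argmax_{\ba \in \mathcal{A}} [Q^{DDC}(\bs, \ba) + \ba^\top \be]$; because each $\ba$ is a one-hot selection vector, choosing the action whose $j$-th entry equals one amounts to $Q^{DDC}(\bs, \ba) + e_{tj}$ being the largest of these sums.

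The crux is then to integrate the indicator of this argmax event over the law of $\be$. This is exactly the classical McFadden computation: for IID Type-I extreme value shocks (Assumption~\ref{asm:epsilon}), one has $\textnormal{P}(v_j + e_{tj} \ge v_k + e_{tk} \text{ for all } k) = \exp(v_j)/\sum_k \exp(v_k)$. Carrying out this integral, with $v_j$ set to $Q^{DDC}(\bs, \ba)$ for the action selecting coordinate $j$, is the main obstacle; everything else is bookkeeping, and ties occur with probability zero since the Gumbel law is continuous. This yields
\begin{equation*}
\textnormal{P}(\bA_t = \ba \given \bS_t = \bs) = \frac{\exp\{Q^{DDC}(\bs, \ba)\}}{Z^{DDC}(\bs)}, \qquad Z^{DDC}(\bs) = \sum_{\ba \in \mathcal{A}} \exp\{Q^{DDC}(\bs, \ba)\},
\end{equation*}
so that the $v(\bs, \ba)$ appearing in the statement is precisely $Q^{DDC}(\bs, \ba)$. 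Substituting this expression into the factorization above gives the claimed likelihood, completing the argument.
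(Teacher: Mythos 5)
Your proof is correct. Note that the paper supplies no proof of this lemma at all: the supplement explicitly frames Section~\ref{sec:sup-ddc} as ``mainly a review of existing work'' and states the likelihood without derivation, deferring to the dynamic discrete choice literature~\citep{rust1987optimal}, so there is no in-paper argument to compare against. Your derivation --- conditioning on $\bs_1$, factoring the trajectory likelihood into transition and choice terms via the IID shock assumption together with Assumption~\ref{asm:trainsition}, establishing the separation $V^{DDC}(\bs,\be) = \max_{\ba \in \mathcal{A}}\left[Q^{DDC}(\bs,\ba) + \ba^\top\be\right]$ from the Bellman equation, and then evaluating the argmax probability by McFadden's logit integral under Assumption~\ref{asm:epsilon} --- is exactly the standard argument the paper implicitly invokes, and you also correctly resolve the notational slip in the statement: the $v(\bs,\ba)$ appearing in $Z^{DDC}$ should indeed read $Q^{DDC}(\bs,\ba)$.
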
  

Comparing \eqref{eq:likelihood} with the likelihood in Lemma~\ref{lem:PQR}, it is clear that DDC is a special case of the considered setting when the action space is discrete and $\alpha=1$.

\section{Extended Theoretical Results} 
In this section, we provide more information about our theoretical results including Lemma~\ref{lem:confound}, Theorem~\ref{thm:asym}, and Theorem~\ref{thm:nonasym}.
 
\label{sec:theoretical-results}
\subsection{Proof of  Lemma~\ref{lem:confound}}
\label{sec:confound-proof}
In this section, we prove Lemma~\ref{lem:confound}. By Theorem~\ref{thm:asym}, we know
\eq{temp-confound}{
r(\bs,\ba)=   Q(\bs,\ba) - \gamma  \mathds{E}\left[-\alpha \log(\pi^*(\bs', \ba^{A})) + Q(\bs', \ba^{A})\given \bs, \ba
\right]. 
}
In other words, we can recover the true reward function if the $Q$ input to \ref{eq:main} is not shaped. Next, we take an input shaped by $\phi(\bs)$:  
\eqs{
Q'(\bs,\ba) := Q(\bs,\ba) + \phi(\bs).
} 
The according reward estimator $r'(\bs, \ba)$ can be derived as 
\alis{
r'(\bs, \ba) &= Q'(\bs,\ba) - \gamma  \mathds{E}\left[-\alpha \log(\pi^*(\bs', \ba^{A})) + Q'(\bs', \ba^{A})\given \bs, \ba
\right] 
\\ &= Q(\bs,\ba) - \gamma  \mathds{E}\left[-\alpha \log(\pi^*(\bs', \ba^{A})) + Q(\bs', \ba^{A})\given \bs, \ba
\right] +\phi(\bs) - \gamma \EE \left[ \phi(\bs')\given \bs, \ba
\right].
}
Finally, by \eqref{eq:temp-confound} and the definition of $\Phi(\bs,\ba)$, we have
\eqs{
r'(\bs,\ba) = r(\bs,\ba) + \Phi(\bs,\ba).
}

\subsection{Theoretical Results for Theorem~\ref{thm:asym}}
\label{sec:thm1}
The details for the derivation of Theorem~\ref{thm:asym} is provided in this Section.  

\subsubsection{Auxiliary Lemmas for Theorem~\ref{thm:asym}}
To start with, we provide required lemmas.  
\begin{lemma}
\label{lem:bellman}
Following the definitions and Lemma~\ref{lem:PQR}, we can derive:

\eqs{
V(\bs) = \EE(Q(\bs,\bA)) + \alpha \mathcal{H}(\pi^*(\bs, \cdot )), 
}
where the expectation is over the action variable $\bA$ following the optimal policy  \eqref{eq:policy-s}.
\end{lemma}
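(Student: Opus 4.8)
The plan is to reduce the infinite-horizon identity to a single-step variational problem. First I would establish the soft Bellman recursion for $V$. Starting from the definition \eqref{eq:energy-control} and isolating the $t=0$ term, the dynamic programming principle gives
\[
V(\bs) = \max_\pi \EE_{\bA \sim \pi(\bs,\cdot)}\left[ r(\bs,\bA) + \gamma \EE_{\bs'}\left[V(\bs') \given \bs, \bA\right] \right] + \alpha \mathcal{H}(\pi(\bs,\cdot)),
\]
where I use that $\mathcal{H}(\pi(\bs,\cdot)) = \EE_{\bA\sim\pi}[-\log\pi(\bs,\bA)]$, so the entropy bonus at $t=0$ is evaluated at $\bS_0 = \bs$, while the tail from $t\ge 1$ collapses to $\gamma\,\EE_{\bs'}[V(\bs')]$ after reindexing. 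Recognizing the $Q$-Bellman identity $Q(\bs,\ba) = r(\bs,\ba) + \gamma\,\EE_{\bs'}[V(\bs')\given \bs,\ba]$ (stated after Definition~\ref{def:re} and immediate from the definition of $Q$ in Lemma~\ref{lem:PQR}), this becomes
\[
V(\bs) = \max_\pi \left\{ \EE_{\bA\sim\pi}[Q(\bs,\bA)] + \alpha \mathcal{H}(\pi(\bs,\cdot)) \right\}.
\]

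Second, I would solve this pointwise-in-$\bs$ maximization over probability densities $\pi(\bs,\cdot)$ on $\mathcal{A}$ via the Gibbs variational principle. Writing $F(\pi) := \EE_{\bA\sim\pi}[Q(\bs,\bA)] + \alpha\mathcal{H}(\pi(\bs,\cdot))$ and $Z(\bs) := \int_{\mathcal{A}} \exp\!\left(\tfrac{1}{\alpha} Q(\bs,\ba')\right)d\ba'$, a short calculation using $\log\pi^*(\bs,\ba) = \tfrac{1}{\alpha}Q(\bs,\ba) - \log Z(\bs)$ shows
\[
F(\pi) = \alpha\log Z(\bs) - \alpha\, D_{\mathrm{KL}}\!\left(\pi(\bs,\cdot)\,\middle\|\,\pi^*(\bs,\cdot)\right),
\]
where $\pi^*$ is exactly the energy-based policy \eqref{eq:policy-s}. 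Since the KL divergence is nonnegative and vanishes only at $\pi=\pi^*$, the maximizer is $\pi^*$ and the maximum value is $\alpha\log Z(\bs)$.

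Finally, because $\pi^*$ attains the maximum in the recursion above, I substitute it back to conclude $V(\bs) = F(\pi^*) = \EE_{\bA\sim\pi^*}[Q(\bs,\bA)] + \alpha\mathcal{H}(\pi^*(\bs,\cdot))$, which is the claim. I expect the main obstacle to be rigorously justifying the first step --- the soft Bellman recursion --- rather than the variational computation, which is routine. The decomposition of \eqref{eq:energy-control} requires the dynamic programming principle for the entropy-augmented objective together with measurability and integrability conditions that license interchanging the $\max$ with the expectation and guarantee finiteness of $Z(\bs)$; these are inherited from the setting of Lemma~\ref{lem:PQR}, i.e.\ \citet{haarnoja2017reinforcement}. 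A secondary care point is confirming that the density maximizing $F$ is admissible (normalizable), which is precisely where finiteness of $Z(\bs)$ enters.
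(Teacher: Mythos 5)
Your proof is correct, but it takes a genuinely different route from the paper's. The paper works directly on the infinite-horizon objective: since Lemma~\ref{lem:PQR} already asserts that the energy-based policy $\pi^*$ of \eqref{eq:policy-s} attains the maximum in \eqref{eq:energy-control}, the paper substitutes $\pi^*$ for the $\max_\pi$, peels the $t=0$ term off the sum, and recognizes the conditional tail sum as exactly the second term in the definition of $Q$ in Lemma~\ref{lem:PQR}; the claim follows in a few lines, with no Bellman recursion for $V$ and no variational calculation. You instead pass through the one-step problem: the soft dynamic programming principle, the identity $Q(\bs,\ba)=r(\bs,\ba)+\gamma\,\EE[V(\bs')\given\bs,\ba]$, and the Gibbs/KL decomposition $F(\pi)=\alpha\log Z(\bs)-\alpha D_{\mathrm{KL}}\left(\pi(\bs,\cdot)\,\middle\|\,\pi^*(\bs,\cdot)\right)$. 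Your route buys two things: it re-derives, rather than imports, the optimality and softmax form of $\pi^*$; and it yields $V(\bs)=\alpha\log Z(\bs)=\alpha\log\int_{\mathcal{A}}\exp\left(Q(\bs,\ba)/\alpha\right)d\ba$ as a by-product, which is precisely the paper's Lemma~\ref{lem:v} --- a result the paper proves afterwards as a consequence of the present lemma, so you have effectively established the two statements in reverse order. The cost is that your argument rests entirely on the dynamic programming principle (interchanging $\max_\pi$ with the conditional expectation), a heavier tool than the paper's direct sum-splitting, and you rightly flag it as the delicate step; note too that the $Q$-Bellman identity is not quite ``immediate'' from the definition of $Q$ --- it needs the same interchange --- though invoking it from \citet{haarnoja2017reinforcement}, as the paper itself does when proving Lemma~\ref{lem:alt}, keeps your argument non-circular.
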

\begin{proof}
To start with, we insert the optimal policy to the definition of $V$: 
\alis{
V(\bs) &=\max_\pi \sum_{t=0}^\infty \gamma^t \, \mathds{E}[
 r(\bS_t, \bA_t)+ \alpha \mathcal{H}(\pi( \bS_t, \cdot))  \given \bS_0 = \bs
]
\\ &=  \sum_{t=0}^\infty \gamma^t \, \mathds{E}[
 r(\bS_t, \bA_t)+ \alpha \mathcal{H}(\pi^*( \bS_t, \cdot))  \given \bS_0 = \bs].
}
Next, we take the case $t=0$ out of the summation, and derive
\alis{
V(\bs) &=  \mathds{E}[
 r(\bs, \bA_0)+ \alpha \mathcal{H}(\pi^*( \bs, \cdot))\given \bS_0 = \bs] 
 + \sum_{t=1}^\infty \gamma^t \, \mathds{E}[
 r(\bS_t, \bA_t)+ \alpha \mathcal{H}(\pi^*( \bS_t, \cdot))  \given \bS_0 = \bs]
 \\ &= \alpha \mathcal{H}(\pi^*( \bs, \cdot)) + \EE \left[
 r(\bs, \bA) \given \bS_0 = \bs
 \right]+ \mathds{E}\left\{\sum_{t=1}^\infty \left[ \gamma^t
 r(\bS_t, \bA_t)+ \gamma^t\alpha \mathcal{H}(\pi^*( \bS_t, \cdot)) \right] \given \bS_0 = \bs \right\}.
}

Finally, by the definition of $Q$ in Lemma~\ref{lem:PQR}, we can finish the proof:
\eqs{
V(\bs) = \EE[Q(\bs,\bA)] + \alpha \mathcal{H}(\pi^*(\bs, \cdot )).
}
\end{proof}
 
\begin{lemma}
\label{lem:v}
The value function \eqref{eq:energy-control} satisfies the following equation:
\eq{new-bell}{
V(\bs) =  \alpha \log \int_{\ba\in\mathcal{A}}\exp\left(\frac{Q(\bs,\ba)}{\alpha}\right) d\ba. 
}
\end{lemma}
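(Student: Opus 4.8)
The plan is to combine the two auxiliary lemmas already established, namely Lemma~\ref{lem:bellman}, which gives $V(\bs) = \EE[Q(\bs,\bA)] + \alpha \mathcal{H}(\pi^*(\bs,\cdot))$ with the expectation taken under the optimal policy, together with the explicit log-linear form of that policy from Lemma~\ref{lem:PQR}. The key observation is that the log-partition (normalizing) constant of the energy-based policy is exactly the soft value function, so the whole statement collapses to a log-sum-exp bookkeeping identity.

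First I would introduce the normalizing constant $Z(\bs) := \int_{\ba\in\mathcal{A}} \exp(Q(\bs,\ba)/\alpha)\, d\ba$, so that the optimal policy in \eqref{eq:policy-s} reads $\pi^*(\bs,\ba) = \exp(Q(\bs,\ba)/\alpha)/Z(\bs)$, and hence $\log \pi^*(\bs,\ba) = Q(\bs,\ba)/\alpha - \log Z(\bs)$. Substituting this expression into the definition of the entropy $\mathcal{H}(\pi^*(\bs,\cdot)) = -\int_{\mathcal{A}} \log(\pi^*(\bs,\ba))\,\pi^*(\bs,\ba)\,d\ba$, and using that $\pi^*(\bs,\cdot)$ integrates to one, I obtain $\alpha \mathcal{H}(\pi^*(\bs,\cdot)) = -\EE[Q(\bs,\bA)] + \alpha \log Z(\bs)$, where the expectation is again under $\pi^*$.

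Finally I would plug this into Lemma~\ref{lem:bellman}: the term $\EE[Q(\bs,\bA)]$ cancels against the $-\EE[Q(\bs,\bA)]$ arising from the entropy, leaving $V(\bs) = \alpha \log Z(\bs)$, which is precisely \eqref{eq:new-bell}. There is no substantive obstacle here; the argument is entirely a manipulation of the log-partition function. The only points that need care are, first, ensuring that the expectation in Lemma~\ref{lem:bellman} and the integral defining the entropy are understood under the same optimal policy $\pi^*$, so that the two $\EE[Q(\bs,\bA)]$ terms genuinely cancel, and second, that $Z(\bs)$ is finite so that $\log Z(\bs)$ is well-defined, which is guaranteed by working with continuous bounded $Q$ on the action space.
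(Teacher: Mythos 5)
Your proposal is correct and follows essentially the same route as the paper: both start from Lemma~\ref{lem:bellman}, substitute $\log \pi^*(\bs,\ba) = Q(\bs,\ba)/\alpha - \log Z(\bs)$ into the entropy term, and let the two $\EE[Q(\bs,\bA)]$ terms cancel to leave $V(\bs) = \alpha \log Z(\bs)$. The only difference is cosmetic — you package the normalizer as $Z(\bs)$ while the paper writes out the integrals explicitly.
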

\begin{proof}
By Lemma~\ref{lem:bellman}, we can derive the relationship between $Q$ and $V$ as 
\eqs{
V(\bs) = \EE[Q(\bs,\bA)] + \alpha \mathcal{H}(\pi^*(\bs, \cdot )),
}
where the expectation is over the action variable $\bA$ following the optimal policy  \eqref{eq:policy-s}. 
Next, by the definitions of expectation and information entropy, we can derive
\alis{
V(\bs) &= \int_{\ba\in \mathcal{A}} Q(\bs,\ba)\pi^*(\bs,\ba) d\ba - \alpha \int_{\ba\in \mathcal{A}} \log(\pi^*(\bs,\ba)) \pi^*(\bs,\ba) d\ba 
\\ &= \int_{\ba\in \mathcal{A}} Q(\bs,\ba)\pi^*(\bs,\ba) d\ba
 - \alpha \int_{\ba\in \mathcal{A}} \frac{Q(\bs,\ba)}{\alpha}  \pi^*(\bs,\ba) d\ba 
 \\&\quad + \alpha\int_{\ba\in \mathcal{A}} \log \left[
 \int_{\ba' \in \mathcal{A}}
 \exp\left(
 \frac{Q(\bs,\ba')}{\alpha}
 \right)d \ba'
 \right]\pi^*(\bs,\ba) d\ba
  \\&= \alpha \log \left[
 \int_{\ba' \in \mathcal{A}}
 \exp\left(
 \frac{Q(\bs,\ba')}{\alpha}
 \right)d \ba'
 \right]
 \\&= \alpha \log \left[
 \int_{\ba \in \mathcal{A}}
 \exp\left(
 \frac{Q(\bs,\ba)}{\alpha}
 \right)d \ba
 \right].
}

\end{proof}
As $\alpha$ approaches $0$, Lemma~\ref{lem:v} is consistent with the case with deterministic policies corresponding to $\alpha = 0$. We summarize the analysis in Remark~\ref{rem:1}

\begin{remark}
\label{rem:1}
As $\alpha$ approaches $0^+$, \eqref{eq:new-bell} is consistent with the Bellman equation with deterministic polices:
\eqs{
\lim_{\alpha \to 0^+} V(\bs) = \max_a Q'(\bs,\ba),
}
where 
\eqs{
Q'(\bs,\ba) = \lim_{\alpha \to 0^+} Q(\bs,\ba).
}

\end{remark}
\begin{proof}
To start with, we review the definition of $L^p$-norm of functions:
\begin{definition}
Given a measurable space $(\mathcal{A}, \mu)$ and a real number $p \in [1, \infty)$, the $L^p$-norm of a function  $f:\mathcal{A} \mapsto \mathds{R}$ is defined as
\eqs{
\norm{f}_{p} = \left(
\int_{ \mathcal{A}}\abs{f}^p d\mu
\right)^{1/p}
}

\end{definition}

Therefore, \eqref{eq:new-bell} can be represented by the $L^p$-norm of the function of $\ba$ $\exp(Q(\bs,\cdot))$ with $p = 1/\alpha$:
\alis{
V(\bs)
 &= \alpha \log \int_{\ba\in\mathcal{A}}\exp\left(\frac{Q(\bs,\ba)}{\alpha}\right) d\ba
 \\
  &=  \log\abs{ \int_{\ba\in\mathcal{A}}\exp\left(\frac{Q(\bs,\ba)}{\alpha}\right) d\ba}^{\alpha}
\\ &= 
\log\abs{ \int_{\ba\in\mathcal{A}}  \left[\exp\left(Q(\bs,\ba)
\right)\right]^{1/\alpha} d\ba}^{\alpha} 
\\ &= 
\log \norm{\exp(Q(\bs,\cdot))}_{1/\alpha}.
}
Then, we take limit to the both sides of $V(\bs)$ and derive
\eqs{
\lim_{\alpha \to 0^+} V(\bs)
 = \lim_{\alpha \to 0^+}\log \norm{\exp(Q(\bs,\cdot))}_{1/\alpha}=
\log \norm{\exp(Q'(\bs,\cdot))}_{\infty} = \norm{Q'(\bs,\cdot)}_{\infty},
}
with $Q'(\bs,\ba) = \lim_{\alpha \to 0^+} Q(\bs,\ba)$. Note that the second equation is true since both $\log$ and $\exp$ are monotonic functions.  

\end{proof}

\begin{lemma}
\label{lem:alt}
Let $\pi^*(\bs , \ba)$ be the optimal policy of agents, $Q(\bs,\ba)$ the ground-truth $Q$-function, and $r(\bs,\ba)$ the true reward. Under the formulation in Section~\ref{sec:PQR}, we have
\eqs{
Q(\bs,\ba) = r(\bs,\ba) + \gamma  \mathds{E}\left[-\alpha \log(\pi^*(\bs', \ba^{A})) + Q(\bs', \ba^{A})\given \bs, \ba
\right].
}
\end{lemma}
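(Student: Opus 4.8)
The plan is to derive the claimed identity for $Q(\bs,\ba)$ by combining the definition of the $Q$-function from Lemma~\ref{lem:PQR} with the value-function characterization established in Lemma~\ref{lem:bellman} and Lemma~\ref{lem:v}. The key observation is that the definition of $Q$ in Lemma~\ref{lem:PQR} already has a one-step recursive (Bellman-type) structure: pulling the $t=1$ term out of the infinite sum, we can write $Q(\bs,\ba) = r(\bs,\ba) + \gamma \EE[V(\bs') \given \bs, \ba]$, where $\bs'$ is the one-step-ahead state and $V$ is the value function in \eqref{eq:energy-control}. This is the standard soft Bellman equation and follows directly from the definitions, so I would state it first.

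The heart of the argument is then to express the inner value function $V(\bs')$ in terms of the anchor action $\ba^A$. For this I would invoke Lemma~\ref{lem:bellman}, which gives $V(\bs') = \EE[Q(\bs',\bA)] + \alpha \mathcal{H}(\pi^*(\bs',\cdot))$. The crucial step is to rewrite this expectation-plus-entropy expression evaluated at a \emph{single} action, namely $\ba^A$. Using the optimal energy-based policy form \eqref{eq:policy-s}, we have $\alpha \log \pi^*(\bs',\ba^A) = Q(\bs',\ba^A) - V(\bs')$, where the normalizing constant in the denominator of \eqref{eq:policy-s} is exactly $V(\bs')$ by Lemma~\ref{lem:v}. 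Rearranging this single identity yields $V(\bs') = -\alpha \log(\pi^*(\bs',\ba^A)) + Q(\bs',\ba^A)$, which is precisely the bracketed quantity appearing in the statement.

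I would then substitute this expression for $V(\bs')$ back into the soft Bellman equation from the first step, giving
\eqs{
Q(\bs,\ba) = r(\bs,\ba) + \gamma \EE\left[ -\alpha \log(\pi^*(\bs',\ba^A)) + Q(\bs',\ba^A) \given \bs, \ba \right],
}
which is the desired conclusion. The entire proof is essentially algebraic manipulation of the policy formula \eqref{eq:policy-s} together with Lemma~\ref{lem:v}.

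The main subtlety—rather than a genuine obstacle—is establishing the identity $\alpha \log \pi^*(\bs',\ba^A) = Q(\bs',\ba^A) - V(\bs')$ cleanly. The work here is to recognize that the log-normalizer $\alpha \log \int_{\ba'} \exp(Q(\bs',\ba')/\alpha)\, d\ba'$ appearing when one takes the logarithm of \eqref{eq:policy-s} is exactly $V(\bs')$, which is the content of Lemma~\ref{lem:v}. Once that correspondence is made explicit, the rest is a direct rearrangement. I would be careful to note that this identity holds pointwise at the anchor action $\ba^A$ and does not require any special property of $\ba^A$ beyond being a valid action, so the representation is legitimate for the specific choice $\ba = \ba^A$ used here.
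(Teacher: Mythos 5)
Your proposal is correct and takes essentially the same route as the paper's proof: both rest on the soft Bellman equation $Q(\bs,\ba) = r(\bs,\ba) + \gamma\,\EE\left[V(\bs') \given \bs,\ba\right]$ (which the paper cites from Theorem~1 of \citet{haarnoja2017reinforcement}) combined with the identity $V(\bs') = -\alpha \log(\pi^*(\bs',\ba^{A})) + Q(\bs',\ba^{A})$, obtained from the policy formula \eqref{eq:policy-s} together with Lemma~\ref{lem:v}. Your way of getting that identity---taking logarithms of \eqref{eq:policy-s} and recognizing the log-normalizer as $V(\bs')$---is algebraically the same manipulation the paper performs by factoring $\exp\left(Q(\bs,\ba^{A})/\alpha\right)$ out of the integral in its expression for $V$.
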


\begin{proof}
According to Lemma~\ref{lem:PQR}, we have
\eqs{
    \pi^*(\bs , \ba) = \frac{ \exp {\left(
\frac{1}{\alpha}
Q(\bs, \ba)\right)} }{ \int_{\ba' \in \mathcal{A}}\exp {\left(
\frac{1}{\alpha}
Q(\bs, \ba')\right)} d\ba'}   .
}
By Lemma~\ref{lem:v}, we have
\eq{v}{
V(\bs) = \alpha \log \int_{\ba\in\mathcal{A}}\exp\left(\frac{Q(\bs,\ba)}{\alpha}\right) d\ba. 
}
For the next step, we consider a specific action $\ba^A$, and extract $ \alpha \log \left [\exp \left(\frac{Q(\bs,\ba^A)}{\alpha}\right)\right]$ from  \eqref{eq:v}:
\ali{v-al}{
V(\bs) &= \alpha \log \left[ \frac{\int_{\ba\in\mathcal{A}}\exp\left(\frac{Q(\bs,\ba)}{\alpha}\right) d \ba }{ \exp \left(\frac{Q(\bs,\ba^A)}{\alpha}\right)}
\right] + \alpha \log \left [\exp \left(\frac{Q(\bs,\ba^A)}{\alpha}\right)\right]
\\ & =  \alpha \log\left(
\frac{1}{\pi^*(\bs,\ba^A)}
\right) + Q(\bs, \ba^A)
\\ &=  -\alpha \log\left(\pi^*(\bs,\ba^A)\right) + Q(\bs,\ba^A).
}

According to Theorem~1 in \citet{haarnoja2017reinforcement}, we have
\eq{bell-q}{
    Q(\bs,\ba) = r(\bs,\ba) + \gamma \EE\left[
    V(\bs') \given \bs, \ba
    \right]
}

Finally, by taking \eqref{eq:v-al} into \eqref{eq:bell-q}, we prove the result. 

\end{proof}

\begin{lemma}
\label{lem:fix}
Let $\pi^*(\bs , \ba)$ be the optimal policy of agents, $Q(\bs,\ba)$ the ground-truth $Q$-function, and $r(\bs,\ba)$ the true reward. Define $\mathcal{T}$ as an operator on the set of continuous bounded functions $f: \mathcal{S} \longmapsto \mathds{R}$:
\eq{t-star}{
    \mathcal{T}f(\bs) := g(\bs)+\gamma \EE{\left[ -\alpha \log(\pi^*(\bs', \ba^{A})) + f(\bs') \given \bs, \ba^{A} \right]}.
}
Under the formulation in Section~\ref{sec:PQR} and Assumption~\ref{asm:identification}, $Q^A(\bs):=Q(\bs,\ba^A)$ is the unique solution to 
\eq{fixed}{
f(\bs) = \mathcal{T}f(\bs).
}
\end{lemma}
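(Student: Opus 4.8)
The plan is to prove that $Q^A(\bs) := Q(\bs,\ba^A)$ is the unique solution to $f = \mathcal{T}f$ by splitting the argument into two parts: (i) showing $Q^A$ is a fixed point, and (ii) showing the fixed point is unique. For part (i), I would invoke Lemma~\ref{lem:alt}, which states $Q(\bs,\ba) = r(\bs,\ba) + \gamma\,\EE[-\alpha\log(\pi^*(\bs',\ba^A)) + Q(\bs',\ba^A)\given \bs,\ba]$. Specializing this identity to the anchor action $\ba = \ba^A$ and using Assumption~\ref{asm:identification}, which gives $r(\bs,\ba^A) = g(\bs)$, I obtain
\eqs{
Q^A(\bs) = g(\bs) + \gamma\,\EE\left[-\alpha\log(\pi^*(\bs',\ba^A)) + Q^A(\bs') \given \bs, \ba^A\right] = \mathcal{T}Q^A(\bs).
}
This shows immediately that $Q^A$ satisfies \eqref{eq:fixed}, so the only remaining work is uniqueness.

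For part (ii), the natural approach is to show that $\mathcal{T}$ is a contraction mapping on the Banach space $C(\mathcal{S})$ of continuous bounded functions equipped with the supremum norm, after which the Banach fixed-point theorem delivers both existence and uniqueness of the fixed point. The key observation is that the term $g(\bs) + \gamma\,\EE[-\alpha\log(\pi^*(\bs',\ba^A)) \given \bs,\ba^A]$ does not depend on $f$, so for any two functions $f_1, f_2 \in C(\mathcal{S})$,
\eqs{
\abs{\mathcal{T}f_1(\bs) - \mathcal{T}f_2(\bs)} = \gamma\,\abs{\EE\left[f_1(\bs') - f_2(\bs')\given \bs, \ba^A\right]} \leq \gamma\,\EE\left[\abs{f_1(\bs') - f_2(\bs')}\given \bs, \ba^A\right] \leq \gamma\,\Norm{f_1 - f_2}_\infty.
}
Taking the supremum over $\bs$ yields $\Norm{\mathcal{T}f_1 - \mathcal{T}f_2}_\infty \leq \gamma\,\Norm{f_1 - f_2}_\infty$, and since $\gamma \in (0,1)$ the map is a contraction. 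I would also need to verify that $\mathcal{T}$ maps $C(\mathcal{S})$ into itself, i.e. that $\mathcal{T}f$ is bounded and continuous whenever $f$ is; boundedness follows if $g$ and $-\alpha\log\pi^*(\cdot,\ba^A)$ are bounded, and continuity follows under mild regularity of the transition kernel.

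The main obstacle I anticipate is not the contraction estimate itself, which is routine, but rather verifying the technical preconditions needed to apply the Banach fixed-point theorem cleanly. Specifically, one must ensure that $\mathcal{T}$ genuinely preserves the space $C(\mathcal{S})$ of \emph{bounded} continuous functions — this requires that the entropy-like term $-\alpha\log(\pi^*(\bs',\ba^A))$ and the anchor reward $g(\bs)$ are bounded, and that the conditional expectation operator is well behaved (e.g. Feller-type continuity of the transition $\textnormal{P}(\cdot\given\bs,\ba^A)$). In the idealized setting of this lemma the expectation $\EE$ is exact rather than estimated, which removes sampling concerns, so these boundedness and measurability conditions are the only genuinely delicate points; I would state them as the standing regularity assumptions of Section~\ref{sec:PQR} and then let the contraction argument finish the proof.
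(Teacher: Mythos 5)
Your proposal is correct and takes essentially the same route as the paper: both show $Q^A$ is a fixed point by specializing Lemma~\ref{lem:alt} to $\ba = \ba^A$ (with Assumption~\ref{asm:identification} supplying $r(\bs,\ba^A) = g(\bs)$), and both obtain uniqueness from the contraction property of $\mathcal{T}$ on bounded continuous functions. The only difference is in how the contraction is verified --- the paper cites Blackwell's monotonicity-and-discounting sufficient conditions, whereas you compute the sup-norm estimate directly (immediate here since $\mathcal{T}$ is affine in $f$) and you are more explicit about the boundedness and Feller-type regularity needed for $\mathcal{T}$ to map $C(\mathcal{S})$ into itself, a point the paper leaves implicit.
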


\begin{proof}
It is obvious that $\mathcal{T}$ satisfies the monotonicity and discounting condition. Therefore, we can conclude that $\mathcal{T}$ is a contraction, and \eqref{eq:fixed} has the unique solution. 

On the other hand, by taking $\ba = \ba^A$ to Lemma~\ref{lem:alt}, we have
\eq{qa}{
Q^A(\bs) = \mathcal{T}Q^A(\bs).
}
Thus, $Q^A(\bs)$ is the unique solution.
\end{proof}

\subsubsection{Proof of Theorem~\ref{thm:asym}}
\begin{proof}
Since the transition probabilities are given, and $\hat{\pi}(\bs,\ba)$ is accurate, we have $\hat{\EE} = \EE $, $\hat{\pi} = \pi^*$, and thus $\hat{\mathcal{T}} = \mathcal{T}$. According to Lemma~\ref{lem:fix}, \ref{eq:contraction} (denoted as $\hat{Q}^A(\bs)$) uniquely recovers the true $Q^A(\bs)$:
\eqs{
\hat{Q}^A(\bs) = Q^A(\bs).
}
By Lemma~\ref{lem:PQR}, we have
\eqs{
\pi^*(\bs,\ba) = \frac{\exp\left( \frac{Q(\bs,\ba)}{\alpha}\right)}{
\int_{\ba' \in \mathcal{A}} \exp\left(
\frac{Q(\bs,\ba')}{\alpha}
\right) d\ba'
}.
}

Then, we have
\alis{
    &\log(\hat{\pi}(\bs,\ba)) - \log(\hat{\pi}(\bs,\ba^{A}))
    \\ &= \log\left[\exp\left(\frac{Q(\bs,\ba)}{\alpha}\right)\right] -\log\left[\exp\left(\frac{Q^A(\bs)}{\alpha}\right)\right]
    \\ &\quad + \log\left(
\int_{\mathcal{A}} \exp(\frac{Q(\bs, \ba)}{\alpha}) d \ba
\right) - \log\left(
\int_{\mathcal{A}} \exp(\frac{Q(\bs, \ba)}{\alpha}) d \ba
\right) 
    \\&= \frac{1}{\alpha}(Q(\bs,\ba) - Q^A(\bs)),
}
suggesting 
\eqs{\alpha \log(\hat{\pi}(\bs,\ba)) - \alpha \log(\hat{\pi}(\bs,\ba^{A})) + \hat{Q}^A(\bs) = Q(\bs,\ba).}
Therefore, \ref{eq:q} recovers the true $Q$-function:
\eq{qt}{
\hat{Q}(\bs,\ba) = Q(\bs,\ba).
}
Finally, by taking \eqref{eq:qt} into \ref{eq:main}, we derive
\alis{
\hat{r}(\bs, \ba) &= \hat{Q}(\bs, \ba)- \gamma \hat{\mathds{E}}\left[-\alpha \log(\hat{\pi}(\bs', \ba^{A})) + \hat{Q}(\bs', \ba^{A})\given \bs, \ba
\right] 
\\ &= {Q}(\bs, \ba)- \gamma {\mathds{E}}\left[-\alpha \log({\pi}^*(\bs', \ba^{A})) + {Q}(\bs', \ba^{A})\given \bs, \ba
\right],
}
where $\hat{\pi} = \pi^*$ according to the setting of Theorem~\ref{thm:asym}. Finally, by Lemma~\ref{lem:alt}, we have
\eqs{
\hat{r}(\bs, \ba) = {Q}(\bs, \ba)- \gamma {\mathds{E}}\left[-\alpha \log({\pi}^*(\bs', \ba^{A})) + {Q}(\bs', \ba^{A})\given \bs, \ba
\right] = r(\bs,\ba).
}

\end{proof}

\subsection{Theoretical Results for Theorem~\ref{thm:nonasym}}
\label{sec:proof-nonasym}

In this Section, we provide all the assumptions required by Theorem~\ref{thm:nonasym}, and prove Theorem~\ref{thm:nonasym}.  
\subsubsection{Extra Definitions and Assumptions for Theorem~\ref{thm:nonasym}}
\label{sec:extra-assm}
We first list the notions required for the proof of Theorem~\ref{thm:nonasym}. To start with we define $\bar{r}$ and $\bar{Q}$ with $\hat{\pi}$ and the exact expectation $\EE$. 
\begin{definition}
\label{def:q-bar}
We use $Q_k$ to denote the result of Algorithm~\ref{alg:fqii} after the $k^{\text{th}}$ iteration. Accordingly, $Q_k^{A}(\bs) = Q_k(\bs,\ba^A)$.
Define $\bar{Q}^A(\bs)$ as the solution to a fixed point to the operator $\bar{\mathcal{T}}$:
\eqs{
\bar{\mathcal{T}} \bar{Q}^{A}(\bs) = \bar{Q}^{A}(\bs),
}with
\eqs{
    \bar{\mathcal{T}} \bar{Q}^A(\bs) =g(\bs)+\gamma \EE{\left[ -\alpha \log(\hat{\pi}(\bs', \ba^{A})) + \bar{Q}^A(\bs') \given \bs, \ba^{A} \right]},
} 
with an exact expectation and the estimated policy. 
\end{definition}

\begin{definition}
\label{def:reward}
Denote the estimated reward function by Algorithm~\ref{alg:reward} as $\hat{r}(\bs, \ba)$. Define 
\eqs{
\bar{r}(\bs,\ba) :=\hat{Q}(\bs, \ba)- \gamma \mathds{E}{\left[-\alpha \log(\hat{\pi}(\bs', \ba^{A})) + \hat{Q}(\bs', \ba^{A})\given \bs, \ba
\right]},
}
with an exact expectation and the estimated policy. 
\end{definition}

\begin{definition}[Definition 5.1 in \citet{arora2019fine}]
A distribution is $(\lambda, P, n)$-non-degenerate if for $n$ IID samples $\curly{(\bx_i, y_i)}_{i=1}^n$ with $\bx_i \in\mathcal{S}\times \mathcal{A}$, $\lambda_{min}(\bH^{\infty})\geq \lambda \geq 0$ with probability at least $1-P$. $\lambda_{min}$ denotes the smallest eigen value. $\bH^{\infty}$ is defined in Theorem~\ref{thm:nonasym}. \end{definition}

\begin{definition}
\label{def:sample}
$\curly{(\bx_i, y_i)}_{i=1}^n$ follows a $(\lambda, \frac{P}{3}, n)$-non-degenerate distribution $\mathcal{D}$, and $\curly{(\bx_i, y^k_i)}_{i=1}^n$ follows a $(\lambda, \frac{P}{3}, n)$-non-degenerate distribution $\mathcal{D}^k$. We use subscripts like $D_{\bs}$ to denote the marginal distribution of the state variables according to $\mathcal{D}$.
\end{definition}
It can be noticed that \ref{def:sample} is trying to quantify the quality of sample generation process. In practice, it is very hard to exactly derive $\lambda$.  

\begin{definition}
\label{def:rho}
Define 
\eqs{
\rho_{k}(\bs) :=  g(\bs)+\gamma \EE\left[ -\alpha \log(\hat{\pi}(\bs', \ba^{A})) + {Q}^A_{k-1}(\bs') \given \bs, \ba^{A} \right] -{Q}^A_k(\bs)
}
and 
\eqs{\epsilon_{Max} := \max_{k \in [N]} \left[ \int_{\mathcal{S}} \abs{\rho_{k}(\bs)}^2 d\mathcal{D}_{\bs}(\bs) \right]^{1/2}.} 
\end{definition}

\begin{definition}
\label{def:m}
We use $m$ to denote the number of neurons in the hidden layer of neural networks in Algorithm~\ref{alg:reward} and Algorithm~\ref{alg:fqii}. 
\end{definition}

To quantify the generalization errors of neural networks, we require the following assumptions. 
\begin{assumption}
\label{asm:iterations}
The numbers of training iterations of neural networks in Algorithm~\ref{alg:reward} and Algorithm~\ref{alg:fqii} equal to $\Omega\left(\frac{1}{\eta \lambda} \log \frac{n}{P}\right)$, where $\eta>0$ is the learning rate. $m$ in Definition~\ref{def:m} satisfies $m \geq c^{-2} \textnormal{poly}(n, \lambda^{-1}, \frac{3}{P})$, where $c = O\left(
\frac{\lambda P}{n}
\right)$.
\end{assumption}

\begin{assumption}
\label{asm:deep-train}
The neural networks in Algorithm~\ref{alg:reward} and Algorithm~\ref{alg:fqii} are two-layer ReLU networks trained by the randomly initialized gradient descent.

\end{assumption}

Further, for the MDP in Algorithm~\ref{alg:fqii}, we pose the following assumptions commonly used for the FQI method~\citep{munos2008finite, yang2019theoretical}. 

\begin{assumption}
\label{asm:R}
The true reward function can be bounded by $\max_{\bs, \ba}\abs{r(\bs,\ba)}\leq R$.
\end{assumption}
 \begin{assumption}
 \label{asm:fqi}
Define the operator $\bar{\mathcal{T}}$  on the set of continuous bounded functions $f: \mathcal{S} \longmapsto \mathds{R}$ 
\eqs{
    \bar{\mathcal{T}}f(\bs) := g(\bs)+\gamma {\EE}{\left[ -\alpha \log(\hat{\pi}(\bs', \ba^{A})) + f(\bs') \given \bs, \ba^{A} \right]}.
}
Define the concentration coefficient as 
\eqs{
\kappa(m) = \left[ \EE_{\mathcal{D}_{\bs}} \abs{\frac{d\bar{\mathcal{T}}^m \mathcal{D}_{\bs}}{d\mathcal{D}_{\bs} }}^2  \right]^{1/2}.
}
We assume that $(1-\gamma) \sum_{m\geq 1} \gamma^{m-1} \kappa(m) \leq \psi$. 
\end{assumption}

\subsubsection{Auxiliary Lemmas for Theorem~\ref{thm:nonasym}}

\begin{lemma}[Hoeffding's Inequality]
\label{lem:hoeffding}
Let $X_1,X_2,\cdots,X_n$ be $n$ IID~random variables drawn from distribution $\mathcal{D}$, with  $0 \leq X_i \leq a$, $\forall i \in \curly{1,2,\cdots,n}$. Let $\bar{X} := \frac{1}{n}\sum_{i=1}^n X_i$. Then, for any $t >0$, 
\begin{equation*}
    \text{P}(\abs{\bar{X} - \EE[\bar{X}] }\geq t ) \leq 2\exp{\left(-\frac{2nt^2}{a^2} \right)}.
\end{equation*}
\end{lemma}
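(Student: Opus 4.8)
The plan is to prove this classical concentration bound by the exponential-moment (Chernoff) method, reducing the whole statement to a single moment-generating-function estimate for one bounded, centered variable. First I would center the summands: set $Y_i := X_i - \EE[X_i]$, so each $Y_i$ has mean zero and takes values in an interval of width $a$, namely $[-\EE[X_i],\, a - \EE[X_i]]$. Writing $S := \sum_{i=1}^n Y_i = n\paran{\bar{X} - \EE[\bar{X}]}$, the target becomes a two-sided tail bound on $S$. For the upper tail and any $s > 0$, I would apply Markov's inequality to the nonnegative variable $e^{sS}$,
\eqs{\text{P}(S \geq nt) = \text{P}\paran{e^{sS} \geq e^{snt}} \leq e^{-snt}\, \EE\!\left[e^{sS}\right],}
and then use independence of the $X_i$ (hence of the $Y_i$) to factor the moment generating function as $\EE[e^{sS}] = \prod_{i=1}^n \EE[e^{sY_i}]$.

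The key ingredient, and the main obstacle, is the per-variable bound $\EE[e^{sY_i}] \leq e^{s^2 a^2 / 8}$, valid for every mean-zero $Y_i$ supported in an interval of width $a$ (Hoeffding's lemma). To establish it I would introduce the cumulant generating function $\psi_i(s) := \log \EE[e^{sY_i}]$, noting $\psi_i(0) = 0$ and $\psi_i'(0) = \EE[Y_i] = 0$. Differentiating twice shows that $\psi_i''(s)$ equals the variance of $Y_i$ computed under the exponentially tilted law with density proportional to $e^{sY_i}$; since that law is still supported in an interval of width $a$, Popoviciu's inequality gives $\psi_i''(s) \leq a^2/4$. A second-order Taylor expansion with vanishing constant and linear terms then yields $\psi_i(s) \leq s^2 a^2 / 8$, i.e.\ the claimed bound. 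This is the only delicate step; it is where the hypothesis $0 \leq X_i \leq a$ is genuinely used and where the constant $8$ originates.

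Combining the factorization with the per-variable bound gives $\text{P}(S \geq nt) \leq e^{-snt + n s^2 a^2 / 8}$, which holds for all $s > 0$. I would then optimize the exponent over $s$: the minimizer is $s = 4t/a^2$, and substituting it collapses the exponent to $-2nt^2/a^2$, producing the one-sided bound $\text{P}\paran{\bar{X} - \EE[\bar{X}] \geq t} \leq e^{-2nt^2/a^2}$. Running the identical argument with $-Y_i$ in place of $Y_i$ (the interval still has width $a$, so Hoeffding's lemma applies verbatim) yields the matching lower-tail estimate $\text{P}\paran{\bar{X} - \EE[\bar{X}] \leq -t} \leq e^{-2nt^2/a^2}$. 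A union bound over these two one-sided events then supplies the factor of $2$ and gives
\eqs{\text{P}\paran{\abs{\bar{X} - \EE[\bar{X}]} \geq t} \leq 2\exp\paran{-\frac{2nt^2}{a^2}},}
which is exactly the stated inequality. Everything apart from Hoeffding's lemma is a mechanical Chernoff-bound-and-optimize computation.
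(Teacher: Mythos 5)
Your proof is correct in every step: the centering, the Chernoff--Markov reduction, the factorization of the moment generating function by independence, the proof of Hoeffding's lemma via the second derivative of the cumulant generating function (the tilted-measure variance bounded by Popoviciu's inequality, giving $\psi_i''(s) \leq a^2/4$ and hence $\psi_i(s) \leq s^2a^2/8$ by Taylor expansion), the optimization at $s = 4t/a^2$ yielding the exponent $-2nt^2/a^2$, and the union bound over the two tails supplying the factor of $2$. For comparison, the paper offers no proof at all --- it states this as the classical Hoeffding inequality and uses it as a black box in the proof of Lemma~\ref{lem:fqi2} --- so your write-up is simply the standard textbook argument, correctly executed, filling in what the paper takes as known.
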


To start with, we consider the errors induced by Algorithm~\ref{alg:reward} by studying the difference between $\hat{r}(\bs,\ba)$ and $\bar{r}(\bs,\ba)$. 
\begin{lemma}
\label{lem:deep}

With Definition~\ref{def:reward} and \ref{def:sample}, under Assumption~\ref{asm:deep}, \ref{asm:deep-train}, and \ref{asm:iterations}, we have
\alis{
\EE_{(\bs,\ba)\sim \mathcal{D_{\bs,\ba}}} [\abs{\hat{r}(\bs,\ba) - \bar{r}(\bs,\ba)}] 
\leq \sqrt{\frac{2\by^T(\bH^\infty)^{-1} \by}{n}} + O{\left(
\sqrt{\frac{\log(\frac{n}{\lambda P})}{n}} 
\right)},
}
with probability at least $1-P$. 
\begin{proof}
Note that the difference between $\hat{r}$ and $\bar{r}$ is no more than the estimation error of the expectation in $\hat{r}$, which is conducted by neural networks. Therefore, we apply Theorem 5.1 in \citet{arora2019fine}. 
\end{proof}
\end{lemma}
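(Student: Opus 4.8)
The plan is to reduce the claimed bound to a standard generalization guarantee for the neural network $h$ trained in Algorithm~\ref{alg:reward}, exploiting the fact that $\hat{r}$ and $\bar{r}$ differ \emph{only} in how the one-step-ahead expectation is evaluated. By line~9 of Algorithm~\ref{alg:reward} we have $\hat{r}(\bs,\ba) = \hat{Q}(\bs,\ba) - \gamma h(\bs,\ba)$, whereas Definition~\ref{def:reward} sets $\bar{r}(\bs,\ba) = \hat{Q}(\bs,\ba) - \gamma \mu^\star(\bs,\ba)$ with the exact conditional mean $\mu^\star(\bs,\ba) := \EE[-\alpha\log(\hat{\pi}(\bs',\ba^{A})) + \hat{Q}(\bs',\ba^{A}) \given \bs,\ba]$. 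Subtracting, the common $\hat{Q}(\bs,\ba)$ terms cancel, and since $\gamma \in (0,1)$ I would obtain the pointwise identity $\abs{\hat{r}(\bs,\ba) - \bar{r}(\bs,\ba)} = \gamma\abs{h(\bs,\ba) - \mu^\star(\bs,\ba)} \leq \abs{h(\bs,\ba) - \mu^\star(\bs,\ba)}$. Thus the $\hat{Q}$ estimate and the policy error play no role in this particular gap; only the network's approximation of the conditional expectation matters.

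Next I would identify the underlying regression problem. The labels $y_i = -\alpha\log(\hat{\pi}(\bs_{i+1},\ba^{A})) + \hat{Q}(\bs_{i+1},\ba^{A})$ are, by Assumption~\ref{asm:deep}, IID draws whose conditional expectation given $\bx_i = (\bs_i,\ba_i)$ is precisely $\mu^\star(\bx_i)$; hence $\mu^\star$ is the Bayes-optimal target and $h$ is its neural-network estimate. By Jensen's inequality, $\abs{h(\bx) - \mu^\star(\bx)} = \abs{\EE[\,h(\bx) - y \given \bx\,]} \leq \EE[\,\abs{h(\bx) - y} \given \bx\,]$; taking the expectation over $(\bs,\ba)\sim\mathcal{D}_{\bs,\ba}$ and chaining with the first paragraph gives $\EE_{(\bs,\ba)\sim\mathcal{D}_{\bs,\ba}}[\abs{\hat{r} - \bar{r}}] \leq L_{\mathcal{D}}(h)$, where $L_{\mathcal{D}}(h) := \EE_{(\bx,y)}[\abs{h(\bx) - y}]$ is the population risk of $h$ under the ($1$-Lipschitz) $\ell_1$ loss. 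The problem is thereby reduced entirely to bounding $L_{\mathcal{D}}(h)$.

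I would then invoke the neural-tangent-kernel generalization theorem directly. Assumption~\ref{asm:deep-train} ($h$ is a two-layer ReLU network trained by randomly initialized gradient descent), Assumption~\ref{asm:iterations} (iteration count $\Omega(\frac{1}{\eta\lambda}\log\frac{n}{P})$ and width $m \geq c^{-2}\,\textnormal{poly}(n,\lambda^{-1},\frac{3}{P})$), and Definition~\ref{def:sample} (the sample $\curly{(\bx_i,y_i)}_{i=1}^n$ is drawn from a $(\lambda,\frac{P}{3},n)$-non-degenerate distribution) together meet all hypotheses of Theorem~5.1 of \citet{arora2019fine}. That theorem bounds $L_{\mathcal{D}}(h)$ by $\sqrt{2\by^T(\bH^\infty)^{-1}\by / n} + O(\sqrt{\log(n/(\lambda P))/n})$ with probability at least $1-P$, and combined with the reduction above this is exactly the stated bound.

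The main obstacle is verifying that Theorem~5.1 of \citet{arora2019fine} genuinely applies to this regression, since it is stated for a bounded, $1$-Lipschitz loss. The $\ell_1$ loss is $1$-Lipschitz in its first argument, but boundedness of $\abs{h(\bx)-y}$ must be secured, which requires the labels $y$ to be bounded; I would derive this from Assumption~\ref{asm:R} (a bounded reward forces $\hat{Q}$ to be bounded through the soft Bellman recursion) together with the policy-probability clipping noted after Algorithm~\ref{alg:fqii}, which keeps $\log(\hat{\pi}(\bs',\ba^{A}))$ finite. I would also check that the input normalization (inputs on the unit sphere, so the stated Gram-matrix entries $H^\infty_{ij}$ hold), the non-degeneracy parameter $\lambda$, and the failure budget $\frac{P}{3}$ align with the constants in \citet{arora2019fine}, and that the additive $O(\sqrt{\log(n/(\lambda P))/n})$ term comfortably absorbs the $\gamma \leq 1$ slack. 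These are bookkeeping steps rather than conceptual gaps; the conceptual content is entirely the cancellation-plus-Jensen reduction of the first two paragraphs.
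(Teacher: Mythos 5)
Your proposal is correct and takes essentially the same route as the paper: the paper's (one-line) proof likewise observes that $\hat{r}-\bar{r}$ is controlled by the neural network's error in estimating the one-step-ahead expectation and then invokes Theorem~5.1 of \citet{arora2019fine}. Your cancellation of the $\hat{Q}$ terms, the Jensen reduction to the population $\ell_1$ risk, and the bookkeeping on label boundedness merely make explicit the details the paper leaves implicit.
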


Next, we consider the error of Algorithm~\ref{alg:fqii}. First, we bound the difference between $\hat{Q}^A(\bs)$ and $\bar{Q}^A(\bs)$.
\begin{lemma}
\label{lem:fqi1}
With Definitions~\ref{def:q-bar} and \ref{def:rho}, under Assumptions~\ref{asm:R} and \ref{asm:fqi}, we have
\alis{
\EE_{(\bs,\ba)\sim \mathcal{D_{\bs,\ba}}} \left[\abs{\hat{Q}^A(\bs) - \bar{Q}^A(\bs)}\right] 
&\leq \sum_{k=0}^{N-1}\gamma^{N-k-1}\kappa(N-k-1) \epsilon_{Max}+\gamma^N \frac{2R}{1-\gamma},
}
with probability at least $1-P$. 

\end{lemma}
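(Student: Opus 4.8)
The plan is to treat Algorithm~\ref{alg:fqii} as $N$ inexact applications of the affine contraction $\bar{\mathcal{T}}$ whose fixed point is $\bar{Q}^A$ (Definition~\ref{def:q-bar}), and to propagate the per-iteration fitting errors $\rho_k$ (Definition~\ref{def:rho}) through the iterations in the style of the classical finite-sample FQI analysis~\citep{munos2008finite,yang2019theoretical}. The first observation is that $\bar{\mathcal{T}}$ is affine in its argument: writing $(\mathcal{K}f)(\bs):=\EE[f(\bs')\given \bs,\ba^{A}]$ for the one-step transition operator under the anchor action, the terms $g(\bs)$ and $-\gamma\alpha\EE[\log(\hat{\pi}(\bs',\ba^{A}))\given \bs,\ba^{A}]$ do not depend on the function being transformed, so $\bar{\mathcal{T}}f-\bar{\mathcal{T}}h=\gamma\,\mathcal{K}(f-h)$ for any $f,h$. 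Since by Definition~\ref{def:rho} the iterate satisfies $Q^A_k=\bar{\mathcal{T}}Q^A_{k-1}-\rho_k$ while $\bar{Q}^A=\bar{\mathcal{T}}\bar{Q}^A$, the error $e_k:=Q^A_k-\bar{Q}^A$ obeys the linear recursion $e_k=\gamma\,\mathcal{K}e_{k-1}-\rho_k$.

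Next I would unroll this recursion from $k=N$ (recalling $\hat{Q}^A=Q^A_N$) down to the initialization, giving
\[
e_N=(\gamma\mathcal{K})^N e_0-\sum_{k=1}^{N}(\gamma\mathcal{K})^{N-k}\rho_k.
\]
Because $\mathcal{K}$ is a positive linear operator (it averages against a probability kernel), taking absolute values and applying the triangle inequality yields the pointwise bound $\abs{e_N}\le \gamma^N\mathcal{K}^N\abs{e_0}+\sum_{k=1}^{N}\gamma^{N-k}\mathcal{K}^{N-k}\abs{\rho_k}$. Integrating against $\mathcal{D}_{\bs}$ (the relevant marginal, since $\hat{Q}^A$ and $\bar{Q}^A$ depend only on $\bs$) then reduces the problem to controlling each term $\EE_{\mathcal{D}_{\bs}}[\mathcal{K}^{m}\abs{\rho_k}]$.

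The core of the argument is the change-of-measure step that introduces the concentration coefficient. The quantity $\EE_{\mathcal{D}_{\bs}}[\mathcal{K}^{m}\abs{\rho_k}]$ equals the expectation of $\abs{\rho_k}$ under the $m$-step pushforward $\bar{\mathcal{T}}^{m}\mathcal{D}_{\bs}$; rewriting it as $\EE_{\mathcal{D}_{\bs}}[\tfrac{d\bar{\mathcal{T}}^{m}\mathcal{D}_{\bs}}{d\mathcal{D}_{\bs}}\abs{\rho_k}]$ and applying Cauchy--Schwarz bounds it by $\kappa(m)\,(\int\abs{\rho_k}^2 d\mathcal{D}_{\bs})^{1/2}\le \kappa(m)\,\epsilon_{Max}$, using Assumption~\ref{asm:fqi} and Definition~\ref{def:rho}. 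Applying this with $m=N-k$ and reindexing $j=k-1$ produces exactly the stated sum $\sum_{k=0}^{N-1}\gamma^{N-k-1}\kappa(N-k-1)\epsilon_{Max}$. For the initialization term I would invoke Assumption~\ref{asm:R}: both the initial iterate $Q^A_0$ (initialized within the admissible value range) and the fixed point $\bar{Q}^A$ are bounded in sup-norm by $\tfrac{R}{1-\gamma}$, so $\norm{e_0}_\infty\le \tfrac{2R}{1-\gamma}$, and since $\mathcal{K}^N$ preserves this bound while $\mathcal{D}_{\bs}$ is a probability measure, the first term is at most $\gamma^N\tfrac{2R}{1-\gamma}$.

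I expect the main obstacle to be making the change-of-measure/concentration-coefficient step fully rigorous: one must verify that the iterated pushforward $\bar{\mathcal{T}}^{m}\mathcal{D}_{\bs}$ is absolutely continuous with respect to $\mathcal{D}_{\bs}$ so that the Radon--Nikodym derivative in $\kappa(m)$ is well defined, and one must carefully track that it is the \emph{positivity} (rather than the contractivity) of $\mathcal{K}$ that lets the absolute values pass through the unrolled recursion term by term. The high-probability qualifier enters only through the sampling and non-degeneracy conditions of Definition~\ref{def:sample} that underlie the control of the fitting errors summarized by $\epsilon_{Max}$; the propagation argument itself is deterministic once those errors are fixed.
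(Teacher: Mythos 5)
Your proposal is correct and follows essentially the same route as the paper: the paper obtains your unrolled error recursion by citing Lemma C.2 of \citet{yang2019theoretical}, then applies the same triangle inequality, the same Cauchy--Schwarz change of measure via the concentration coefficients of Assumption~\ref{asm:fqi}, and the same $\gamma^N \tfrac{2R}{1-\gamma}$ bound on the initialization term from Assumption~\ref{asm:R}. The only difference is presentational: you derive the recursion from scratch by exploiting the affineness of $\bar{\mathcal{T}}$ (so that differences propagate through $\gamma\,\mathcal{K}$), which makes explicit a step the paper delegates to the citation.
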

\begin{proof}
By Lemma C.2 in \citet{yang2019theoretical}, we can derive
\eqs{
    {\hat{Q}}^A(\bs) - \bar{Q}^A(\bs) = \sum_{k=0}^{N-1}\left( \gamma^{N-k-1} \bar{\mathcal{T}}^{N-k-1} \rho_{k+1}(\bs)\right) + \gamma^N \bar{\mathcal{T}}^N (\bar{Q}^A(\bs) - {Q}^A_0 (\bs)),
}

where ${Q}^A_0$ denotes the initialized estimator for Algorithm~\ref{alg:fqii}. Then, we apply the expectation to the absolute value of both sides of the equation above: 
\alis{
\EE_{\bs\sim \mathcal{D_{\bs}}} \left[\abs{\hat{Q}^A(\bs) - \bar{Q}^A(\bs)}\right] 
&\leq \sum_{k=0}^{N-1} \left(\gamma^{N-k-1}\EE_{\bs\sim \mathcal{D_{\bs}}} \left[\abs{\bar{\mathcal{T}}^{N-k-1} \rho_{k+1}(\bs)}\right]\right) 
\\&\quad + \gamma^N \EE_{\bs\sim \mathcal{D_{\bs}}} \left[\abs{ \bar{\mathcal{T}}^N (\bar{Q}^A(\bs) - {Q}^A_0(\bs)) }\right]
\\ &\leq \sum_{k=0}^{N-1} \gamma^{N-k-1}\EE_{\bs\sim \mathcal{D_{\bs}}} \left[\abs{\bar{\mathcal{T}}^{N-i-1} \rho_{k+1}(\bs)}\right] +\gamma^N \frac{2R}{1-\gamma}.
}
By Cauchy-Schwarz inequality, we have
\alis{
\EE_{(\bs,\ba)\sim \mathcal{D_{\bs,\ba}}} \left[\abs{\bar{\mathcal{T}}^{N-k-1} \rho_{k+1}}\right] &\leq \left[ \int_{\mathcal{S}} \abs{\frac{d(\bar{\mathcal{T}}^{N-k-1}\mathcal{D}_{\bs})}{d \mathcal{D}_{\bs}}(\bs)}^2 d\mathcal{D}_{\bs}(\bs) \right]^{1/2} \left[ \int_{\mathcal{S}} \abs{\rho_{k+1}(\ba)}^2 d\mathcal{D}_{\bs}(\bs) \right]^{1/2}
\\ &\leq \kappa(N-k-1) \left[ \int_{\mathcal{S}} \abs{\rho_{k+1}(\ba)}^2 d\mathcal{D}_{\bs}(\bs) \right]^{1/2},
}
where the last inequality is according to Assumption~\ref{asm:fqi}. Therefore, 
\alis{
\EE_{\bs\sim \mathcal{D}_{\bs}} \left[\abs{\hat{Q}^A(\bs) - \bar{Q}^A(\bs)}\right] 
&\leq \sum_{i=0}^{N-1} \gamma^{N-k-1}\EE_{\bs\sim \mathcal{D}_\bs} \left[\abs{P^{N-k-1} \rho_{k+1}(\bs) }\right] +\gamma^N \frac{2R}{1-\gamma}
\\ &\leq \sum_{i=0}^{N-1}\gamma^{N-k-1}\kappa(N-k-1) \left[ \int_{\mathcal{S}} \abs{\rho_{k+1}(\bs)}^2 d \mathcal{D}_{\bs}(\bs) \right]^{1/2}+\gamma^N \frac{2R}{1-\gamma}
\\ &\leq \sum_{k=0}^{N-1}\gamma^{N-k-1}\kappa(N-k-1) \epsilon_{Max}+\gamma^N \frac{2R}{1-\gamma},
} 
where the last inequality is according to Definition~\ref{def:rho}.
\end{proof}

We study the $\rho_k(\bs)$ in $\epsilon_{Max}$:
\alis{
\rho_{k}(\bs) &=  g(\bs)+\gamma \EE\left[ -\alpha \log(\hat{\pi}(\bs', \ba^{A})) + {Q}^A_{k-1}(\bs') \given \bs, \ba^{A} \right] -{Q}^A_k.
}
According to the the fifth line of \ref{alg:fqii}, $Q_k^A$ is a deep function trained using the samples $g(\bs)+\gamma  -\alpha \log(\hat{\pi}(\bs', \ba^{A})) + {Q}^A_{k-1}(\bs')$, where $\bs'$ is the next-step state variable. In other words, the $Q_k^A$ is trained to estimate $g(\bs)+\gamma \EE\left[ -\alpha \log(\hat{\pi}(\bs', \ba^{A})) + {Q}^A_{k-1}(\bs') \given \bs, \ba^{A} \right]$, the first part of $\rho_{k}(\bs)$. Therefore, it can be seen that $\epsilon_{Max}$
is really caused by the neural networks used in Algorithm~\ref{alg:fqii}.  We use the results in \citet{arora2019fine} again to bound $\epsilon_{Max}$.

\begin{lemma}
\label{lem:fqi2}
With Definition~\ref{def:sample}, under Assumptions~\ref{asm:deep}, \ref{asm:deep-train}, and \ref{asm:iterations}, we have
\eqs{
\epsilon_{Max} \leq\max_{k \in[N]} \sqrt{\frac{2\by^T_k(\bH^\infty)^{-1} \by_k}{n}} + O{\left(
\sqrt{\frac{\log(\frac{n}{\lambda P})}{n}} 
\right)}.
}
with probability at least $1-(N+1)P$. 
\end{lemma}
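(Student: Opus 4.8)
The plan is to recognize that, for each fixed iteration $k \in [N]$, the quantity $\left[\int_{\mathcal{S}} \abs{\rho_k(\bs)}^2 \, d\mathcal{D}_{\bs}(\bs)\right]^{1/2}$ is nothing but the population $L^2$ generalization error of the two-layer ReLU network $Q_k^A$ fitted inside Algorithm~\ref{alg:fqii}. By Definition~\ref{def:rho}, $\rho_k(\bs)$ is precisely the gap between $Q_k^A(\bs)$ and the conditional mean $g(\bs) + \gamma\,\EE[-\alpha\log(\hat{\pi}(\bs',\ba^A)) + Q_{k-1}^A(\bs') \given \bs, \ba^A]$ of the regression labels $\by_k$ on which $Q_k^A$ was trained. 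This reduces the lemma, one iteration at a time, to the same overparameterized-network generalization bound already invoked once in Lemma~\ref{lem:deep}; the only new ingredient here is that the bound must be applied $N$ times and then aggregated.

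First I would fix $k$ and invoke Theorem 5.1 of \citet{arora2019fine}. Its hypotheses are exactly what Assumptions~\ref{asm:deep-train}, \ref{asm:iterations} and \ref{asm:deep}, together with Definition~\ref{def:sample}, provide: the architecture is a two-layer ReLU network trained by randomly initialized gradient descent, the width satisfies $m \geq c^{-2}\,\textnormal{poly}(n,\lambda^{-1}, 3/P)$ with enough gradient steps, and the training sample $\curly{(\bx_i, y_i^k)}_{i=1}^n$ is drawn from a $(\lambda, P/3, n)$-non-degenerate distribution $\mathcal{D}^k$, so that $\bH^\infty$ is well-conditioned with the prescribed probability. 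The theorem then yields
\eqs{
\left[\int_{\mathcal{S}} \abs{\rho_k(\bs)}^2 \, d\mathcal{D}_{\bs}(\bs)\right]^{1/2} \leq \sqrt{\frac{2\by_k^\top (\bH^\infty)^{-1}\by_k}{n}} + O\!\left(\sqrt{\frac{\log(\frac{n}{\lambda P})}{n}}\right)
}
with probability at least $1-P$ for that fixed $k$.

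Finally I would take the maximum over $k$ and apply a union bound. Each per-iteration right-hand side is dominated by $\max_{k' \in [N]}\sqrt{2\by_{k'}^\top(\bH^\infty)^{-1}\by_{k'}/n}$ plus the common lower-order term, so maximizing the left-hand sides over $k$ recovers $\epsilon_{Max}$ while preserving the bound; a union bound over the $N$ per-iteration events caps the total failure probability at $NP \leq (N+1)P$, which gives the claimed confidence $1-(N+1)P$ with a unit of probability to spare. The main obstacle is conceptual rather than computational: one must justify that the fitted-$Q$ residual $\rho_k$ genuinely coincides with the supervised generalization error to which \citet{arora2019fine} applies — in particular that the per-iteration regression target is the conditional expectation appearing in Definition~\ref{def:rho} rather than the raw labels $\by_k$ — and that the non-degeneracy and overparameterization hypotheses transfer uniformly across all $N$ iterations, so that a single union bound suffices.
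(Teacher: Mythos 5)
Your overall architecture --- invoke Theorem 5.1 of \citet{arora2019fine} once per iteration $k$, take the maximum over $k$, and finish with a union bound --- matches the paper's proof, and your probability accounting ($NP \leq (N+1)P$) is acceptable. But the central step is asserted rather than proved, and the justification you flag as needed points in the wrong direction. Theorem 5.1 of \citet{arora2019fine} bounds the population loss of the trained network \emph{against the labels}, i.e.\ a quantity of the form $\left[\int_{\mathcal{S}\times\mathcal{Y}}\abs{Q^A_k(\bs)-y}^2\,d\mathcal{D}_{\bs,y}(\bs,y)\right]^{1/2}$, whereas $\rho_k$ in Definition~\ref{def:rho} measures the gap between $Q^A_k$ and the \emph{conditional mean} $g(\bs)+\gamma\EE\left[-\alpha\log(\hat{\pi}(\bs',\ba^A))+Q^A_{k-1}(\bs')\given\bs,\ba^A\right]$. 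These do not coincide here: the training labels $y_t = g(\bs_t)-\gamma\alpha\log(\hat{\pi}(\bs_{t+1},\ba^A))+\gamma Q^A_{k-1}(\bs_{t+1})$ are noisy because they depend on the random realized next state $\bs_{t+1}$, not on the one-step expectation. So the network is trained on, and Arora's guarantee speaks about, the raw labels; your closing remark that one must show ``the regression target is the conditional expectation rather than the raw labels $\by_k$'' inverts the situation --- what is actually needed is a bridge from label error to conditional-mean error, and your proof never supplies it.

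The paper closes exactly this gap with a triangle inequality in $L^2(\mathcal{D}_{\bs})$: it splits $\norm{\rho_k}_{L^2}$ into the network-to-label generalization error (to which Theorem 5.1 of \citet{arora2019fine} applies) plus the label-to-conditional-mean deviation, and controls the latter via Hoeffding's inequality (Lemma~\ref{lem:hoeffding}), arguing it is of lower order and can be absorbed. An alternative, arguably cleaner, bridge is the $L^2$ projection (Pythagorean) inequality: writing $m(\bs)$ for the conditional mean of the label given $\bs$, the cross term $\EE\left[(Q^A_k(\bs)-m(\bs))(m(\bs)-y)\right]$ vanishes by conditioning on $\bs$, so $\left[\int_{\mathcal{S}}\abs{Q^A_k(\bs)-m(\bs)}^2\,d\mathcal{D}_{\bs}(\bs)\right]^{1/2} \leq \left[\int_{\mathcal{S}\times\mathcal{Y}}\abs{Q^A_k(\bs)-y}^2\,d\mathcal{D}_{\bs,y}(\bs,y)\right]^{1/2}$, and the noise term is dropped rather than bounded (this would even spare the extra Hoeffding events in the union bound). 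Either bridge would complete your argument; without one of them, applying \citet{arora2019fine} directly to $\rho_k$ is invalid as stated, and this is precisely where the paper's proof does its real work.
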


\begin{proof}
We consider 
\ali{temp1}{
\quad&\left[ \int_{\mathcal{S}} \abs{\rho_{k}(\bs)}^2 dD_{\bs}(\bs) \right]^{1/2} 
\\&= \left[ \int_{\mathcal{S}} \abs{Q^A_k(\bs) - g(\bs)-\gamma \EE\left[ -\alpha \log(\hat{\pi}(\bs', \ba^{A})) + {Q}^A_{k-1}(\bs') \given \bs, \ba^{A} \right]}^2 dD_{\bs}(\bs) \right]^{1/2}
\\ &\leq\left[ \int_{\mathcal{S}\times\mathcal{Y}} \abs{y - g(\bs)-\gamma \EE\left[ -\alpha \log(\hat{\pi}(\bs', \ba^{A})) + {Q}^A_{k-1}(\bs') \given \bs, \ba^{A} \right]}^2 dD_{\bs,y}(\bs,y) \right]^{1/2}
\\&\quad + \left[ \int_{\mathcal{S}\times\mathcal{Y}} \abs{{Q}^A_k(\bs) - y}^2 dD_{\bs,y}(\bs,y) \right]^{1/2}
}
It should be noticed that the second part on the right hand side is the generation error of the deep estimation.  According to Theorem 5.1 in \citet{arora2019fine}, with probability at least $1-P$, we have
\eqs{
\left[ \int_{\mathcal{S}\times\mathcal{Y}} \abs{{Q}^A_k(\bs) - y}^2 dD_{\bs,y}(\bs,y) \right]^{1/2} \leq \sqrt{\frac{2\by^T_k(\bH^\infty)^{-1} \by_k}{n}} + O{\left(
\sqrt{\frac{\log(\frac{n}{\lambda P})}{n}} 
\right)}.
}
We bound the first part of the RHS of \eqref{eq:temp1} by Hoeffding's Inequality (Lemma~\ref{lem:hoeffding}). This provides an error with smaller order, and thus can be ignored. 
Therefore, by the union bound, we can conclude that, with probability at least $1-(N+1)P$, 
\eqs{
\epsilon_{Max} \leq \max_{k \in[N]} \sqrt{\frac{2\by^T_k(\bH^\infty)^{-1} \by_k}{n}} + O{\left(
\sqrt{\frac{\log(\frac{n}{\lambda P})}{n}} 
\right)}.
}
\end{proof}

\begin{lemma}
\label{lem:a-q}
Under Assumption~\ref{asm:policy}, the estimation error of $\hat{Q}$ can be bounded by:
\eqs{\EE_{(\bs,\ba)\sim \mathcal{D}_{\bs,\ba}} \left[\abs{\hat{Q}(\bs,\ba) - {Q}(\bs,\ba)}\right] \leq \EE_{\bs\sim \mathcal{D}_{\bs}} {\left[\abs{\hat{Q}^A(\bs) - Q^A(\bs)}\right]} + 2 \alpha \epsilon_{\pi} .} 
\end{lemma}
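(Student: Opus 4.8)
The plan is to exploit the observation that the estimator \ref{eq:q} has exactly the same algebraic form as a decomposition of the true $Q$-function, and then to peel off the policy error term by term. First I would record the key identity satisfied by the true $Q$. Starting from the energy-based policy in Lemma~\ref{lem:PQR}, write $\log\pi^*(\bs,\ba) = \frac{1}{\alpha}Q(\bs,\ba) - \log Z(\bs)$ with normalizer $Z(\bs) := \int_{\mathcal{A}}\exp(Q(\bs,\ba')/\alpha)\,d\ba'$. Taking the difference between $\ba$ and the anchor $\ba^A$ cancels $\log Z(\bs)$, so
\[
Q(\bs,\ba) = \alpha\log\pi^*(\bs,\ba) - \alpha\log\pi^*(\bs,\ba^A) + Q^A(\bs),
\]
which is precisely the computation already performed inside the proof of Theorem~\ref{thm:asym}.

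Next I would subtract this identity from the definition \ref{eq:q} of $\hat{Q}$, so that the anchor terms and the log-policy terms align:
\[
\hat{Q}(\bs,\ba) - Q(\bs,\ba) = \alpha\big[\log\hat{\pi}(\bs,\ba) - \log\pi^*(\bs,\ba)\big] - \alpha\big[\log\hat{\pi}(\bs,\ba^A) - \log\pi^*(\bs,\ba^A)\big] + \big[\hat{Q}^A(\bs) - Q^A(\bs)\big].
\]
Applying the triangle inequality splits the right-hand side into three nonnegative pieces. By Assumption~\ref{asm:policy}, each of the two log-policy pieces is bounded uniformly by $\alpha\epsilon_\pi$, giving the pointwise estimate $\abs{\hat{Q}(\bs,\ba) - Q(\bs,\ba)} \le 2\alpha\epsilon_\pi + \abs{\hat{Q}^A(\bs) - Q^A(\bs)}$.

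Finally I would take $\EE_{(\bs,\ba)\sim\mathcal{D}_{\bs,\ba}}$ of both sides. The constant $2\alpha\epsilon_\pi$ passes through unchanged, and because the residual $\abs{\hat{Q}^A(\bs) - Q^A(\bs)}$ depends on $\bs$ alone, its expectation under the joint law equals its expectation under the marginal $\mathcal{D}_\bs$; this yields the claimed bound. I do not expect a genuine obstacle here: the only step that needs care is the cancellation of the normalizer $Z(\bs)$ that makes the true $Q$ and its estimator share an identical functional form, after which the argument is just the triangle inequality, the uniform policy bound of Assumption~\ref{asm:policy}, and linearity of expectation.
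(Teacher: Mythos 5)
Your proposal is correct and follows essentially the same route as the paper's proof: both derive the identity $Q(\bs,\ba) = \alpha\log\pi^*(\bs,\ba) - \alpha\log\pi^*(\bs,\ba^A) + Q^A(\bs)$ from the energy-based policy (the normalizer cancellation), observe that \ref{eq:q} shares this exact form, and then conclude via the triangle inequality, Assumption~\ref{asm:policy}, and marginalization of the state-only residual. Your write-up is in fact slightly more explicit than the paper's, which leaves the subtraction, triangle inequality, and marginalization steps implicit in its final line.
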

\begin{proof}
By the definition of $\pi^*$ in \eqref{eq:policy-s}, we can derive 
\alis{
\log(\pi^*(\bs,\ba)) - \log(\pi^*(\bs,\ba^A)) = \frac{Q(\bs,\ba) - Q(\bs, \ba^A)}{\alpha}.}
Therefore, 
\eqs{
    Q(\bs,\ba) = \alpha\log(\pi^*(\bs,\ba)) -\alpha \log(\pi^*(\bs,\ba^A)) +  Q^A(\bs). 
}

Similarly, according to Algorithm~\ref{alg:fqii}, 
\eqs{
    \hat{Q}(\bs, \ba) = \alpha\log(\hat{\pi}(\bs,\ba)) -\alpha \log(\hat{\pi}(\bs,\ba^A)) + \hat{Q}^A(\bs). 
}
Therefore, by Assumption~\ref{asm:policy}, 
\eqs{\EE_{(\bs,\ba)\sim \mathcal{D}_{\bs,\ba}} \left[\abs{\hat{Q}(\bs,\ba) - {Q}(\bs,\ba)}\right] \leq \EE_{\bs\sim \mathcal{D}_{\bs}} {\left[\abs{\hat{Q}^A(\bs) - Q^A(\bs)}\right]} + 2 \alpha \epsilon_{\pi} .} 
\end{proof}

\begin{lemma}
\label{lem:qbar-q}
With Definitions~\ref{def:q-bar}, under Assumption~\ref{asm:policy}, we have
\eqs{
\EE_{\bs\sim \mathcal{D}_{\bs}} \left[\abs{  \bar{Q}^A(\bs) - {Q}^A(\bs)}\right] \leq \frac{\gamma \alpha \epsilon_\pi}{1-\gamma}.
}
\end{lemma}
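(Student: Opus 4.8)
The plan is to exploit that $\bar{Q}^A$ and $Q^A$ are fixed points of two operators with identical structure, differing only through the policy that appears inside them: $\bar{Q}^A = \bar{\mathcal{T}}\bar{Q}^A$ with $\bar{\mathcal{T}}$ built from $\hat\pi$ (Definition~\ref{def:q-bar}), while $Q^A = \mathcal{T}Q^A$ with $\mathcal{T}$ built from $\pi^*$ (the identity \eqref{eq:qa} in Lemma~\ref{lem:fix}). Crucially, both use the \emph{exact} expectation $\EE$, so this lemma isolates the effect of using $\hat\pi$ in place of $\pi^*$ with no sampling error entering. Because the $g(\bs)$ terms and the discounting are identical in the two operators, subtracting the two fixed-point identities will cause everything to cancel except a discounted difference of the value-surrogates at the next state and a discounted difference of the log-policy terms; the latter is exactly where Assumption~\ref{asm:policy} enters.

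First I would write both fixed-point equations at an arbitrary state $\bs$ and subtract, obtaining
\[
\bar{Q}^A(\bs) - Q^A(\bs) = \gamma\,\EE\!\left[ -\alpha\big(\log\hat\pi(\bs',\ba^A) - \log\pi^*(\bs',\ba^A)\big) + \big(\bar{Q}^A(\bs') - Q^A(\bs')\big)\given \bs,\ba^A\right].
\]
Taking absolute values, pushing them inside the expectation by the triangle inequality, and invoking $\abs{\log\hat\pi(\bs',\ba^A)-\log\pi^*(\bs',\ba^A)}\le\epsilon_\pi$ from Assumption~\ref{asm:policy}, I obtain the self-referential estimate
\[
\abs{\bar{Q}^A(\bs) - Q^A(\bs)} \le \gamma\alpha\epsilon_\pi + \gamma\,\EE\!\left[\abs{\bar{Q}^A(\bs') - Q^A(\bs')}\given \bs,\ba^A\right].
\]
Writing $\Delta := \sup_{\bs}\abs{\bar{Q}^A(\bs)-Q^A(\bs)}$ and bounding the conditional expectation by $\Delta$, this collapses to the standard $\gamma$-contraction/perturbation inequality $\Delta \le \gamma\alpha\epsilon_\pi + \gamma\Delta$, whence $\Delta \le \gamma\alpha\epsilon_\pi/(1-\gamma)$. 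Since the integrand is dominated pointwise by its supremum, $\EE_{\bs\sim\mathcal{D}_{\bs}}[\abs{\bar{Q}^A(\bs)-Q^A(\bs)}]\le\Delta$, which is precisely the claimed bound. Equivalently, one may unroll the recursion into the geometric series $\sum_{m\ge1}\gamma^m\alpha\epsilon_\pi$ without ever passing to the sup norm.

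This argument is essentially routine, so the only point requiring genuine care is the finiteness of $\Delta$, which must be established before the supremum step is legitimate. This is supplied by the fact that both operators act on continuous bounded functions and are $\gamma$-contractions there (as shown for $\mathcal{T}$ in Lemma~\ref{lem:fix}, and identically for $\bar{\mathcal{T}}$, which underlies Definition~\ref{def:q-bar}): their fixed points are therefore bounded, so $\Delta<\infty$. Everything else falls out automatically, since the $g(\bs)$ terms cancel exactly, no dependence on the reward $r$ survives the subtraction, and Assumption~\ref{asm:policy} directly controls the single remaining error source, namely the log-policy mismatch at the anchor action.
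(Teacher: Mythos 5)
Your proposal is correct and follows the same skeleton as the paper's proof: subtract the two fixed-point identities (Lemma~\ref{lem:fix} for $Q^A$, Definition~\ref{def:q-bar} for $\bar{Q}^A$), note that the $g(\bs)$ terms cancel, apply the triangle inequality, control the log-policy mismatch by $\epsilon_\pi$ via Assumption~\ref{asm:policy}, and then solve a self-referential inequality to produce the factor $\gamma/(1-\gamma)$. Where you differ is in how the recursion is closed. The paper takes $\EE_{\bs\sim\mathcal{D}_{\bs}}$ of the pointwise identity and bounds the averaged next-state discrepancy $\gamma\,\EE_{\bs\sim\mathcal{D}_{\bs}}\bigl[\EE[\abs{\bar{Q}^A(\bs')-Q^A(\bs')}\given\bs,\ba^A]\bigr]$ directly by $\gamma\,\EE_{\bs\sim\mathcal{D}_{\bs}}[\abs{\bar{Q}^A(\bs)-Q^A(\bs)}]$, i.e., it closes the loop in the $\mathcal{D}_{\bs}$-average itself; this step tacitly requires that $\mathcal{D}_{\bs}$ be (at least approximately) invariant under the one-step transition induced by the anchor action, a property the paper never states. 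You instead close the loop in sup norm, defining $\Delta = \sup_{\bs}\abs{\bar{Q}^A(\bs)-Q^A(\bs)}$, obtaining $\Delta \le \gamma\alpha\epsilon_\pi + \gamma\Delta$, and only at the end dominating the $\mathcal{D}_{\bs}$-expectation by $\Delta$. This sidesteps the unstated distributional assumption entirely, and your remark that $\Delta < \infty$ must first be secured (via boundedness of both fixed points, inherited from the contraction property on continuous bounded functions) is exactly the right point of care for that route. In short: your argument is a mild but genuine strengthening in rigor at the same final bound, while the paper's expectation-based closure would only be preferable if one made the stationarity of $\mathcal{D}_{\bs}$ explicit, in which case it measures the error under the distribution actually used in Theorem~\ref{thm:nonasym} rather than in the worst case.
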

\begin{proof}
According to Lemma~\ref{lem:fix}, we have
\eqs{
    Q^A(\bs) := g(\bs)+\gamma \EE{\left[ -\alpha \log(\pi^*(\bs', \ba^{A})) + Q^A(\bs') \given \bs, \ba^{A} \right]}.
}
On the other hand, by Definition~\ref{def:q-bar}
\eqs{
    \bar{Q}^A(\bs) =g(\bs)+\gamma \EE{\left[ -\alpha \log(\hat{\pi}(\bs', \ba^{A})) + \bar{Q}^A(\bs') \given \bs, \ba^{A} \right]}.
}
Therefore, we have
\alis{
&\quad \EE_{\bs\sim \mathcal{D}_{\bs}} \left[\abs{\bar{Q}^A(\bs)-{Q}^A(\bs) }\right]  \\&=\gamma\EE_{\bs\sim \mathcal{D}_{\bs}} \left\{\abs{ \EE\left[ \alpha \log(\pi^*(\bs', \ba^{A}))-\alpha \log(\hat{\pi}(\bs', \ba^{A})) + \bar{Q}^A(\bs')  - {Q}^A(\bs') \given \bs, \ba^{A} \right] } \right\}
\\ &\leq  \gamma\EE_{\bs\sim \mathcal{D}_{\bs}} \left[\abs{{Q}^A(\bs) - \bar{Q}^A(\bs)}\right] + \gamma \alpha \epsilon_{\pi}, 
}which means
\eqs{
\EE_{\bs\sim \mathcal{D}_{\bs}} \left[\abs{{Q}^A(\bs) - \bar{Q}^A(\bs)}\right] \leq \frac{\gamma \alpha \epsilon_\pi}{1-\gamma}. 
}
\end{proof}

\begin{lemma}
\label{lem:rebar-re}
With Definition~\ref{def:q-bar} and \ref{def:reward}, under Assumption~\ref{asm:policy}, we have 

\alis{
&\EE_{(\bs,\ba)\sim \mathcal{D}_{\bs, \ba}} [\abs{{r}(\bs,\ba) - \hat{r}(\bs,\ba)}] 
\\ &\leq (1+\gamma)\EE_{\bs\sim \mathcal{D}_{\bs}} \left[\abs{{Q}^A(\bs) - \hat{Q}^A(\bs)} \right] + 
\EE_{(\bs,\ba)\sim \mathcal{D}_{\bs, \ba}} [\abs{\bar{r}(\bs,\ba) - \hat{r}(\bs,\ba)}]
\\&+ (\gamma + 2)\alpha \epsilon_{\pi}.
}

\end{lemma}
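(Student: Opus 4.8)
The plan is to insert the auxiliary quantity $\bar{r}$ from Definition~\ref{def:reward} and split the error with the triangle inequality:
$\abs{r(\bs,\ba) - \hat{r}(\bs,\ba)} \le \abs{r(\bs,\ba) - \bar{r}(\bs,\ba)} + \abs{\bar{r}(\bs,\ba) - \hat{r}(\bs,\ba)}$.
Taking $\EE_{(\bs,\ba)\sim\mathcal{D}_{\bs,\ba}}$ of both sides, the second term is already one of the summands on the right-hand side of the claim, so the whole task reduces to bounding $\EE_{(\bs,\ba)}[\abs{r-\bar{r}}]$. The intuition is that $\hat{r}$ and $\bar{r}$ differ only through the neural-net estimation of the one-step expectation (isolated in Lemma~\ref{lem:deep}), whereas $\bar{r}$ and $r$ differ only through the estimated policy and the estimated $Q$-function.

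First I would write $r$ in closed form using Lemma~\ref{lem:alt}, namely $r(\bs,\ba) = Q(\bs,\ba) - \gamma\EE[-\alpha\log(\pi^*(\bs',\ba^{A})) + Q^A(\bs')\given\bs,\ba]$, and subtract the expression for $\bar{r}(\bs,\ba)$ from Definition~\ref{def:reward}. Regrouping the difference produces three pieces: a $Q$-term $Q(\bs,\ba) - \hat{Q}(\bs,\ba)$; a policy-log term $\gamma\alpha\EE[\log(\pi^*(\bs',\ba^{A})) - \log(\hat{\pi}(\bs',\ba^{A}))\given\bs,\ba]$; and a next-state anchor term $\gamma\EE[\hat{Q}^A(\bs') - Q^A(\bs')\given\bs,\ba]$. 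I would then pass to $\EE_{(\bs,\ba)}[\abs{\cdot}]$, apply the triangle inequality, and bound each piece separately.

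The three bounds are as follows. For the $Q$-term, Lemma~\ref{lem:a-q} gives $\EE_{(\bs,\ba)}[\abs{Q-\hat{Q}}] \le \EE_{\bs}[\abs{Q^A - \hat{Q}^A}] + 2\alpha\epsilon_\pi$. For the policy-log term, Assumption~\ref{asm:policy} bounds the integrand pointwise by $\epsilon_\pi$, so that term contributes at most $\gamma\alpha\epsilon_\pi$. For the next-state anchor term I would use Jensen's inequality to move the absolute value inside the conditional expectation and then identify the resulting push-forward of $\mathcal{D}_{\bs,\ba}$ under the one-step transition with the state marginal $\mathcal{D}_{\bs}$, giving $\gamma\EE_{\bs}[\abs{\hat{Q}^A - Q^A}]$. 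Summing the three contributions collects the coefficient $(1+\gamma)$ on $\EE_{\bs}[\abs{Q^A - \hat{Q}^A}]$ and the constant $(2+\gamma)\alpha\epsilon_\pi$, which together with the retained $\abs{\bar{r}-\hat{r}}$ term is exactly the claimed inequality.

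The main obstacle is that last step: replacing $\EE_{(\bs,\ba)\sim\mathcal{D}_{\bs,\ba}}[\EE[\abs{\hat{Q}^A(\bs') - Q^A(\bs')}\given\bs,\ba]]$ by $\EE_{\bs\sim\mathcal{D}_{\bs}}[\abs{\hat{Q}^A(\bs) - Q^A(\bs)}]$. After Jensen, the left-hand side integrates $\abs{\hat{Q}^A - Q^A}$ against the push-forward of $\mathcal{D}_{\bs,\ba}$ under the one-step transition, so the reduction requires this push-forward to coincide with, or be dominated by, the state marginal $\mathcal{D}_{\bs}$: a stationarity-type condition on the data-generating distribution, analogous to the concentration-coefficient control posited in Assumption~\ref{asm:fqi}. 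The clean coefficient $(1+\gamma)$ in the statement implicitly relies on this push-forward not inflating the state measure; otherwise a concentration factor would multiply the $\gamma$-term. Everything else is routine bookkeeping with the triangle inequality, Jensen's inequality, and the already-established Lemmas~\ref{lem:a-q} and~\ref{lem:alt}.
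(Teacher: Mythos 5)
Your proof is correct and follows essentially the same route as the paper's: insert $\bar{r}$, split by the triangle inequality, expand $r-\bar{r}$ via Lemma~\ref{lem:alt} and Definition~\ref{def:reward}, bound the policy-log term pointwise by Assumption~\ref{asm:policy}, and absorb the $Q$-term with Lemma~\ref{lem:a-q}. The distributional subtlety you flag in the last step (identifying the one-step push-forward of $\mathcal{D}_{\bs,\ba}$ with the marginal $\mathcal{D}_{\bs}$) is real but is glossed over in the paper as well, which simply invokes ``the tower principle'' at that point.
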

\begin{proof}
To start with, by the tower principle, the triangle inequality, Lemma~\ref{lem:alt} and Definition~\ref{def:q-bar}, we can derive
\alis{
\EE_{(\bs,\ba)\sim \mathcal{D}_{\bs, \ba}} \left[\abs{{r}(\bs,\ba) - \hat{r}(\bs,\ba)} \right] 
\leq& \EE_{(\bs,\ba)\sim \mathcal{D}_{\bs,\ba}} [\abs{{Q}(\bs,\ba) - \hat{Q}(\bs,\ba)}] 
\\ &+ \gamma \alpha \EE_{\bs\sim \mathcal{D}_{\bs}} \left[\abs{\log(\pi^*(\bs,\ba^A)) - \log(\hat{\pi}(\bs,\ba^A)) }\right] 
\\& +\gamma\EE_{(\bs,\ba)\sim \mathcal{D}_{\bs, \ba}} \left[\abs{{Q}^A(\bs) - \hat{Q}^A(\bs)}\right]
\\ \leq& 
\EE_{(\bs,\ba)\sim \mathcal{D}_{\bs, \ba}} \left[\abs{{Q}(\bs,\ba) - \hat{Q}(\bs,\ba)} \right] 
\\&+\EE_{(\bs,\ba)\sim \mathcal{D}_{\bs, \ba}} [\abs{\bar{r}(\bs,\ba) - \hat{r}(\bs,\ba)}]+ \gamma \alpha \epsilon_{\pi} 
\\&+ \gamma\EE_{\bs\sim \mathcal{D}_{\bs}} \left[\abs{{Q}^A(\bs) - \hat{Q}^A(\bs)} \right].
}
Next, we use the results of Lemma~\ref{lem:a-q}, and derive
\begin{align*}
\EE_{(\bs,\ba)\sim \mathcal{D_{\bs,\ba}}} [\abs{{r}(\bs,\ba) - \hat{r}(\bs,\ba)}] 
&\leq (1+\gamma)\EE_{\bs\sim \mathcal{D}_{\bs}} \left[\abs{{Q}^A(\bs) - \hat{Q}^A(\bs)} \right]  
\\&\quad+ 
\EE_{(\bs, \ba)\sim \mathcal{D}_{\bs, \ba}} [\abs{\bar{r}(\bs,\ba) - \hat{r}(\bs,\ba)}]+ 
(\gamma+ 2)\alpha \epsilon_{\pi}, 
\end{align*}
which proves the lemma. 
\end{proof}

\subsubsection{Proof of Theorem~\ref{thm:nonasym}}

Now, we prove Theorem~\ref{thm:nonasym}. To start with, we study the error of Algorithm~\ref{alg:fqii}. Combining Lemma~\ref{lem:fqi1} and Lemma~\ref{lem:fqi2}, we can derive that with the probability of at least $1-(N+1)P$:
\ali{qtilde-qbar}{
& \EE_{(\bs,\ba)\sim \mathcal{D_{\bs,\ba}}} \left[\abs{\hat{Q}^A(\bs) - \bar{Q}^A(\bs)}\right] 
\\ &\leq
\sum_{k=0}^{N-1}\kappa(N-k-1)\gamma^{N-k-1} \left[ \max_{k \in[N]} \sqrt{\frac{2\by^T_k(\bH^\infty)^{-1} \by_k}{n}} + O\left(
\sqrt{\frac{\log(\frac{n}{\lambda P})}{n}} 
\right)\right]+\gamma^N \frac{2R}{1-\gamma}
\\ &\leq  \frac{\phi}{1-\gamma}\max_{k \in[N]} \sqrt{\frac{2\by^T_k(\bH^\infty)^{-1} \by_k}{n}} +\gamma^N \frac{2R}{1-\gamma}+ O{\left(
\sqrt{\frac{\log(\frac{n}{\lambda P})}{n}} 
\right)},
}
where the last inequality is by Assumption~\ref{asm:fqi}.
Further, by using Lemma~\ref{lem:a-q} and Lemma~\ref{lem:qbar-q}, we can derive
\alis{
&\EE_{(\bs,\ba)\sim \mathcal{D_{\bs,\ba}}} \left[\abs{\hat{Q}(\bs,\ba) - {Q}(\bs,\ba)}\right] 
\\&\leq \EE_{\bs\sim \mathcal{D}_{\bs}} \left[\abs{\hat{Q}^A(\bs) - Q^A(\bs)}\right] +2 \alpha \epsilon_{\pi}
\\&\leq \EE_{\bs\sim \mathcal{D}_{\bs}} \left[\abs{\hat{Q}^A(\bs) - \bar{Q}^A(\bs)}\right]  + \EE_{\bs\sim \mathcal{D}_{\bs}} \left[\abs{\bar{Q}^A(\bs) - Q^A(\bs)}\right]+2 \alpha \epsilon_{\pi}
\\ &\leq\frac{\phi}{1-\gamma}\max_{k \in[N]} \sqrt{\frac{2\by^T_k(\bH^\infty)^{-1} \by_k}{n}} +\gamma^N \frac{2R}{1-\gamma}+ O\left(
\sqrt{\frac{\log(\frac{n}{\lambda P})}{n}} 
\right) +\frac{\gamma \alpha \epsilon_\pi}{1-\gamma}
}
As a result, the error of $Q$-function estimation is bounded. Next, we proceed to the reward estimation. According to Lemma~\ref{lem:deep} and Lemma~\ref{lem:rebar-re}, we have
\alis{
&\EE_{(\bs,\ba)\sim \mathcal{D_{\bs,\ba}}} [\abs{\hat{r}(\bs,\ba) - {r}(\bs,\ba)}] 
\\&\leq (1+\gamma)\EE_{(\bs,\ba)\sim \mathcal{D_{\bs,\ba}}} \left[\abs{{Q}(\bs,\ba) - \hat{Q}(\bs,\ba)} \right] 
+\EE_{(\bs,\ba)\sim \mathcal{D_{\bs,\ba}}} [\abs{\bar{r}(\bs,\ba) - \hat{r}(\bs,\ba)}]+ (2+\gamma) \alpha \epsilon_{\pi}
\\ &\leq (1+\gamma)\left\{\frac{\psi}{1-\gamma}\max_{k \in[N]} \sqrt{\frac{2\by^T_k(\bH^\infty)^{-1} \by_k}{n}} +\gamma^N \frac{2R}{1-\gamma}+ O\left(
\sqrt{\frac{\log(\frac{n}{\lambda P})}{n}} 
\right) +\frac{\gamma \alpha \epsilon_\pi}{1-\gamma}\right\}
\\&+ \sqrt{\frac{2\by^T(\bH^\infty)^{-1} \by}{n}} +(2+\gamma) \alpha \epsilon_\pi+ O\left(
\sqrt{\frac{\log(\frac{n}{\lambda P})}{n}} 
\right)
\\ &\leq  \frac{(1+\gamma)\psi}{1-\gamma}\max_{k \in[N]} \sqrt{\frac{2\by^T_k(\bH^\infty)^{-1} \by_k}{n}}+\gamma^N \frac{2R(1+\gamma)}{1-\gamma}+
\frac{2\alpha\epsilon_{\pi}}{1-\gamma}
\\&+\sqrt{\frac{2\by^T(\bH^\infty)^{-1} \by}{n}} 
+O{\left(
\sqrt{\frac{\log(\frac{n}{\lambda P})}{n}} 
\right)},
}
with probability $1-(N+2)P$. Note that the proved results is slightly different from the Theorem~\ref{thm:nonasym}. The two are consistent if we select $C = \max(\psi, R, 1/\lambda)$, and treat $P(N+2)$ as a constant for the tail probability.

 \section{Extended Experiments}
 \label{sec:experiments}
 We now provide extended experiments. 
\subsection{Details of Synthetic Experiments}
\label{sec:syn-details}
In this section, we provide more details about the synthetic experiments implemented in Section~\ref{sec:synthetic}. 
\subsubsection{Data Generation Process}
We consider an MDP defined by the tuple $\curly{\mathcal{S}, \mathcal{A}, \textnormal{P}, \gamma, r}$.
\begin{itemize}
\item $\mathcal{S} =[-p, p]^p$, with $p \in\curly{5,10,20,40}$, denotes the state space. 
\item $\mathcal{A} =[5]$ is the action space.
\item The reward function is defined as
\eqs{
r(\bs,\ba) = \frac{\ba\tanh(\curly{\bs^\top/p, \ba/4}\cdot \bomega )}{4\bm{1}^\top \bomega},
}
where $\bomega$ is a $(p+1)\times 1$ vector.  
\item $\gamma =0.9$ is the discount factor. 
\item $\alpha = 1$.
\item The transition is defined as
\eqs{
\textnormal{P}(\bs'\given \bs,\ba) = 
\begin{cases} 
1 & s' = \bs + \ba/5 - 0.5 \in \mathcal{S} \\ \frac{1}{\abs{\mathcal{S}}} &\textnormal{otherwise}
\end{cases}.  
}
\end{itemize} 
Next, we solve the MDP with a deep energy-based policy with $\gamma = 0.9$ and $\alpha = 1$, using the soft $Q$-learning method in \citet{haarnoja2017reinforcement}. 
By conducting the learned policy for $50000$ steps, we obtain the demonstration dataset, on which we compare PQR, MaxEnt-IRL, AIRL, and SPL-GD. We assume that $\gamma$ and $\alpha$ are known as required by existing methods. Specifically, the hyperparameters used for the soft Q-learning are reported in Table~\ref{tab:soft-q}.

\begin{table*}[h]
\centering
\begin{tabular*}{1\textwidth}{@{\extracolsep{\fill} }c c c c c}
\hline
  \multicolumn{1}{ c}{\multirow{2}{*}{$p$}} &\multicolumn{1}{ c}{\multirow{2}{*}{Optimization Method}} &\multicolumn{1}{ c}{\multirow{2}{*}{Learning Rate}}& Final Loss&\multicolumn{1}{ c}{\multirow{2}{*}{Number of Epochs}}\\ 
\multicolumn{1}{ c  }{}&\multicolumn{1}{ c  }{}&\multicolumn{1}{ c  }{}&for $Q$-Function&\multicolumn{1}{ c  }{}\\
 \hline 
5 &Adagrad  &$10^{-4}$&$4\times10^{-3}$& 100\\
 10 & Adagrad &$10^{-4}$&$6\times10^{-3}$&150\\
20 & Adagrad &$10^{-4}$&$9\times10^{-3}$&175\\
  40& Adagrad &$10^{-4}$&$1.2\times10^{-2}$&175\\   
  \hline 
\end{tabular*}
\caption {Hyperparameters for soft Q-learning.}
  \label{tab:soft-q}
\end{table*}
\subsubsection{Competing Methods}
In this section, we provide more details regarding how we implement existing methods. 
\paragraph{MaxEnt-IRL}
According to our analysis, without an appropriate identification procedure, the estimated $Q$-function may be shaped by any function of $\bs$. Therefore, we implement MaxEnt-IRL with a grounding procedure by Algorithm~\ref{alg:maxent-irl}.
\begin{center}
{\centering 
\begin{minipage}{.6\linewidth}
\begin{algorithm}[H]
		\caption{MaxEnt-IRL with Grounding Procedure}
		\label{alg:maxent-irl}
		\begin{algorithmic}[1]
			\Input $\mathds{X} = \curly{\bs_0, \ba_0, \bs_1, \ba_1, \cdots, \bs_T, \ba_T}$, and $Q(0,0)$
			\Output $\hat{Q}(\bs, \ba)$.
			\State $\hat{Q}(\bs, \ba) \leftarrow $\Call{MaxEnt-IRL}{$\mathds{X}$}
			\State $\hat{Q}(\bs, \ba) \leftarrow \hat{Q}(\bs,\ba) - \hat{Q}(0,0) + Q(0,0)$. 
			\State \Return $\hat{Q}(\bs, \ba)$.
		\end{algorithmic}
	\end{algorithm}
\end{minipage}}
\end{center}
In other words, we allow MaxEnt-IRL to access the ground truth $Q$ value $Q(0,0)$. Empirically, this procedure significantly improves the performance of MaxEnt-IRL. 

\paragraph{SPL-GD}
The original SPL-GD can only deal with discrete and finite state variables. To make the SPL-GD method available for our setting, we provide access to the true value function and Q-function. The procedure is summarized in Algorithm~\ref{alg:SPL-GD}.
\begin{center}
{\centering 
\begin{minipage}{.6\linewidth}
\begin{algorithm}[H]
		\caption{SPL-GD for Continuous States}
		\label{alg:SPL-GD}
		\begin{algorithmic}[1]
			\Input Dataset: $\mathds{X} = \curly{\bs_0, \ba_0, \bs_1, \ba_1, \cdots, \bs_T, \ba_T}$, $Q(\bs,\ba)$ and $V(\bs)$.
			\Output $\hat{r}(\bs, \ba)$.
			\For {$t\in [T]$}
			\State $y_t \leftarrow Q(\bs_t,\ba_t)-\gamma V(\bs_{t+1})$
			\EndFor
			\State Train a linear regression with $\curly{\bs_t}_{t=0}^{T-1}$ and $\curly{\ba_t}_{t=0}^{T-1}$ for $\hat{r}(\bs,\ba)$.
			\State \Return $\hat{r}(\bs, \ba)$.

		\end{algorithmic}
	\end{algorithm}
	\end{minipage}}
\end{center}

\subsubsection{PQR Deep Networks Hyperparameters}
In this section, we provide hyperparameters for training the deep neural networks in PQR. There are three types of deep neural networks in PQR: networks for policy estimation, $Q$ estimation, and reward estimation. The parameters are provided in Table~\ref{tab:pqr}.  
\begin{table*}[h]
\centering
\begin{tabular*}{0.9\textwidth}{@{\extracolsep{\fill} }c c c  c}
\hline
  \multicolumn{1}{ c}{\multirow{2}{*}{Types of Networks}} &\multicolumn{1}{ c}{\multirow{2}{*}{Optimization Method}} &\multicolumn{1}{ c}{\multirow{2}{*}{Learning Rate}}& \multicolumn{1}{ c}{\multirow{2}{*}{Number of Steps}}\\ 
\multicolumn{1}{ c  }{}&\multicolumn{1}{ c  }{}&\multicolumn{1}{ c  }{}&\multicolumn{1}{ c  }{}\\
 \hline 
Policy Estimation &Adagrad  &$5\times10^{-4}$& 600\\
 $Q$ Estimation & Adagrad &$1\times 10^{-3}$&1000\\
Reward Estimation & Adagrad &$1\times 10^{-3}$&1000\\
  \hline 
\end{tabular*}
\caption {Hyperparameters for PQR.}
\label{tab:pqr}
\end{table*}
The training errors of the neural networks are provided in Figure~\ref{fig:train-error} 
\begin{figure*}[h]
	\centering
	\begin{subfigure}{0.3\textwidth}
		\centering
		\includegraphics[scale=0.2]{./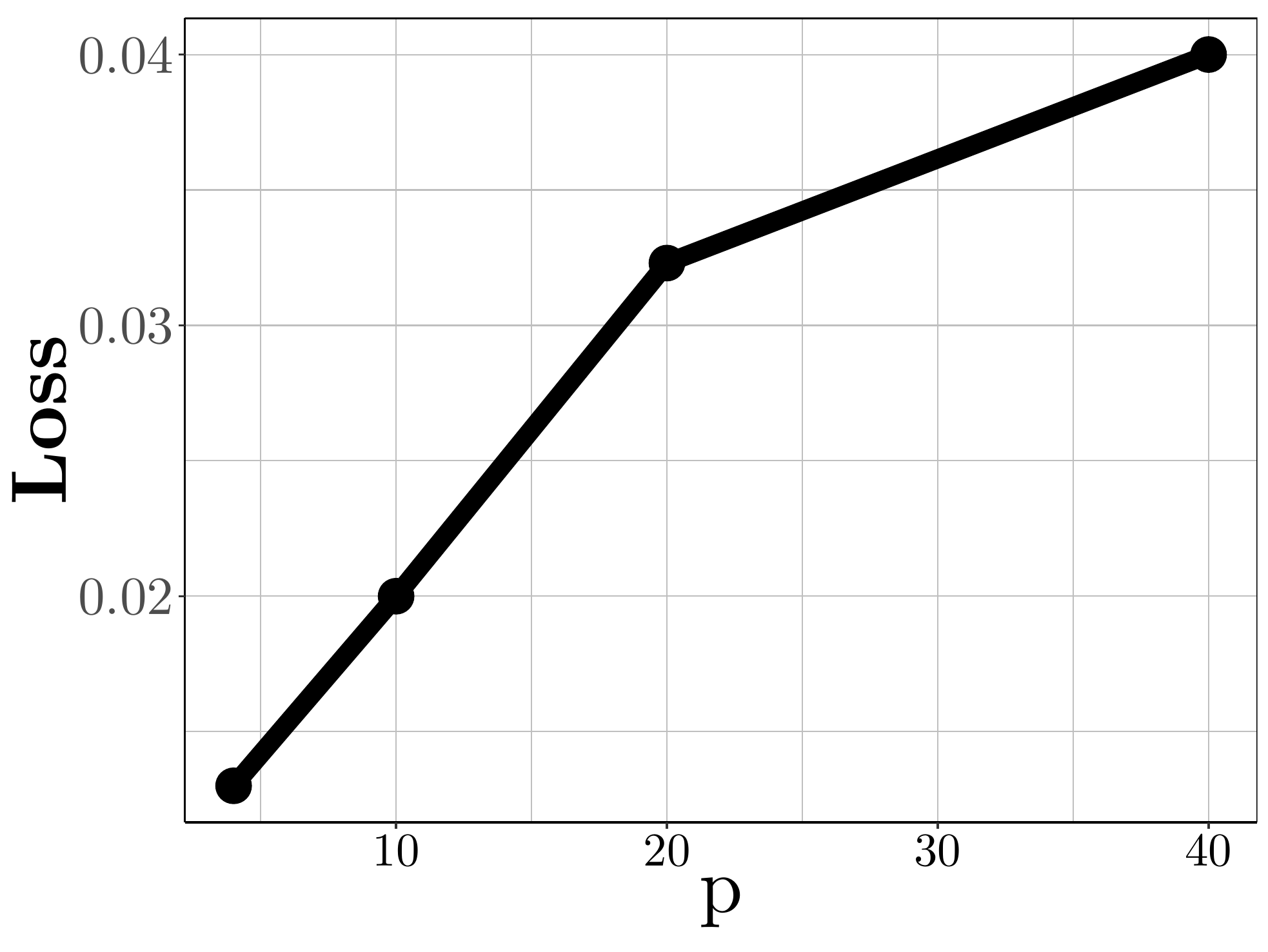}
		\caption{Neural networks for policy.}
	\end{subfigure}
	\centering
	\begin{subfigure}{0.3\textwidth}
		\centering
		\includegraphics[scale=0.2]{./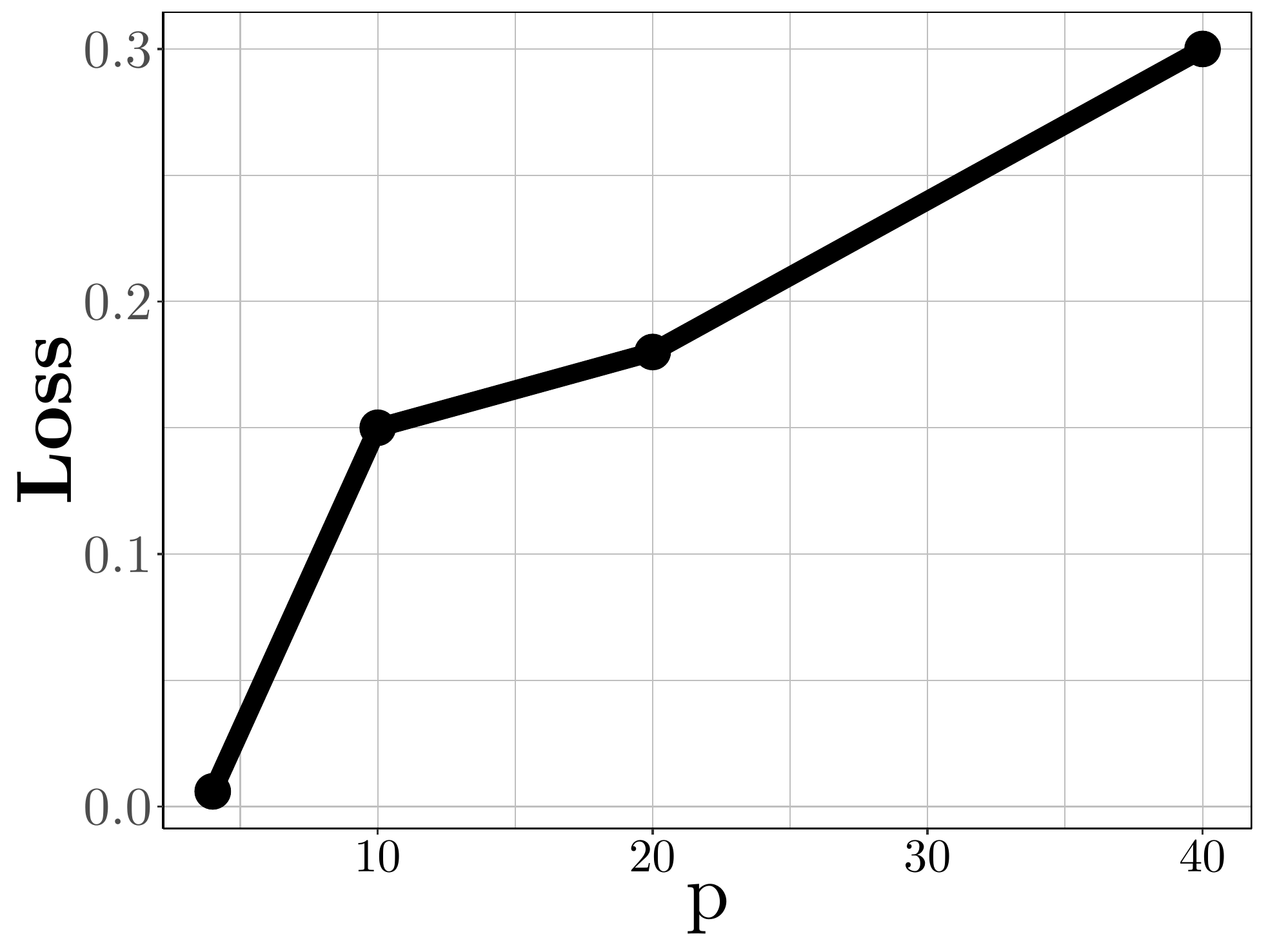}
		\caption{Neural networks for $Q$.}
	\end{subfigure}
	\centering
	\begin{subfigure}{0.3\textwidth}
		\centering
		\includegraphics[scale=0.2]{./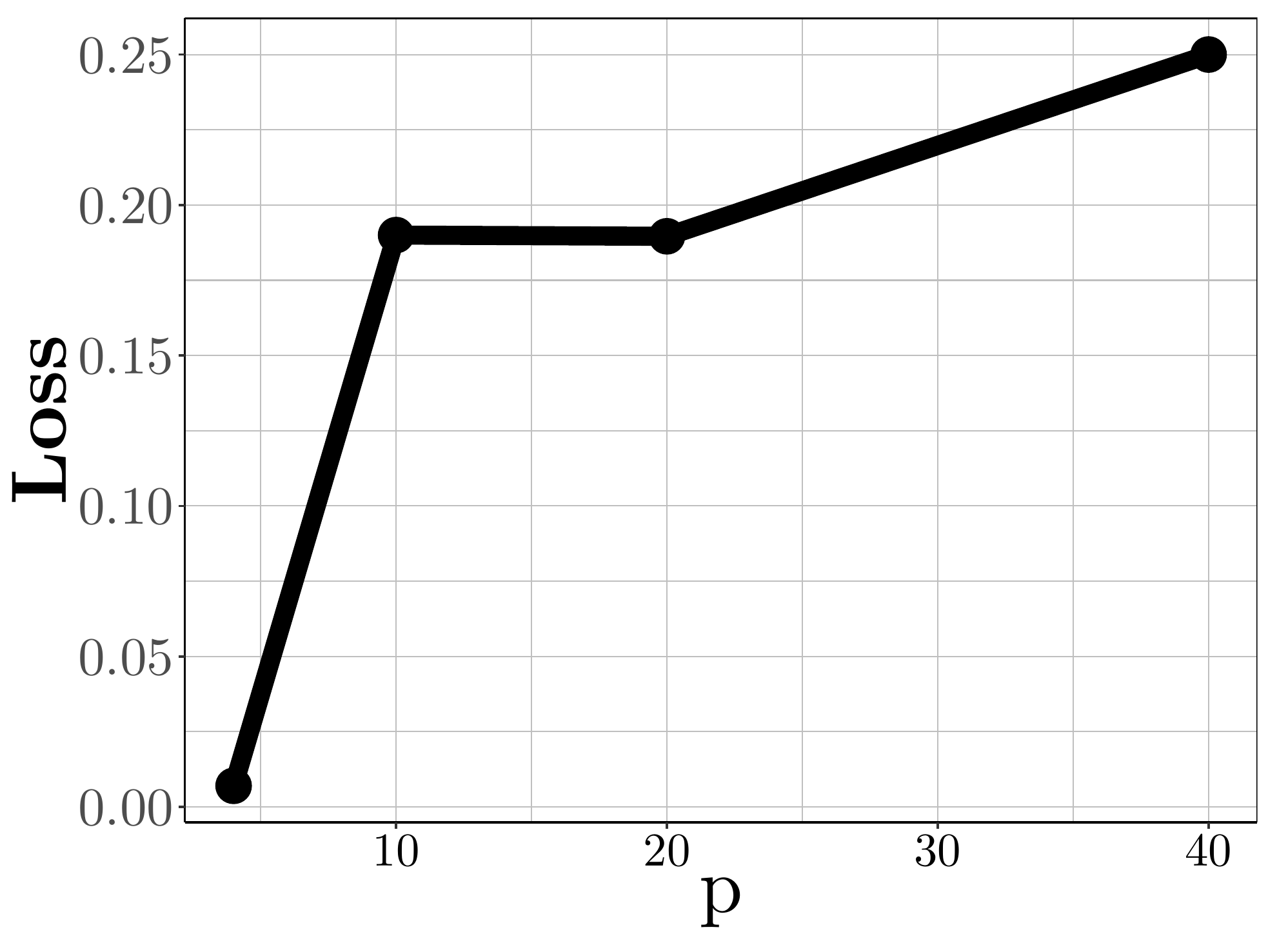} 
		\caption{Neural Networks for reward.}
	\end{subfigure}
	\centering
	\caption{Final loss on the training datasets of neural networks for different tasks for MDPs with $p = \curly{5,10,20,40}$. }
	\label{fig:train-error}
\end{figure*}

\subsection{Extended Synthetic Experiments}
\label{sec:ext-syn-sup}

\subsubsection{Robustness Analysis}
\label{sec:robust-analysis}
It should be noticed that the performance of the proposed PQR method relies on the existence of anchor actions in Assumption~\ref{asm:identification}. While as we have mentioned this assumption is reasonable in many real-world scenarios~\citep{hotz1993conditional,bajari2010identification,manzanares2015improving}, it contradicts the setting of SPL-GD and AIRL, where the reward function is assumed to only depend on the state variables. Therefore, in this section, we study the performance of the proposed method when the underlying reward is indeed a state-only function.

Specifically, we replace the reward function in Section~\ref{sec:syn-details} with the following function:
\eqs{
r(\bs) = \tanh(\curly{\bs^\top/p, \ba/4}\cdot \bomega ). 
}
Then, we use the same configurations as in Section~\ref{sec:synthetic}, and compare the performances of PQR, SPL-GD, and AIRL in terms of reward recovery for an MDP with a 10-dimension state variable. When implement the PQR method, we randomly choose one action as the anchor action. 
The results are summarized in Figure~\ref{fig:r-robust}. 

\begin{figure}[H]
	\centering
		\centering
		\includegraphics[scale=0.25]{./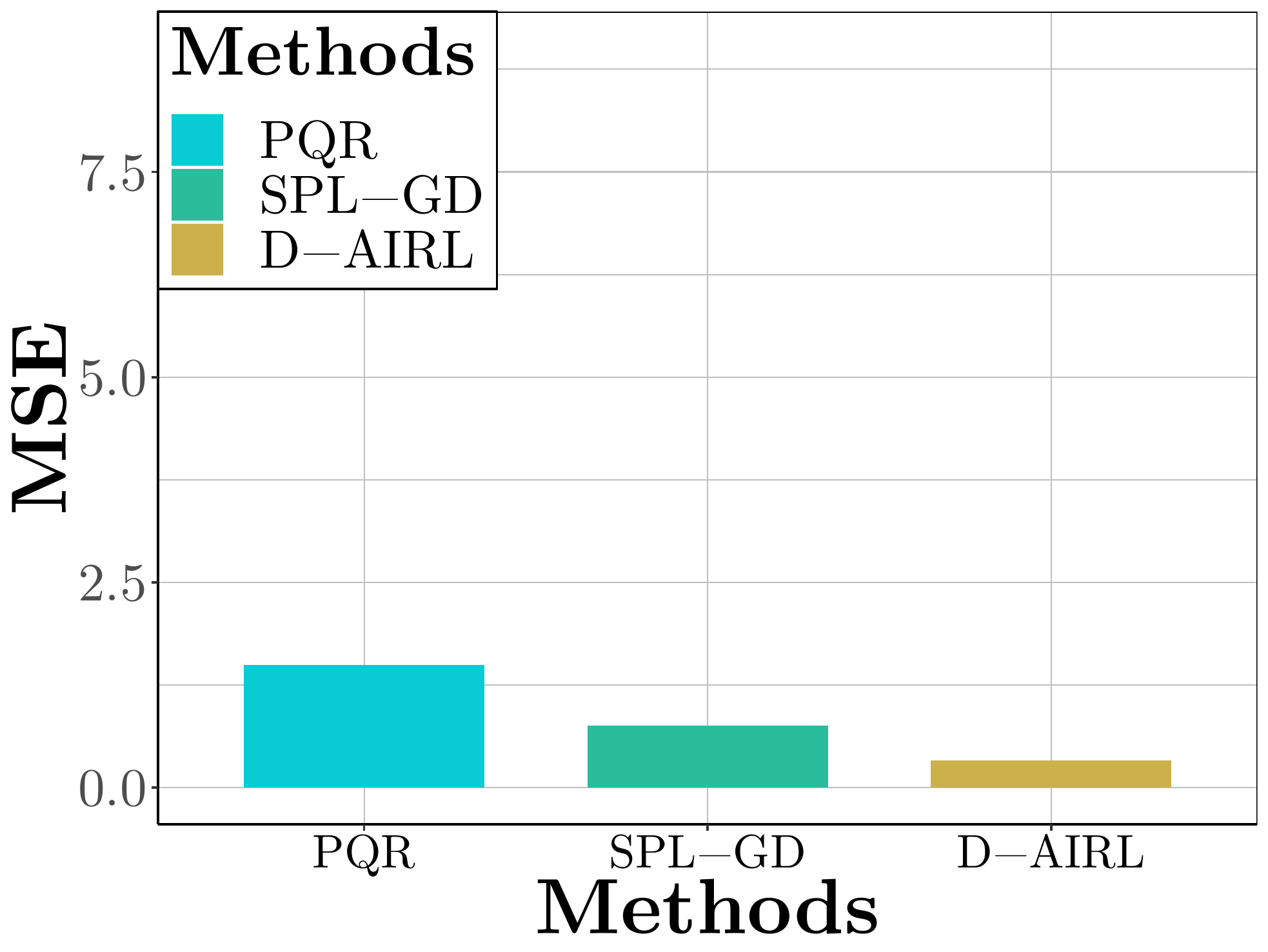}
		\caption{MSE for reward recovery}
		\label{fig:r-robust} 
\end{figure}

Note that D-AIRL performs the best if the reward function is indeed indecent with the action variable. The performance of the proposed method may not be guaranteed if anchor-action assumption is not satisfied. This is not surprising since the PQR method wrongly takes one action as the anchor action for the reward estimation under this setting.

 \subsubsection{Sensitivity Analysis}
 \label{sec:sensitivity-analysis}
Theoretically, it is hard to identify the true $\alpha$ and $\gamma$. However, in some applications, there are some well-justified and widely used values for $\gamma$ and $\alpha$, like taking $\gamma$ as the risk-free return rate in financial applications~\citep{merton1973intertemporal}. Therefore, we treat $\alpha$ and $\gamma$ as two hyperparameters, and provide a sensitivity analysis for PQR.

To analyze $\gamma$, we set $p=10$ and $\alpha = 1$, and solve an MDP to generate three datasets with $\gamma = [0, 0.5, 0.9]$. We apply PQR with $\gamma = 0.9$ to the 3 datasets and check the reward estimation accuracy. We compare with the MaxEnt-IRL method, which is equivalent to taking $\gamma = 0$. Figure~\ref{fig:delta} plots the results. Recall from Section~\ref{sec:PQR} that, when true $\gamma = 0$, the reward function equals to the $Q$-function. Therefore, MaxEnt-IRL gives a very accurate estimation, outperforming PQR with its mis-specified $\gamma=0.9$. As true $\gamma$ increases, PQR performance improves, while MaxEnt-IRL performance deteriorates.
At true $\gamma = 0.9$, PQR achieves low error, while the error of MaxEnt-IRL blows up.
It is important to note that the PQR error does not blow, as the identification procedure Algorithm~\ref{alg:fqii} constrains the reward estimation to the right scale. 

To analyze $\alpha$, we set $p=10$ and $\gamma = 0.9$, and solve an MDP to generate three datasets with $\alpha = [0.1, 0.5, 1]$. We apply PQR with $\alpha = 1$ and report results in Figure~\ref{fig:delta}. As true $\alpha$ approaches $1$, PQR performance improves. When true $\alpha = 0.1$, PQR error increases significantly. This indicates that PQR may be relatively more sensitive to $\alpha$. Meanwhile, most IRL methods operate under $\alpha = 1$, with no ability to tune it. We provide a strategy to choose $\alpha$ in Section~\ref{sec:alpha}. 

\begin{figure}[H] 
\begin{subfigure}{0.48\textwidth}
	\centering 
		\includegraphics[scale=0.2]{./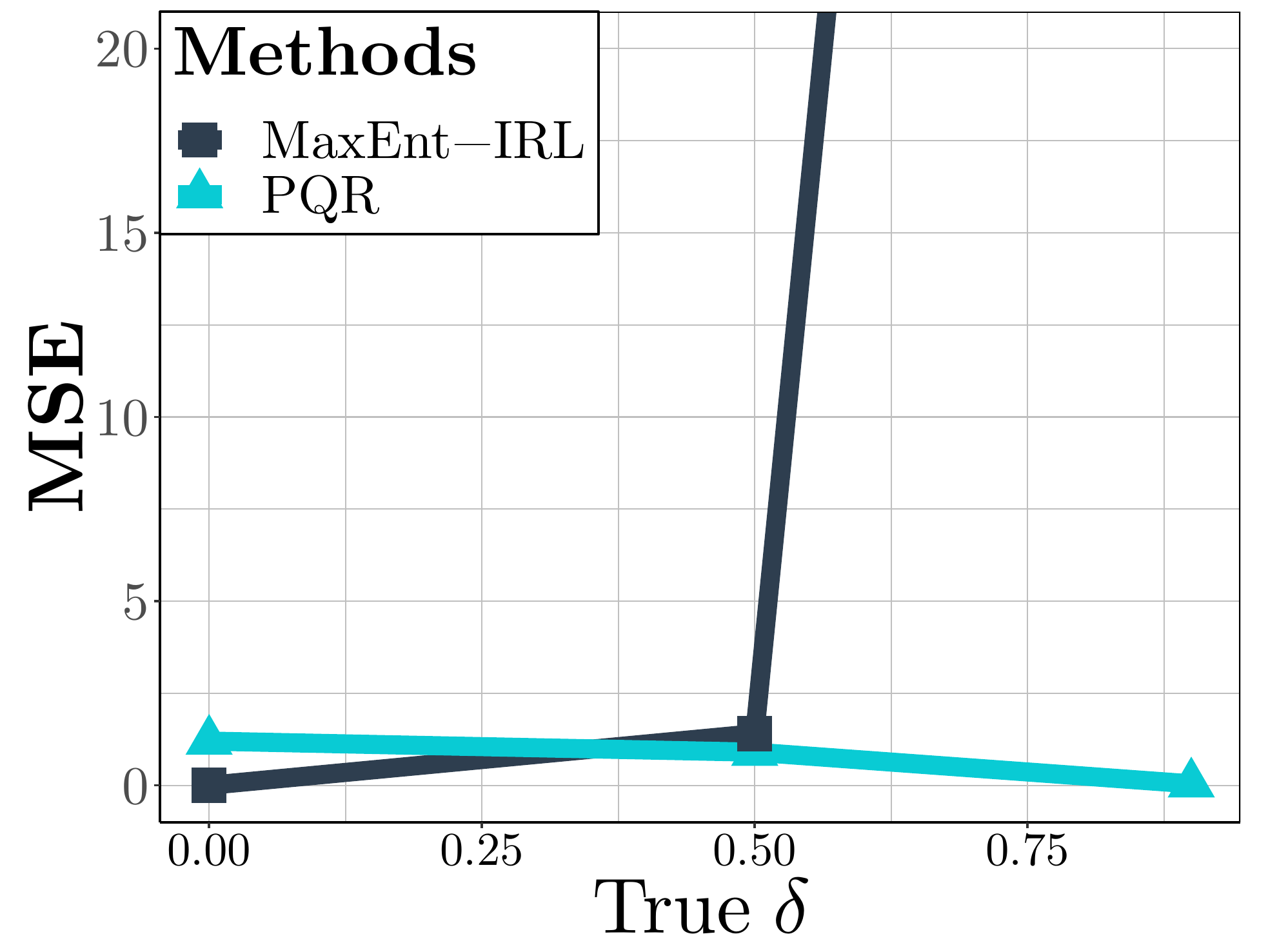}
	\end{subfigure}
	\begin{subfigure}{0.48\textwidth}
		\centering
		\includegraphics[scale=0.2]{./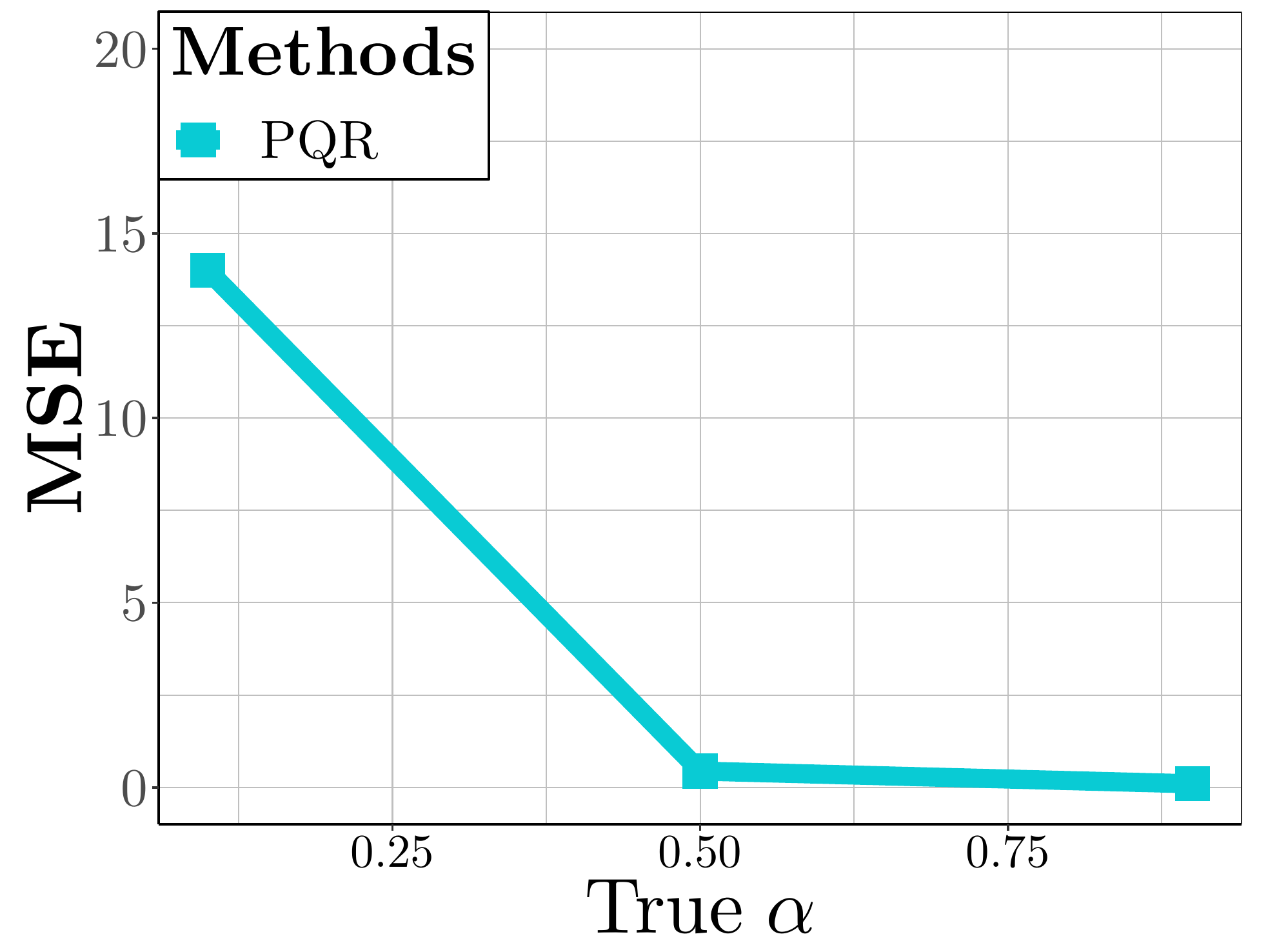}
	\end{subfigure}
	\centering
	\caption{Sensitivity analysis for $\gamma$ and $\alpha$}
	\label{fig:delta}
\end{figure}

\subsubsection{$\alpha$ Selection}
\label{sec:alpha}
In this section, we propose a strategy to select $\alpha$, under the assumption that we approximately know the scale of the true reward function, such as the average reward, maximal reward, or the minimal reward achieved in the dataset.  
To start with, we study the effect of $\alpha$. 

\begin{theorem}
Given the demonstration dataset $\mathds{X} = \curly{\bs_0, \ba_0, \bs_1, \ba_1, \cdots, \bs_T, \ba_T}$,
we use $\hat{r}_{\alpha_1}$ and $\hat{r}_{\alpha_2}$ to denote two estimated reward functions by PQR, using $\alpha_1, \alpha_2 \in (0,\infty)$ respectively. We assume that $\hat{\pi}$ is an accurate estimation to ${\pi^*}$, and the expectation in PQR estimators are exactly calculated. Therefore,
\alis{ 
\frac{\hat{r}_{\alpha_1}(\bs,\ba) - \frac{g(\bs)}{1-\gamma} + \gamma \EE\left[\frac{g(\bs')}{1-\gamma} \given \bs, \ba \right]}{
\hat{r}_{\alpha_2}(\bs,\ba) - \frac{g(\bs)}{1-\gamma} + \gamma \EE\left[\frac{g(\bs')}{1-\gamma} \given \bs, \ba \right]
}
= \frac{\alpha_1}{ \alpha_2},
}
where the expectation is on $\bs'$ over one-step transition. 
\end{theorem}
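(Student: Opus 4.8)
The plan is to exploit the fact that, once $\hat{\pi}=\pi^*$ is held fixed and all expectations are exact, the temperature $\alpha$ enters every estimator in the PQR chain only through explicit linear factors. I would therefore show that $\hat{r}_\alpha(\bs,\ba)$ is an affine function of $\alpha$, say $\hat{r}_\alpha(\bs,\ba)=A(\bs,\ba)+\alpha\,B(\bs,\ba)$ with $A$ and $B$ independent of $\alpha$, and then verify that the quantity subtracted in the statement is precisely the intercept $A(\bs,\ba)$. Granting this, both numerator and denominator equal $\alpha_i\,B(\bs,\ba)$, and their ratio collapses to $\alpha_1/\alpha_2$.

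First I would analyze the anchor estimator \ref{eq:contraction}. Since $\alpha$ multiplies only the $-\log\hat{\pi}(\bs',\ba^{A})$ term inside $\hat{\mathcal{T}}$, the defining fixed-point operator is affine in its data, so by the uniqueness guaranteed in Lemma~\ref{lem:fix} its solution splits as $\hat{Q}^A_\alpha(\bs)=u(\bs)+\alpha\,v(\bs)$, where $u$ is the fixed point driven by $g$ alone, $u(\bs)=g(\bs)+\gamma\EE[u(\bs')\given\bs,\ba^{A}]$, and $v$ collects the log-policy contribution. Feeding $\hat{Q}^A_\alpha$ into \ref{eq:q} (evaluating it at $\ba=\ba^{A}$ shows $\hat{Q}_\alpha(\bs',\ba^{A})=\hat{Q}^A_\alpha(\bs')$) and then into \ref{eq:main}, I would group the result by powers of $\alpha$: the $\alpha$-free part is $A(\bs,\ba)=u(\bs)-\gamma\EE[u(\bs')\given\bs,\ba]$, and every remaining term carries a factor $\alpha$, giving $B$. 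A cleaner but equivalent route uses the policy identity $\log\pi^*(\bs,\ba)=\tfrac{1}{\alpha_{\mathrm{true}}}(Q(\bs,\ba)-V(\bs))$, obtained from Lemma~\ref{lem:PQR} and Lemma~\ref{lem:v} at the true temperature $\alpha_{\mathrm{true}}$ that generated $\pi^*$, together with the Bellman relation $Q=r+\gamma\EE[V\given\bs,\ba]$ of Lemma~\ref{lem:alt}; this yields the explicit form $\hat{r}_\alpha=A+\tfrac{\alpha}{\alpha_{\mathrm{true}}}(r-A)$, making the affine dependence on $\alpha$ transparent and showing the ratio is independent of $r$, $V$, and $\alpha_{\mathrm{true}}$.

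The main obstacle is the final identification: showing that the intercept $A(\bs,\ba)$ coincides with the subtracted shift $\tfrac{g(\bs)}{1-\gamma}-\gamma\EE[\tfrac{g(\bs')}{1-\gamma}\given\bs,\ba]$. Since $A(\bs,\ba)=u(\bs)-\gamma\EE[u(\bs')\given\bs,\ba]$, this reduces to checking $u(\bs)=\tfrac{g(\bs)}{1-\gamma}$, i.e.\ that the discounted anchor-reward sum defining $u$ collapses to the stated closed form. This holds exactly when $g$ is constant along the anchor transition, and in particular under the assumption $g(\bs)\equiv 0$ used throughout Section~\ref{sec:exp} and in the $\alpha$-selection application, in which case both the intercept and the shift vanish identically. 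Once $A$ is matched to the shift, dividing the two affine expressions (which is legitimate whenever $B(\bs,\ba)\neq 0$, the generic case, since $B\equiv 0$ would make $\hat{r}_\alpha$ independent of $\alpha$) produces $\alpha_1/\alpha_2$ and completes the proof.
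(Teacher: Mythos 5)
Your proposal is correct and, at its core, runs on the same mechanism as the paper's own proof: with $\hat{\pi}=\pi^*$ held fixed and expectations exact, $\alpha$ enters the anchor fixed point \ref{eq:contraction} affinely, this affine dependence propagates through \ref{eq:q} and \ref{eq:main}, and subtracting the right intercept collapses the ratio to $\alpha_1/\alpha_2$. The paper executes this by subtracting $g(\bs)/(1-\gamma)$ from both sides of the two fixed-point equations, concluding that $\hat{Q}^A_{\alpha_i}(\bs)-g(\bs)/(1-\gamma)$ is proportional to $\alpha_i$, and then pushing this through \ref{eq:q} and \ref{eq:main}; your explicit decompositions $\hat{Q}^A_\alpha=u+\alpha v$ and $\hat{r}_\alpha=A+\alpha B$ are the same computation, organized more carefully.

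Where you genuinely go beyond the paper is the step you flag as the main obstacle, and your suspicion there is warranted. The paper's pivotal display asserts
\eqs{
\hat{Q}^A_{\alpha_i}(\bs) - \frac{g(\bs)}{1-\gamma} = \gamma \EE\left[ -\alpha_i \log(\hat{\pi}(\bs', \ba^{A})) + \hat{Q}^A_{\alpha_i}(\bs') - \frac{g(\bs')}{1-\gamma} \given \bs, \ba^{A} \right],
}
but subtracting $g(\bs)/(1-\gamma)$ from the fixed-point equation actually leaves the residual $\frac{\gamma}{1-\gamma}\left(\EE\left[g(\bs')\given\bs,\ba^{A}\right]-g(\bs)\right)$, so this identity --- and with it the theorem as stated --- holds if and only if $g(\bs)=\EE\left[g(\bs')\given\bs,\ba^{A}\right]$ for all $\bs$. (In the paper this is masked by writing the $\bs$-dependent quantity $\hat{Q}^A_{\alpha_i}(\bs)-g(\bs)/(1-\gamma)$ inside the expectation over $\bs'$.) Your reduction of the matching problem to ``$u=g/(1-\gamma)$'', and your observation that this holds in particular for $g\equiv 0$ (the case used in all of the paper's experiments and in Algorithm~\ref{alg:alpha}) or constant $g$, is exactly the hypothesis the paper silently assumes; note only that the precise condition is the conditional-mean preservation $g(\bs)=\EE\left[g(\bs')\given\bs,\ba^{A}\right]$, slightly weaker than $g$ being pathwise constant along anchor transitions. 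Your secondary route via the policy identity and Lemma~\ref{lem:alt}, giving $\hat{r}_\alpha=A+\frac{\alpha}{\alpha_{\mathrm{true}}}(r-A)$, is also sound, and your caveat that division requires $B(\bs,\ba)\neq 0$ is a nondegeneracy point the paper likewise omits. In short: same strategy as the paper, but your version is the rigorous one.
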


\begin{proof}
First, it should be noticed that $\alpha$ does not affect the estimation to $\pi^*$. 

Then, according to \ref{eq:contraction}, we have
\alis{
    \hat{Q}^A_{\alpha_1}(\bs) &= \hat{\mathcal{T}}_{\alpha_1}\hat{Q}^A_{\alpha_1}(\bs)
    \\ \hat{Q}^A_{\alpha_2}(\bs) &= \hat{\mathcal{T}}_{\alpha_2}\hat{Q}^A_{\alpha_2}(\bs),
}
where 
\alis{
    \hat{\mathcal{T}}_{\alpha_1}f(\bs) &:= g(\bs)+\gamma \hat{\EE}_{\bs'}{\left[ -\alpha_1 \log(\hat{\pi}(\bs', \ba^{A})) + f(\bs') \given \bs, \ba^{A} \right]}
    \\\hat{\mathcal{T}}_{\alpha_2}f(\bs) &:= g(\bs)+\gamma \hat{\EE}_{\bs'}{\left[ -\alpha_2 \log(\hat{\pi}(\bs', \ba^{A})) + f(\bs') \given \bs, \ba^{A} \right]}.
}

Therefore, 
\alis{
    \hat{Q}^A_{\alpha_1}(\bs) - \frac{g(\bs)}{1-\gamma} &:= \gamma \hat{\EE}_{\bs'}{\left[ -\alpha_1 \log(\hat{\pi}(\bs', \ba^{A})) +  \hat{Q}^A_{\alpha_1}(\bs) - \frac{g(\bs)}{1-\gamma}  \given \bs, \ba^{A} \right]}
    \\\hat{Q}^A_{\alpha_2}(\bs) - \frac{g(\bs)}{1-\gamma} &:= \gamma \hat{\EE}_{\bs'}{\left[ -\alpha_2 \log(\hat{\pi}(\bs', \ba^{A})) +  \hat{Q}^A_{\alpha_2}(\bs) - \frac{g(\bs)}{1-\gamma}  \given \bs, \ba^{A} \right]},
} 
which suggests that 
\eqs{
    \frac{\hat{Q}^A_{\alpha_1}(\bs) - \frac{g(\bs)}{1-\gamma}}{\hat{Q}^A_{\alpha_2}(\bs) - \frac{g(\bs)}{1-\gamma}} = \frac{\alpha_1}{\alpha_2}.
}
Therefore, by \ref{eq:q}, we can conclude that
\eq{alpha-iden1}{
 \frac{\hat{Q}_{\alpha_1}(\bs, \ba) - \frac{g(\bs)}{1-\gamma}}{\hat{Q}_{\alpha_2}(\bs, \ba) - \frac{g(\bs)}{1-\gamma}} = \frac{\alpha_1}{\alpha_2}.
}
By taking \eqref{eq:alpha-iden1} into \ref{eq:main}, we have

\alis{
\frac{\hat{r}_{\alpha_1}(\bs,\ba) - \frac{g(\bs)}{1-\gamma} + \gamma \EE\left[\frac{g(\bs')}{1-\gamma} \given \bs, \ba \right]}{
\hat{r}_{\alpha_2}(\bs,\ba) - \frac{g(\bs)}{1-\gamma} + \gamma \EE\left[\frac{g(\bs')}{1-\gamma} \given \bs, \ba \right]
}
= \frac{\alpha_1}{ \alpha_2}.
}

\end{proof}
As a special case, when $g(\bs) = 0$, we have
\eqs{
\frac{\hat{r}_{\alpha_1}(\bs,\ba)}{
\hat{r}_{\alpha_2}(\bs,\ba)
}
= \frac{\alpha_1}{ \alpha_2}.
}

Therefore, the $\alpha$ used in PQR determines the scale of the estimated reward function. If we know the scale of the true reward function, we can tune $\alpha$ so that the estimated reward function has the right scale. When $g(\bs) = 0$, the procedure to select $\alpha$ is summarized in Algorithm~\ref{alg:alpha}:
\begin{center}
    
\begin{minipage}{.6\linewidth}
\begin{algorithm}[H]
\caption{$\alpha$ Selection}
		\label{alg:alpha} 
		\begin{algorithmic}[1]
			\Input Dataset: $\mathds{X} = \curly{\bs_0, \ba_0, \bs_1, \ba_1, \cdots, \bs_T, \ba_T}$.
			\Input $R$, the average reward achieved on $\mathds{X}$.
			\Input $\gamma$ and $N$.
			\State $\hat{r}(\bs,\ba) \leftarrow \Call{PQR}{\mathds{X}, \alpha=1, \gamma, N}$
			\State$\hat{\alpha} =\frac{R\cdot (T+1)}{\sum_{(\bs,\ba) \in \mathds{X}}\hat{r}(\bs,\ba)}$
			\State \Return $\hat{\alpha}$

		\end{algorithmic} 
	\end{algorithm}
\end{minipage}
\end{center}

\subsection{Details of Real-World Experiments}
\label{sec:real-world-sup}
In this section, we provide detailed configurations of the PQR method when applied to the airline market entry analysis. To start with, we follow the convention in~\citet{benkard2004dynamic,berry2010tracing} and let $\alpha = 1$. As mentioned in Section~\ref{sec:airline}, we only focus on the top 60 CSAs, summarized in Table~\ref{tab:csa}, and 11 airline companies summarized in Table~\ref{tab:dict-airline}. 

We take $\delta = 0.95$. Specifically, the hyperparameters used are provided in Table~\ref{tab:deep-airline}. We train the methods using the data from the start of 2013 to the end of 2014, and calculate the likelihood using the data of January 2015.  

\begin{table*}[h]
\centering
\begin{tabular*}{0.9\textwidth}{@{\extracolsep{\fill} }c c| c  c}
\hline
  \multicolumn{1}{ c}{\multirow{2}{*}{CSAs }} &\multicolumn{1}{ c|}{\multirow{2}{*}{Airport Codes}} &\multicolumn{1}{ c}{\multirow{2}{*}{CSAs}}& \multicolumn{1}{ c}{\multirow{2}{*}{Airport Codes}}\\ 
\multicolumn{1}{ c  }{}&\multicolumn{1}{ c | }{}&\multicolumn{1}{ c  }{}&\multicolumn{1}{ c  }{}\\
 \hline 
Albuquerque&ABQ&Orlando&MCO\\
Albany&ALB&Chicago&ORD, MDW\\
Anchorage&ANC&Memphis&MEM\\
Atlanta&ATL&Milwaukee&MKE\\
Austin&AUS&Minneapolis-St.Paul&MSP\\
Hartford&BDL&NewOrleans&MSY\\
Birmingham&BHM&SanFrancisco&SFO, SJC, OAK\\
Nashville&BNA &Kahului&OGG\\ 
Boise&BOI&OklahomaCity&OKC\\
Boston&BOS, MHT, PVD&Omaha&OMA\\
Buffalo&BUF&Norfolk&ORF\\
LosAngeles&LAX, ONT, SNA, BUR&PalmBeach&PBI\\
WashingtonDC&IAD, DCA, BWI&Portland&PDX\\
Cleveland&CLE&Philadelphia&PHL\\
Charlotte&CLT&Phoenix&PHX\\
Columbus&CMH&Pittsburgh&PIT\\
Cincinnati&CVG&Raleigh-Durham&RDU\\
Dallas&DFW, DAL&Reno&RNO\\
Denver&DEN&Southwest Florida&RSW\\
Detroit&DTW&SanDiego&SAN\\ 
ElPaso&ELP&SanAntonio&SAT\\
New York&JFK, LGA, EWR&Louisville&SDF\\
Miami&MIA, FLL&Seattle&SEA\\
Spokane&GEG&SanJuan&SJU\\ 
Honolulu&HNL&SaltLakeCity&SLC\\
Houston&IAH, HOU&Sacramento&SMF\\
Indianapolis&IND&St.Louis&STL\\
Jacksonville&JAX&Tampa&TPA\\
LasVegas&LAS&Tulsa&TUL\\ 
KansasCity&MCI&Tuscon&TUS\\
\hline 
\end{tabular*}
\caption {Top 60 CSAs with the most itineraries in 2002.}
\label{tab:csa}
\end{table*}

\begin{table*}[h]
\centering
\begin{tabular}{c c}
\hline
  \multicolumn{1}{ c}{\multirow{2}{*}{Airline Companies }} &\multicolumn{1}{ c}{\multirow{2}{*}{Airline Company Codes}} \\ 
\multicolumn{1}{ c  }{}&\multicolumn{1}{ c }{}\\
\hline
American Airlines & AA\\
Alaska Airlines & AS\\
Jetblue Airlines &B6\\
Continental Airlines& CO\\
Delta &DL\\
America West&HP\\
Northwest &NW\\
TWA &TW\\
United Airlines&UA\\
US Airways &US\\
Southwest  &WN\\
 \hline 
\hline
\end{tabular}
\caption {List of airline companies and their codes.} 
\label{tab:dict-airline}  
\end{table*}

The state variables include the distance between CSAs, 
the populations of the origin and destination CSAs, the log of the current passenger density for the market, whether the carrier is already flying for the market, the number of nonstop competitors for the market, the share of flights operated out of the origin CSA and the destination CSA, whether the carrier operates any flights out of the origin or destination, and the number of flights operated by the carrier in the current period.

The estimated reward functions of all the 11 airline companies for the top 5 markets are provided in Figure~\ref{fig:airline-re-all}.

\begin{table*}[h]
\centering
\begin{tabular*}{0.9\textwidth}{@{\extracolsep{\fill} }c c c  c}
\hline
  \multicolumn{1}{ c}{\multirow{2}{*}{Types of Networks}} &\multicolumn{1}{ c}{\multirow{2}{*}{Optimization Method}} &\multicolumn{1}{ c}{\multirow{2}{*}{Learning Rate}}& \multicolumn{1}{ c}{\multirow{2}{*}{Number of Steps}}\\ 
\multicolumn{1}{ c  }{}&\multicolumn{1}{ c  }{}&\multicolumn{1}{ c  }{}&\multicolumn{1}{ c  }{}\\
 \hline 
Policy Estimation &Adagrad  &$1.5\times10^{-4}$& 1000\\
 $Q$ Estimation & Adagrad &$0.5\times 10^{-4}$&500\\
Reward Estimation & Adagrad &$1\times 10^{-3}$&1000\\
  \hline 
\end{tabular*}
\caption {Hyperparameters for PQR for the airline market entry analysis.}
\label{tab:deep-airline}
\end{table*}

\begin{figure}[H] 
	\begin{subfigure}{0.48\textwidth}
		\centering
		\includegraphics[scale=0.14]{./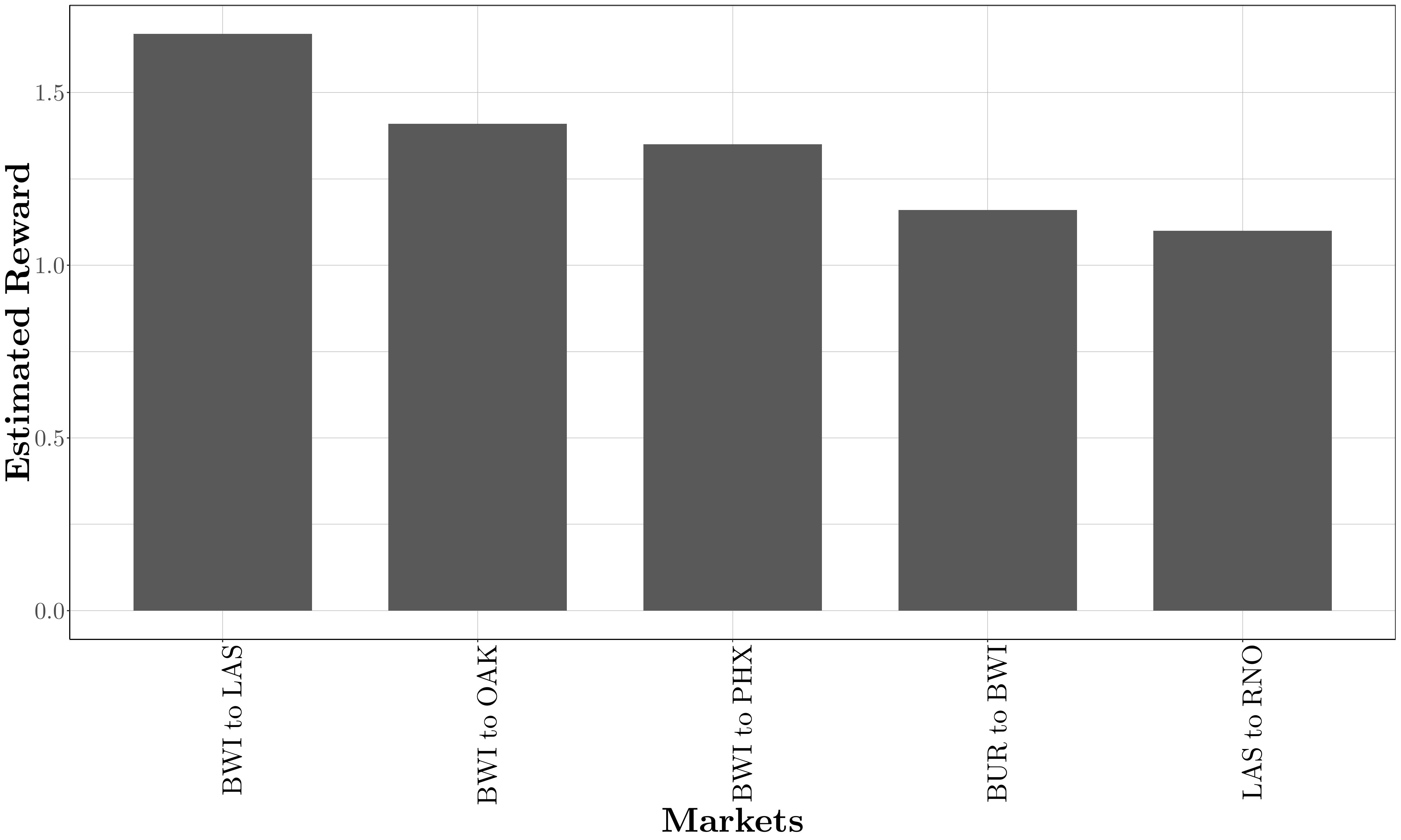}
		\caption{Alaska Airlines} 
	\end{subfigure} 
	\centering 
\begin{subfigure}{0.48\textwidth}
	\centering  
		\includegraphics[scale=0.14]{./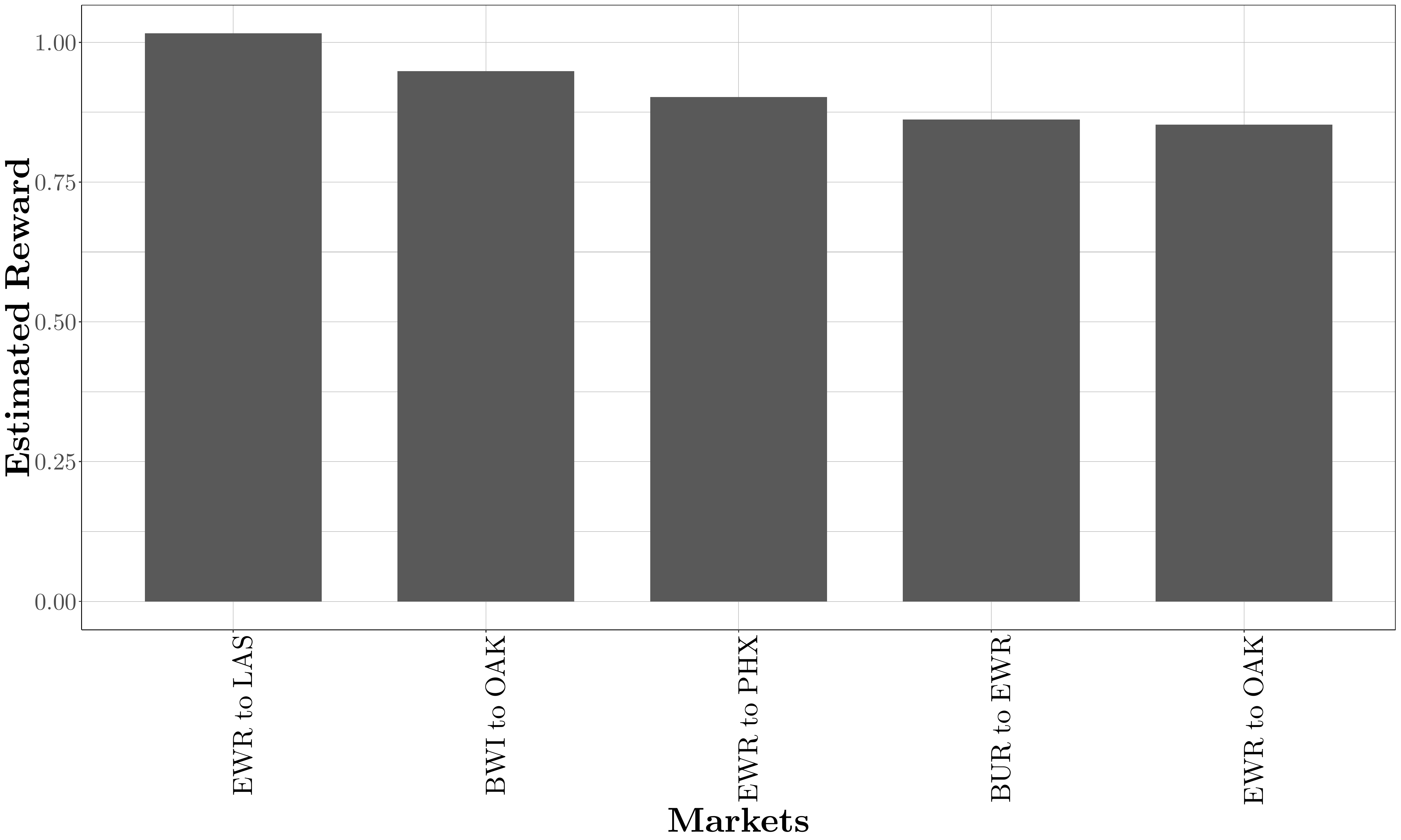} 
		\caption{Jetblue Airlines}
	\end{subfigure}
	\begin{subfigure}{0.48\textwidth}
		\centering
		\includegraphics[scale=0.14]{./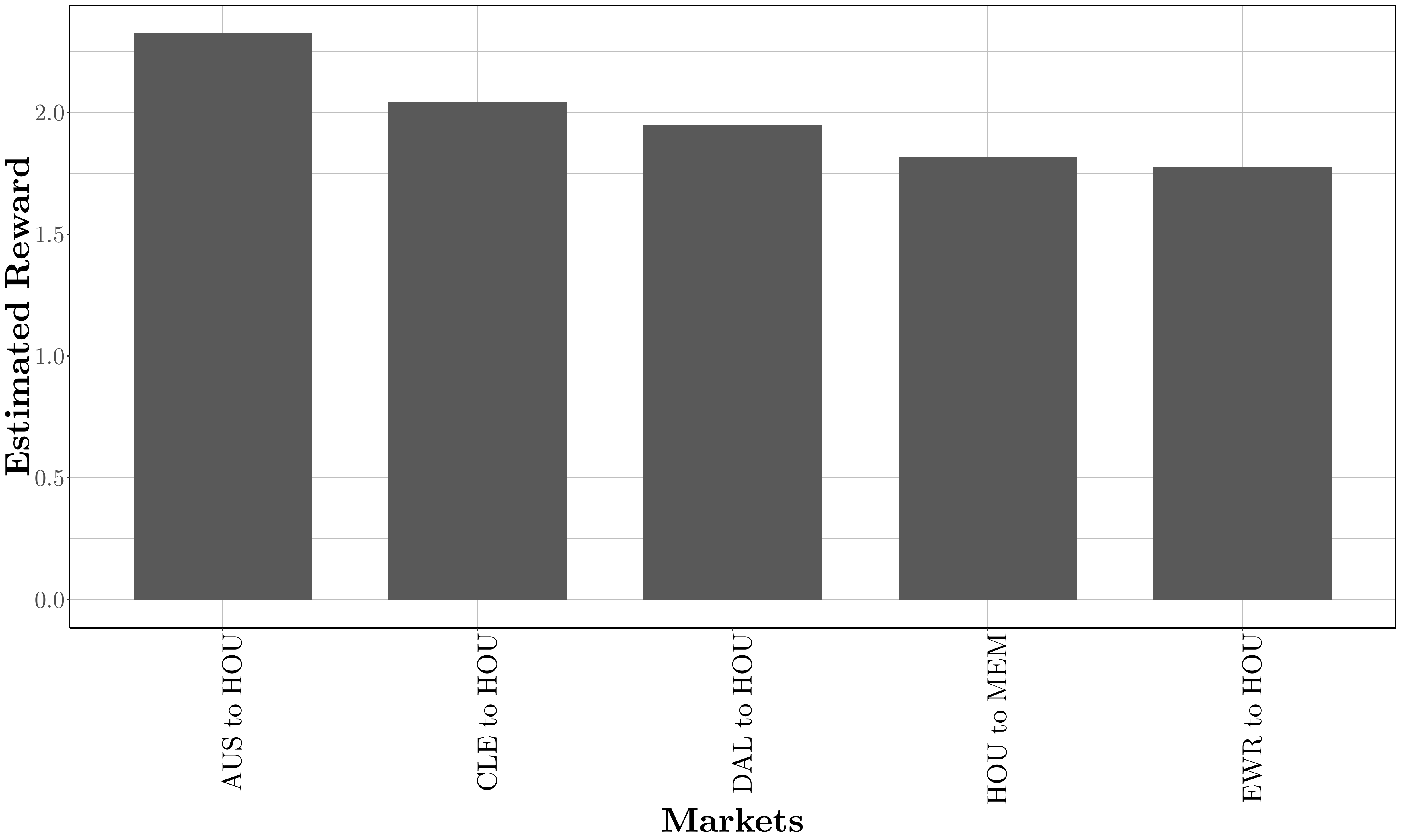}
		\caption{Continental Airline} 
	\end{subfigure} 
	\centering
\begin{subfigure}{0.48\textwidth}
	\centering 
		\includegraphics[scale=0.14]{./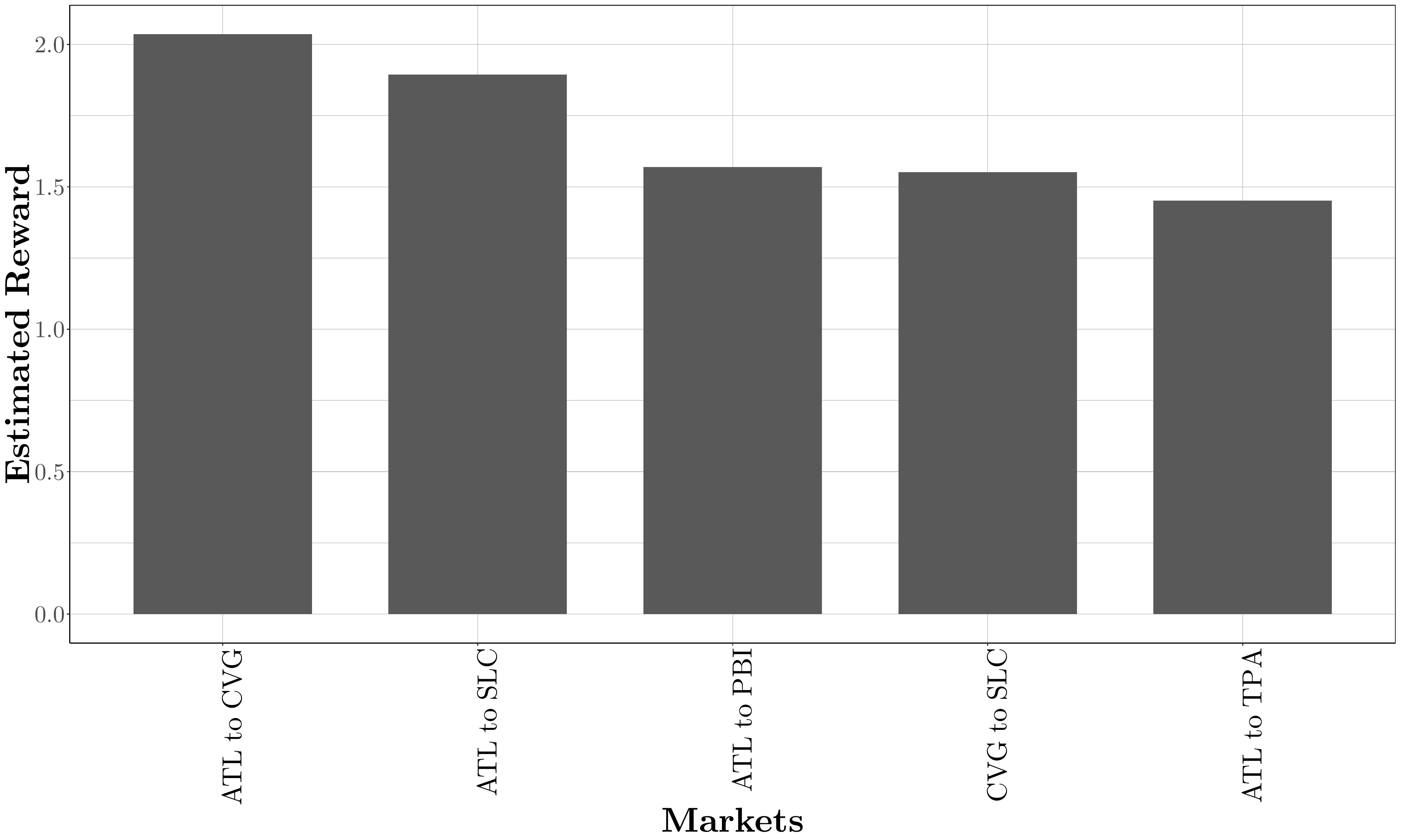} 
		\caption{Delta}
	\end{subfigure}
	\begin{subfigure}{0.48\textwidth}
		\centering
		\includegraphics[scale=0.14]{./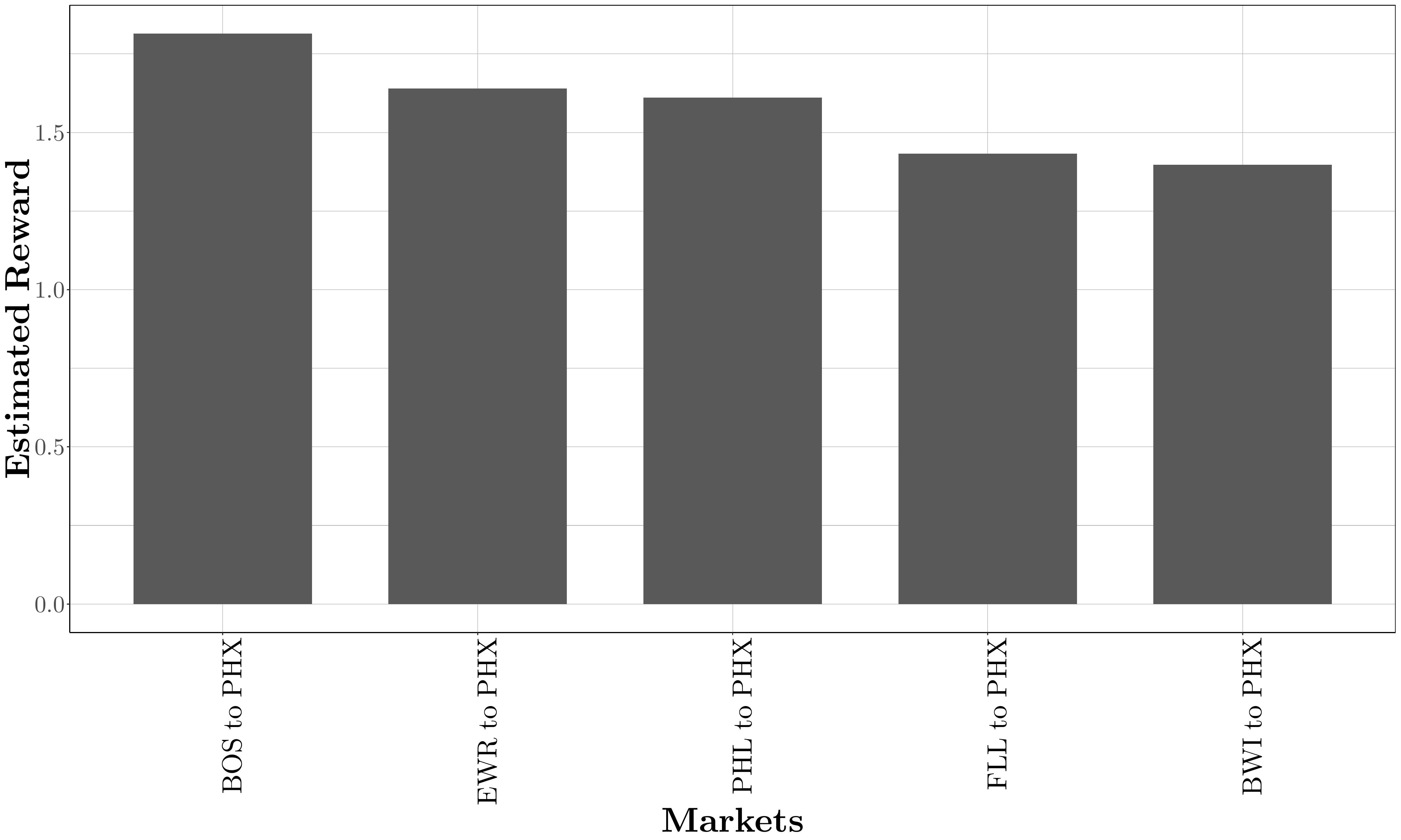}
		\caption{America West} 
	\end{subfigure} 
	\centering
\begin{subfigure}{0.48\textwidth}
	\centering 
		\includegraphics[scale=0.14]{./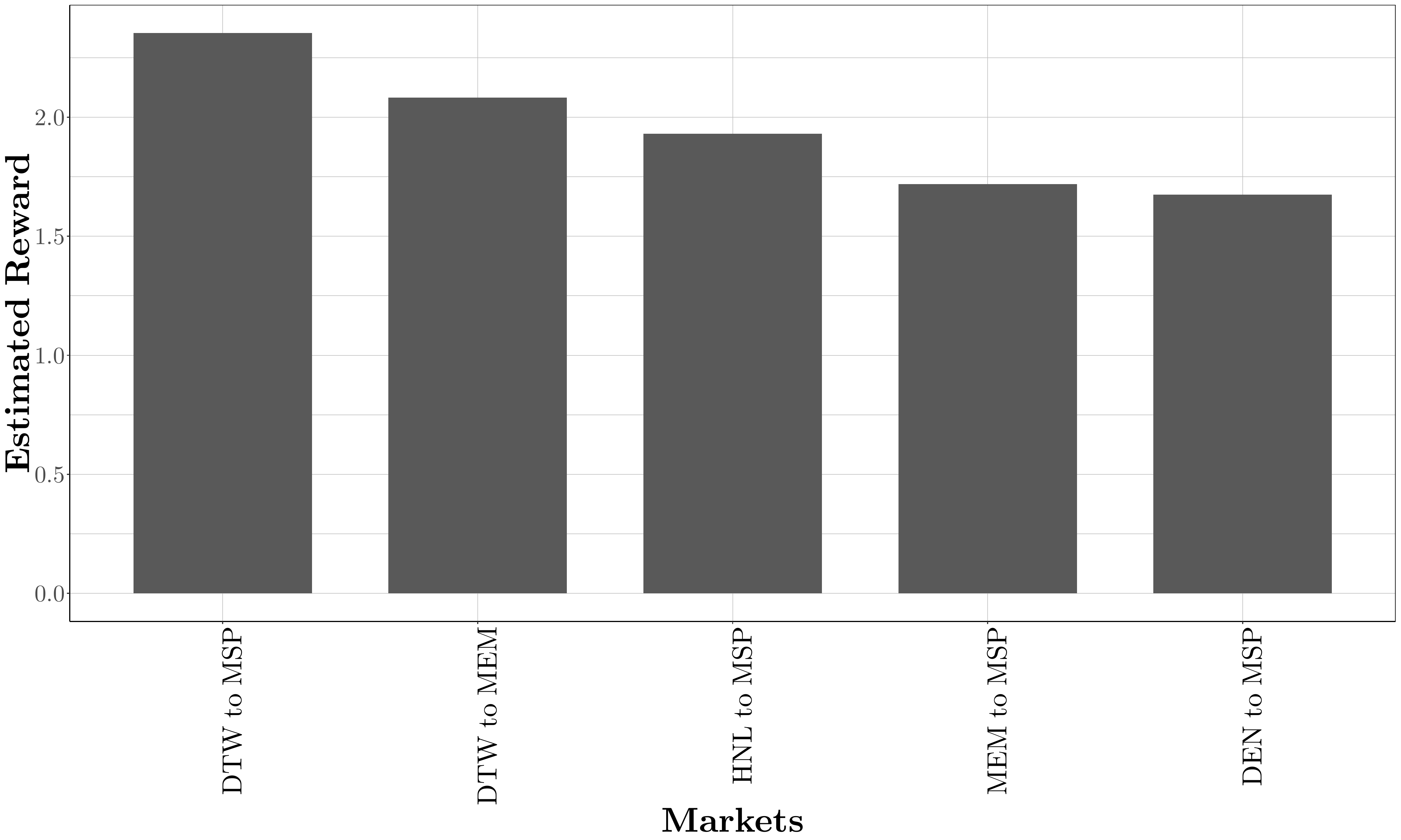} 
		\caption{Northwest}
	\end{subfigure}
	\begin{subfigure}{0.48\textwidth}
		\centering
		\includegraphics[scale=0.14]{./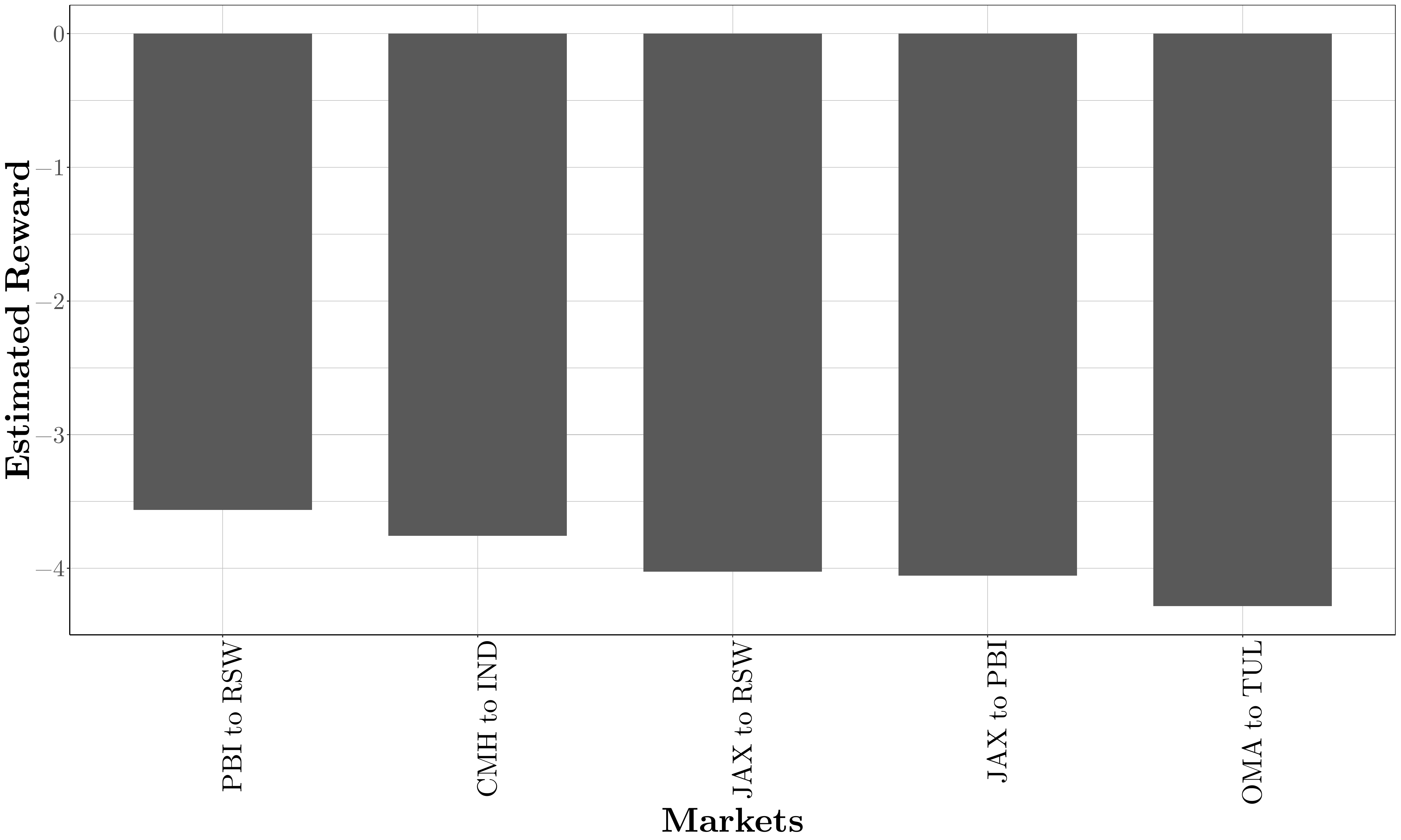}
		\caption{TWA} 
	\end{subfigure} 
	\centering
\begin{subfigure}{0.48\textwidth}
	\centering 
		\includegraphics[scale=0.14]{./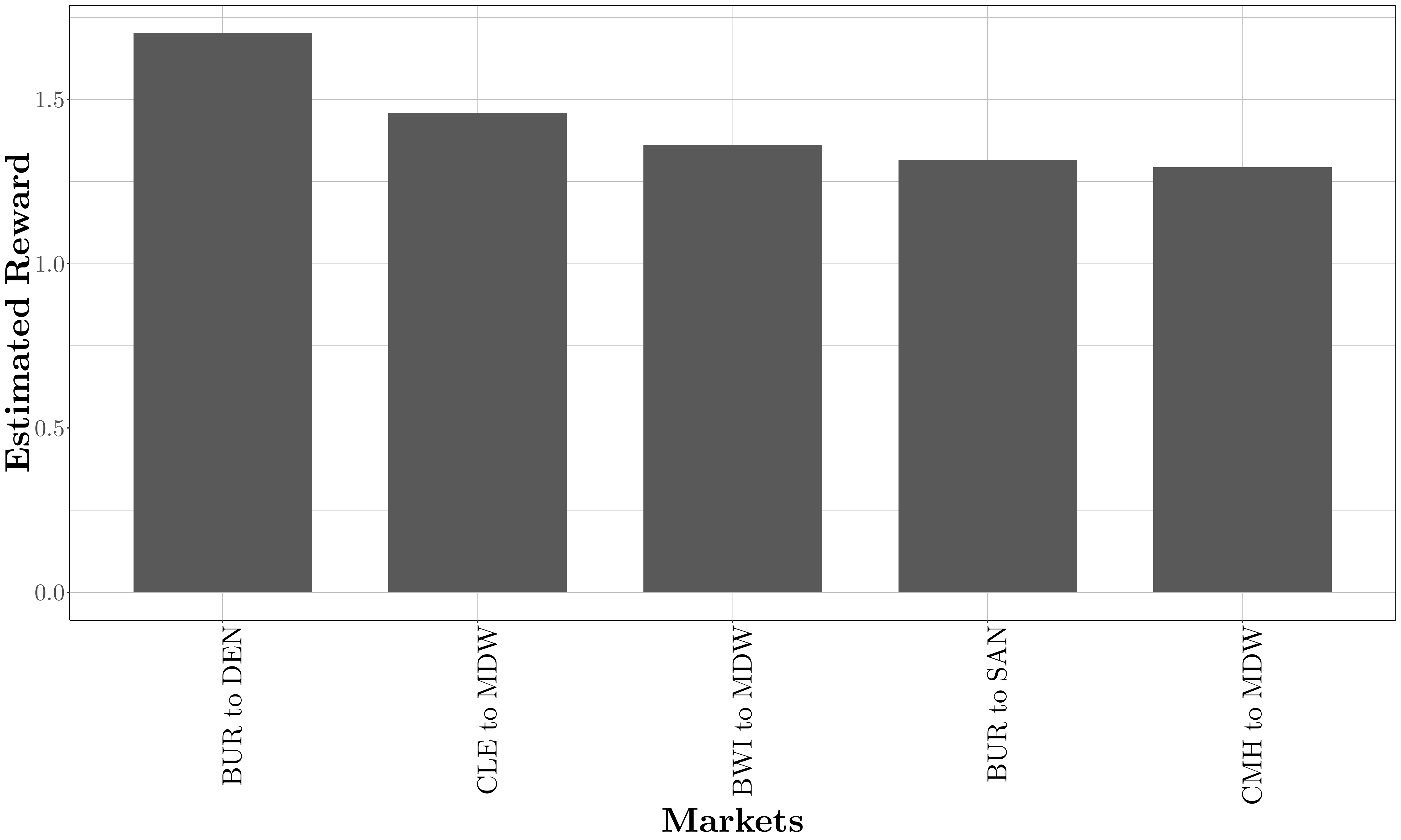} 
		\caption{United Airlines}
	\end{subfigure} 
	\centering
	\begin{subfigure}{0.48\textwidth}
	\centering 
		\includegraphics[scale=0.14]{./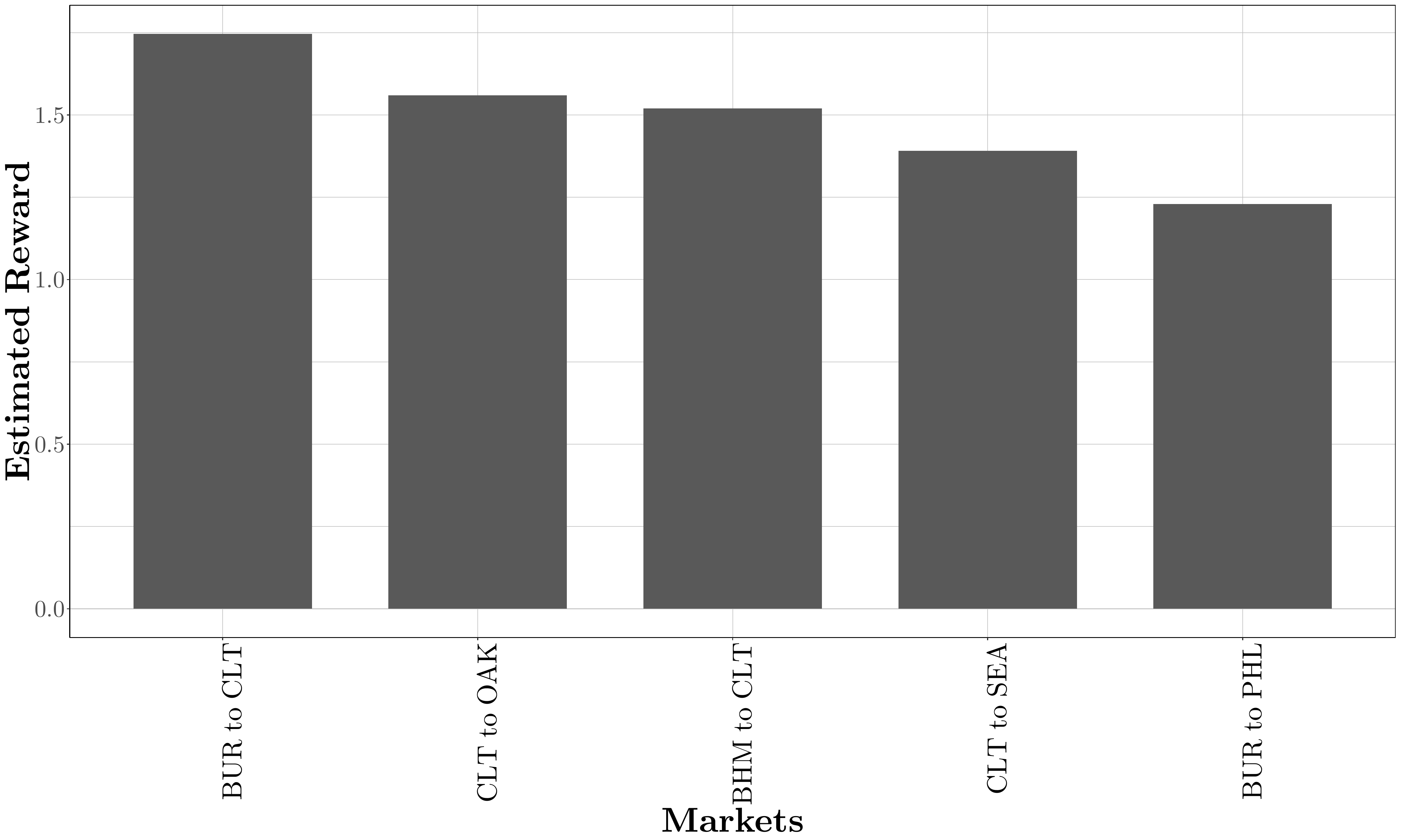} 
		\caption{US Airways}
	\end{subfigure} 
	\begin{subfigure}{0.48\textwidth}
	\centering 
		\includegraphics[scale=0.14]{./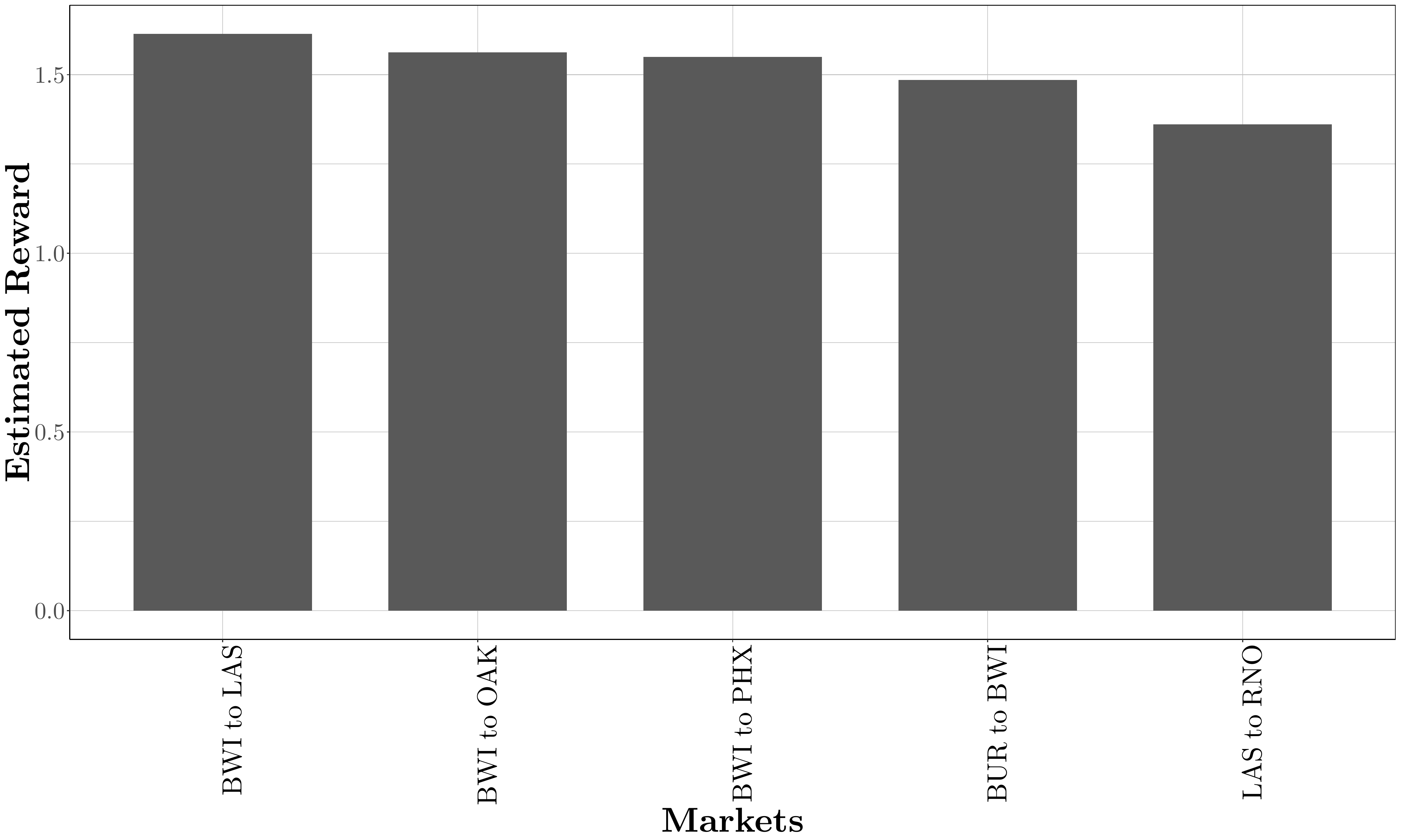} 
		\caption{Southwest}
	\end{subfigure} 
	\caption{Estimated reward functions for airline companies for top 5 markets. }
	
	\label{fig:airline-re-all}
\end{figure}

\end{document}